\providecommand{\keywords}[1]
{
\small	
\textbf{\textit{Keywords: }} #1
}
\DeclareMathAlphabet{\mathpzc}{OT1}{pzc}{m}{it}
\newcommand{\stkout}[1]{\ifmmode\text{\sout{\ensuremath{#1}}}\else\sout{#1}\fi}
\newcolumntype{C}[1]{>{\Centering}m{#1}}
\DeclareMathAlphabet{\mathpzc}{OT1}{pzc}{m}{it}
\newtheorem{theorem}{Theorem}[section]
\newtheorem{lemma}[theorem]{Lemma}
\newtheorem{corollary}[theorem]{Corollary}
\newtheorem{definition}{Definition}[section]
\newcommand{\mbf}[1]{\boldsymbol{#1}}
\newcommand{\inprod}[1]{\langle #1 \rangle}
\newcommand{\dbinprod}[1]{\langle\hspace{-0.5mm}\langle{#1}\rangle\hspace{-0.5mm}\rangle}
\newcommand{\abs}[1]{\big| #1 \big|}
\newcommand{\norm}[1]{\left\| #1 \right\|}
\newcommand{\rhoTL}{\rho_T^L}
\newcommand{\rhoTLM}{\rho_T^{L,M}}
\newcommand{\R}{\mathbb{R}}
\newcommand{\bo}{\mbf{o}}
\newcommand{\bu}{\mbf{u}}
\newcommand{\bv}{\mbf{v}}
\newcommand{\bw}{\mbf{w}}
\newcommand{\bx}{\mbf{x}}
\newcommand{\bX}{\mbf{X}}
\newcommand{\by}{\mbf{y}}
\newcommand{\bz}{\mbf{z}}
\newcommand{\mB}{\mathcal{B}}
\newcommand{\mD}{\mathcal{D}}
\newcommand{\mE}{\mathcal{E}}
\newcommand{\mH}{\mathcal{H}}
\newcommand{\mI}{\mathcal{I}}
\newcommand{\mK}{\mathcal{K}}
\newcommand{\mL}{\mathcal{L}}
\newcommand{\mM}{\mathcal{M}}
\newcommand{\mN}{\mathcal{N}}
\newcommand{\mU}{\mathcal{U}}
\newcommand{\mX}{\mathcal{X}}
\newcommand{\IR}{R_{\mM}}
\newcommand{\idxcl}{k}
\newcommand{\sdim}{d} 
\newcommand{\intkernel}{\phi}
\newcommand{\lintkernel}{\widehat{\intkernel}}
\newcommand{\intkernelvar}{\varphi}
\newcommand{\basis}{\psi}
\newcommand{\hypspace}{\mathcal{H}}
\newcommand{\E}{\mathbb{E}}
\newcommand{\probIC}{\mu_0}
\newcommand{\muX}{\probIC(\mM^N)}
\newcommand{\argmin}[1]{\underset{#1}{\operatorname{arg}\operatorname{min}}\;}
\newcommand{\mand}{\quad \text{and} \quad}
\newtheorem{assumption}{Assumption}
\newtheorem{prop}{Proposition}
\newcommand{\bXtl}{\bX_{t_l}}
\newcommand{\dotbXtl}{\dot{\bX}_{t_l}}
\newcommand{\bXmtl}{\bX^{m}_{t_l}}
\newcommand{\dotbXmtl}{\dot{\bX}^{m}_{t_l}}
\title{Learning Interaction Kernels for Agent Systems on Riemannian Manifolds}
\author[a,b]{Mauro Maggioni}
\author[a]{Jason Miller}
\author[a]{Hongda Qiu}
\author[a]{Ming Zhong\footnote{mzhong5@jhu.edu}}
\affil[a]{Department of Applied Mathematics $\&$ Statistics}
\affil[b]{Department of Mathematics}
\affil[ ]{Johns Hopkins University, Baltimore, MD $21218$, USA}
\begin{document}
\maketitle
\begin{abstract}
Interacting agent and particle systems are extensively used to model complex phenomena in science and engineering.  
We consider the problem of learning interaction kernels in these dynamical systems constrained to evolve on Riemannian manifolds from given trajectory data.  
The models we consider are based on interaction kernels depending on pairwise Riemannian distances between agents, with agents interacting locally along the direction of the shortest geodesic connecting them.
We show that our estimators converge at a rate that is independent of the dimension of the state space, and derive bounds on the trajectory estimation error, on the manifold, between the observed and estimated dynamics.
We demonstrate the performance of our estimator on two classical first order interacting systems: Opinion Dynamics and a Predator-Swarm system, with each system constrained on two prototypical manifolds, the $2$-dimensional sphere and the Poincar\'e disk model of hyperbolic space.
\end{abstract}
\newcommand{\theKeywords}{
\small{\begin{tabular}{c c c c c}
Interacting Agent Systems & $|$ & Collective Dynamics & $|$ & Riemannian Geometry \\
Nonparametric Inference  & $|$ & Machine Learning & $|$ & Inverse Problems
\end{tabular}}}
\keywords{\theKeywords} 
\section{Introduction}\label{sec:intro}
Dynamical systems of interacting agents, where ``agents'' may represent atoms, particles, neurons, cells, animals, peoples, robots, planets, etc..., are a fundamental modeling tool in many disciplines including Physics, Biology, Chemistry, Economics and Social Sciences.   It is a fundamental challenge to learn the governing equations of these systems.   Often, agents are either associated with state variables which belong to non-Euclidean spaces, e.g., phase variables considered in various Kuramoto models \cite{kuramoto1975, Strogatz2000}, or constrained to move on non-Euclidean spaces, for example  \cite{ahn2020emergent}.  This has motivated a growing body of research considering interacting agent systems on various manifolds \cite{LLM2018, CLP2014, Sarlette}, including opinion dynamics \cite{OpinionDynamicsAylin2017}, flocking models \cite{ahn2020emergent} and a classical aggregation model \cite{AggregationModelC_Fetecau_2019}.   Further recent approaches for interacting agents on manifolds include \cite{yang2020inference,soize2020probabilistic}.  

In this work,  we offer a nonparametric and inverse-problem-based learning approach to infer the governing structure of interacting agent dynamics, in the form of $\dot\bX = \mbf{f}(\bX)$, constrained on Riemannian manifolds.  Our method is different from others introduced to learn ODEs and PDEs from observations, that aim at inferring $\mbf{f}$, and would be cursed by the high-dimension of the state space of $\bX$. Instead, we exploit the form of the function $\mbf{f}$, special to interacting agent systems, which is determined by an underlying interaction kernel function $\phi$ of one variable only, and learn $\intkernel$, with minimal assumptions on $\intkernel$.  By exploiting invariance of the equations under permutation of the agents as well as the radial symmetry of $\intkernel$, we are able to overcome the curse of dimensionality, while most other approaches (Bayesian, sparse, neural networks) are cursed by the dimension of the state space.  We also demonstrate how our approach can perform transfer learning in section \ref{sec:num_results}.

Let $(\mM, g)$ be a connected, smooth, and geodesically-complete $\sdim$-dimensional Riemannian manifold, with the Riemannian distance denoted by $d_{\mM}$.   Consider $N$ interacting agents, each represented by a state vector $\bx_i(t) \in \mM$.   Their dynamics is governed by the following first order dynamical system, where $\intkernel$, the \textit{interaction kernel}, is the object of our inference: for each $i=1,\ldots,N$,
\begin{equation}\label{eq:newmodel}
\scalebox{0.95}{$
\dot\bx_i(t) = \frac{1}{N}\sum\limits_{i' = 1}^{N}\intkernel(d_{\mM}(\bx_i(t), \bx_{i'}(t)))\bw(\bx_i(t),\bx_{i'}(t))\,.
$}
\end{equation}
Here $\bw(\bz_1, \bz_2)$, for $\bz_1, \bz_2 \in \mM$, is a weight vector pointing in the tangent direction at $\bz_1$ to the shortest geodesic from $\bz_1$ to $\bz_2$. For this to make sense, we restrict our attention to local interactions, e.g. by assuming that $\intkernel$ is compactly supported in a sufficiently small interval $[0,R]$, so that length-minimizing geodesics exist uniquely.  We discuss the well-posedness of this model in greater detail in section \ref{sec:mainmodel}, where we emphasize that this model is derived naturally as a gradient system with a special potential energy depending on pairwise Riemannian distances.  

Our observations consist of states along multiple trajectories, namely $\{\bx_i^{m}(t_l)\}_{i, l, m = 1}^{N, L, M}$ with $L$ being the number of observations made in time and $M$ being the number of trajectories.   We construct an estimator $\lintkernel_{L, M, \mH}$ of $\intkernel$ that is both close to $\intkernel$ in an appropriate $L^2$ sense, and generates a system in the form of \eqref{eq:newmodel} with accurate trajectories when compared to the observed trajectories (generated by $\intkernel$) with the same initial condition.
The estimator, $\lintkernel_{L, M, \mH}$, is defined as the solution to the minimization problem
\[
\lintkernel_{L, M, \mH} = \argmin{\intkernelvar \in \hypspace}\mE_{L, M, \mM}(\intkernelvar)
\]
Here $\hypspace$ is a special function space containing suitable approximations to $\intkernel$ and $\mE_{L, M, \mM}$ is a least squares loss functional built from the trajectory data,  which also takes into account the underlying geometry of $(\mM, g)$. Having established a geometry-based coercivity condition that ensures, among other things, the recoverability of $\intkernel$ by a suitable sequence of $\lintkernel_{L, M, \mH}$'s, our theory shows the convergence rate (in $M$) of our estimator to the true interaction kernel is independent of the dimension of the observation data, i.e. $N\sdim$, and is the same as the minimax rate for $1$-dimensional nonparametric regression:
\[
 \E_{\bX_0 \sim \muX}\Big[\norm{\lintkernel_{L, M, \mH}(\cdot)\cdot - \intkernel(\cdot)\cdot}_{L^2(\rho_{T, \mM}^L)}\Big]  \leq C_1(\mM)  \bigg(\frac{\log M}{M}\bigg)^{\frac{s}{2s + 1}}.  
\]
Here $\bX_0 \in \mM^N$ is an initial system state, $\muX$ is a distribution of initial system states on $\mM^N$, $\rho_{T, \mM}^L$ is a dynamics-adapted probability measure which captures the distribution of pairwise Riemannian distances, and $C_1(\mM)$ is a constant depending the geometry of $\mM$ (see sec. \ref{sec:learningrate}). 

We also establish bounds on the error between the trajectories evolved using our estimators and the true trajectories. Let $\hat\bX_{[0, T]}, \bX_{[0, T]}$ be trajectories evolved with the interaction kernels $\lintkernel_{L, M, \mH}$ and $\intkernel$ respectively, started at the same initial condition, then:
\[
  \E_{\bX_0 \sim \muX}\Big[d_{\text{trj}}(\bX_{[0, T]},\hat\bX_{[0, T]})^2\Big]  \leq C_2(\mM)\norm{\intkernel(\cdot)\cdot - \hat\intkernel_{L, M, \mH}(\cdot)\cdot}_{L^2(\rho_{T, \mM})}^2,
\]
where $d_{\text{trj}}$ is a natural geometry-based distance on trajectories and $C_2(\mM)$ is a constant depending on the manifold's geometry.   As $M$ grows, the norm on the right hand side converges at the  rate above, yielding convergence of the trajectories; full details are given in section \ref{sec:trajanalysis}.

The numerical details of the algorithms for learning the estimator and computing trajectories on manifolds are presented in the Appendix.  The essential differences,  compared to the algorithms presented for Euclidean spaces, are the use of a geometric numerical integrator for computing the evolution of the manifold-constrained dynamics, and that at every time step we need to compute Riemannian inner products of tangent vectors, geodesics and Riemannian distances.  
We demonstrate the performances of our estimators on an opinion dynamics and a predator-swarm model, each constrained on two model spaces: the two dimensional sphere $\mathbb{S}^2$ and the Poincar\'e disk.
\subsection{Connections and Related Work}  
The research on inferring a suitable dynamical system of interacting agents from observation data has been a longstanding problem in science and engineering; see \cite{LLEK2010, KTIHC2011, CMW2014, TW2017} and references therein.  Many recent approaches in machine learning have been developed for inferring general dynamical systems, including multistep methods \cite{keller2019discovery}, optimization \cite{wrobel2013evolutionary}, sparse regression \cite{Brunton3932,RBPK2017,Schaeffer6634}, Bayesian regression \cite{zhang2018robust}, and deep learning \cite{raissi2018multistep,rudy2019deep}. 
In a different direction,  the generalization of traditional machine learning algorithms in Euclidean settings to Riemannian manifolds, and the development of new algorithms designed to work on Riemannian manifolds, has been attracting increasing attention; for example in variational calculus \cite{soize2020probabilistic}, reinforcement learning \cite{DBLP:journals/corr/abs-1803-08501}, deep learning \cite{chen2020doubly} and theoretical CS \cite{montealto2020multiagent}. 
\section{Model Equations}
In this section we introduce the governing equations which we use to model interacting agents constrained on Riemannian manifolds, and discuss the properties of the dynamics.
Table \ref{tab:common_def} shows a list of definitions of the common terms used throughout this paper.
\begin{table}[H]
\vskip-0.25cm
\centering
\small{
\small{\begin{tabular}{ c | l }
\hline
Variable                         & Definition \\
\hline\hline
$(\mM, g)$                       & Riemannian Manifold with metric $g$ \\
\hline
$T_{\bx}\mM$                     & Tangent plane to $\mM$ at $\bx$\\
\hline
$\inprod{\cdot, \cdot}_{g(\bx)}$, $\inprod{\cdot, \cdot}_{g}$ & Inner product on $T_{\bx}\mM$ \\
\hline
$\norm{\bv}_{T_{\bx}\mM}$, $\norm{\bv}_{g}$         & Length of $\bv \in T_{\bx}\mM$ induced by $g(\bx)$\\
\hline
$d_{\mM}(\cdot, \cdot)$          & Riemannian distance induced by $g$\\
\hline
$C^1(\mX)$                       & Set of cont. diff. functions on $\mX \subset \R^d$ \\
\hline
\end{tabular}}  
}
\caption{Notation for first-order models, also see the Appendix.}
\label{tab:common_def} 
\vskip-0.5cm
\end{table}
\subsection{Main model}\label{sec:mainmodel}
In order to motivate the choice of the model equations we use, we begin with a geometric gradient flow model of an interacting agent system.  Consider a system of $N$ interacting agents, with each agent described by a state vector $\bx_i(t)$ on a $\sdim$-dimensional connected, smooth, and geodesically complete Riemannian manifold $\mM$ with metric $g$.  The change of the state vectors seeks to decrease a system energy $E$:
\[
\frac{d\bx_i(t)}{dt} = -\partial_{\bx_i} E(\bx_1(t), \ldots, \bx_N(t)), \qquad i = 1, \ldots, N.
\]
Our first key assumption is that $E$ takes the special form
\[
E(\bx_1(t), \ldots, \bx_N(t)) = \frac{1}{N}\sum_{i' = 1}^N U(d_{\mM}(\bx_i(t), \bx_{i'}(t))^2), 
\]
for some $U:\R^+ \rightarrow \R$ with $U(0) = 0$, and $d_{\mM}(\cdot, \cdot)$ the geodesic distance on $(\mM,g)$.  
Simplifying, and omitting from the notation the dependency on $t$ of $\dot\bx_i$ and $\bx_i$, we obtain the first-order geometric evolution equation, 
\begin{equation}\label{eq:mainmodel}
\dot\bx_i = \frac{1}{N}\sum_{i' = 1}^{N}\intkernel(d_{\mM}(\bx_i, \bx_{i'}))\bw(\bx_i,\bx_{i'}),
\end{equation}
for $i = 1, \ldots, N$.  We call $\intkernel(r) \coloneqq 2U'(r^2)$ the \textit{interaction kernel}.  We have let $\bw(\bz_1, \bz_2) \coloneqq d_{\mM}(\bz_1, \bz_2)\bv(\bz_1, \bz_2)$ for $\bz_1, \bz_2 \in \mM$, with $\bv(\bz_1,\bz_2)$ being, for $\bz_2 \neq \bz_1$, the unit vector (i.e.  $\smash{\norm{\bv}_{T_{\bz_1}\mM} = 1}$) tangent at $\bz_1$ to the minimizing geodesic from $\bz_1$ to $\bz_2$ if $\bz_2$ not in the cut locus of $\bz_1$, and equal to $\mbf{0}$ otherwise.  In order to guarantee existence and uniqueness of a solution for \eqref{eq:mainmodel} over the time interval $[0, T]$, we make a further assumption that $\intkernel$ belongs to the admissible space
\begin{equation*}
\mK_{R, S} \coloneqq \{ \intkernelvar \in C^1([0,R]) \Big| \norm{\intkernelvar}_{L^{\infty}} + \norm{\intkernelvar'}_{L^{\infty}} \leq S \},
\end{equation*}
for some constant $S > 0$.  Here, $R$ is smaller than the global injectivity radius of $\mM$, and $L^\infty= L^{\infty}([0, R])$. 
With this assumption, the possible discountinuity of $\bv(\bz_1,\bz_2)$ due to either $\bz_2\rightarrow \bz_1$ or $\bz_2$ tends to a point in the cut locus of $\bz_1$ is canceled by the multiplication by $d_{\mM}(\bz_1,\bz_2)\rightarrow0$ in the former case, and $\phi(d_{\mM}(\bz_1,\bz_2))\rightarrow0$ in the latter case.  Therefore, the ODE system in \eqref{eq:mainmodel} has a Lipschitz right hand side,  thus it has a unique solution existing for $t \in [0, T]$ \cite{HLW2006}. 

With this geometric gradient flow point of view,  the from of the equations and the radial symmetry of the interaction kernels are naturally pre-determined by the energy potential.  This approach seems to us natural and geometric; for different approaches see \cite{OpinionDynamicsAylin2017, CLP2014}.  Note that in the case of $\mM=\mathbb{R}^{d}$ with the Euclidean metric, we have $d_{\mM}(\bx_i, \bx_{i'}) = \norm{\bx_{i'} - \bx_i}$ and $\bv(\bx_i,\bx_{i'}) = \frac{\bx_{i'} - \bx_i}{\norm{\bx_{i'} - \bx_i}}$,  and we recover the Euclidean space models used in \cite{bongini2016inferring, Lu2019a} and the many works referenced therein.
\section{Learning Framework}
We are given a set of trajectory data of the form $\{\bx_i^{m}(t_l), \dot\bx_i^{m}(t_l)\}_{i, l, m = 1}^{N, L, M}$, for $0 = t_1 < \ldots < t_L = T$, with the initial conditions $\{\bx_i^{m}(0)\}_{i = 1}^N$  being i.i.d from a distribution $\probIC(\mM)$.  The objective is to construct an estimator $\lintkernel_{L, M, \mH}$ of the interaction kernel $\intkernel$.   

Before we describe the construction of our estimator, we introduce some notation.

We let, in $\mM^{N} \coloneqq \mM\times\dots\times\mM$,
\[
\bXmtl \coloneqq \begin{bmatrix} \vdots \\ \bx_i^{m}(t_l) \\ \vdots \end{bmatrix} \quad \text{and} \quad \bX \coloneqq \begin{bmatrix} \vdots \\ \bx_i \\ \vdots \end{bmatrix},
\]
where $(\mM^N, g_{\mM}^{N})$ is the canonical product of Riemannian manifolds with product Riemannian metric given by,
\[
\Bigg\langle\begin{bmatrix} \vdots \\ \bu_i \\ \vdots \end{bmatrix}, \begin{bmatrix} \vdots \\ \bz_i \\ \vdots \end{bmatrix}\Bigg\rangle_{g_{\mM}^{N}(\bX)} \coloneqq \frac{1}{N}\sum_{i = 1}^N\inprod{\bu_i, \bz_i}_{g(\bx_i)},
\]
for $\bu_i, \bz_i \in T_{\bx_i}\mM$.  The initial conditions, $\bX^{m}_0$ are drawn i.i.d. from $\muX$, where $\muX = \probIC(\mM) \times \cdots \times \probIC(\mM)=\probIC(\mM)^N$.   Note that all expectations will be with respect to $\bX_0 \sim \muX$. 
Finally, $\mbf{f}^{\text{c}}_{\intkernel}$ is the vector field on $\mM^N$ (i.e. $\mbf{f}^{\text{c}}_{\intkernel}(\bX) \in T_{\bX}\mM^N$ for $\bX\in\mM^N$), given by
\[
\scalebox{0.825}{$
\mbf{f}^{\text{c}}_{\intkernel}(\bXmtl) \coloneqq \begin{bmatrix} \vdots \\ \frac{1}{N}\sum_{i' = 1}^{N}\intkernel(d_{\mM}(\bx_i^{m}(t_l), \bx_{i'}^{m}(t_l)))\bw(\bx_i^{m}(t_l),\bx_{i'}^{m}(t_l))\\ \vdots \end{bmatrix},
$}
\]
The system of equations \eqref{eq:mainmodel} can then be rewritten, for each $m = 1, \ldots, M$, as $$\dot\bX^{m}_t = \mbf{f}^{\text{c}}_{\intkernel}(\bX^{m}_t)\,.$$
\subsection{Geometric Loss Functionals}
In order to simplify the presentation, we assume that the observation times, i.e. $\{t_l\}_{l = 1}^L$, are equispaced in $[0, T]$ (the general case is similar).  We begin with the definition of the hypothesis space $\mH$, over which we shall minimize an error functional to obtain an estimator of $\intkernel$.
\begin{definition}\label{assump:HS}
An {{admissible hypothesis space}} $\mH$ is a compact (in $L^{\infty}$-norm) and convex subset of $L^2([0,R])$, such that every $\intkernelvar \in \mH$ is bounded above by some constant $S_0 \geq S$, i.e. $\norm{\intkernelvar}_{L^\infty([0,R])}\le S_0$; moreover $\intkernelvar$ is smooth enough to ensure the existence and uniqueness of solutions of \eqref{eq:mainmodel} for $t \in [0, T]$, i.e. $\intkernelvar \in \mH \cap \mK_{R, S_0}$.
\end{definition}
\noindent
For a function $\varphi \in \mH$, we define the loss functional
\begin{equation}\label{error functional}
  \mE_{L, M, \mM}(\intkernelvar) \coloneqq \frac{1}{ML}\sum_{l, m = 1}^{L, M}\norm{\dotbXmtl - \mbf{f}^{\text{c}}_{\intkernelvar}(\bXmtl)}_{g}^2\,,
\end{equation}
where the norm $\norm{\cdot}_{g}$ in $T_{\bXmtl}\mM^N$ can be written as 
\[
\norm{\dotbXmtl - \mbf{f}^{\text{c}}_{\intkernelvar}(\bXmtl)}_{g}^2 = \frac{1}{N}\sum_{i = 1}^N\norm{\dot\bx_{i, t_l}^{m} - \frac{1}{N}\sum_{i' = 1}^N\intkernelvar(r_{ii', t_l}^{m})\bw_{ii', t_l}^{m}}_{T_{\bx_i^{m}(t_l)}\mM}^2,
\]
with $\dot\bx_{i, t_l}^{m} \coloneqq \dot\bx_i^{m}(t_l)$, $r_{ii', t_l}^{m} \coloneqq d_{\mM}(\bx_i^{m}(t_l), \bx_{i'}^{m}(t_l))$, and $\bw_{ii', t_l}^{m} \coloneqq \bw(\bx_i^{m}(t_l), \bx_{i'}^{m}(t_l))$.  This loss functional is nonnegative, and reaches $0$ when $\varphi$ is equal to the (true) interaction kernel $\intkernel$ if $\intkernel$ is also in $\mH$ (i.e. $ \intkernel \in \mH \cap \mK_{R, S}$).   Given that $\mH$ is compact and convex, $\mE_{L, M, \mM}$ is continuous on $\mH$, the minimizer of $\mE_{L, M, \mM}$ exists and is unique.  We define it to be out estimator: 
$$\lintkernel_{L, M, \mH} \coloneqq \argmin{\intkernelvar \in \mH}\mE_{L, M, \mM}(\intkernelvar)\,.$$  
As $M \rightarrow \infty$, by the law of large numbers, we have $\mE_{L, M, \mM} \rightarrow \mE_{L, \infty, \mM}$, with 
\begin{equation}
 \mE_{L, \infty, \mM}(\intkernelvar) \coloneqq \frac{1}{L}\sum_{l = 1}^{L}\E\Big[\norm{\dotbXtl - \mbf{f}^{\text{c}}_{\intkernelvar}(\bXtl)}_{g}^2\Big]. \label{error functional expected}
\end{equation}
Since $\mE_{L, \infty, \mM}$ is continuous on $\mH$, the minimization of $\mE_{L, \infty, \mM}$ over $\mH$ is well-posed and it has a unique minimizer $\lintkernel_{L, \infty, \mH} \coloneqq \argmin{\intkernelvar \in \mH}\mE_{L, \infty, \mM}(\intkernelvar)$.  Much of our theoretical work establishes the relationship between the estimator $\lintkernel_{L, M, \mH}$, the closely related (in the infinite sample limit $M\rightarrow\infty$) $\lintkernel_{L, \infty, \mH}$, and the true interaction kernel $\intkernel$.
\subsection{Performance Measures}

We introduce a suitable normed function space in which to compare the estimator $\lintkernel_{L, M, \mH}$ with the true interaction kernel $\intkernel$. We also measure performance in terms of trajectory estimation error based on a distance between trajectories generated from the true dynamics (evolved using $\intkernel$ with some initial condition $\bX_0 \sim \muX$) and the estimated dynamics (evolved using the estimated interaction kernel $\lintkernel_{L, M, \mH}$, and with the same initial condition, i.e. $\bX_0$).

\subsubsection{Estimation Error}
First we introduce a probability measure $\rho_{T, \mM}$ on $\mathbb{R}_+$, that is used to define a norm to measure the error of the estimator, derived from the loss functionals (given by \eqref{error functional} and \eqref{error functional expected}), that reflects the distribution of pairwise data given by the dynamics as well as the geometry of the manifold $\mM$:
\begin{equation*}
\rho_{T, \mM}(r) \coloneqq \frac{1}{\binom{N}{2}}\E\Big[ \frac1T\int_{0}^T\sum_{i,i'} \delta_{d_{\mM}(\bx_i(t), \bx_{i'}(t))}(r)\, dt\Big]\,,
\end{equation*}
where $\delta$ is the Dirac delta function.  In words, this measure is obtained by averaging $\delta$-functions having mass at any pairwise distances in any trajectory, over all initial conditions drawn from $\muX$,  over all pairs of agents and all times.  A time-discretized version is given by:
\begin{equation*}
\rho_{T, \mM}^L(r) \coloneqq \frac{1}{L\binom{N}{2}}\E\Big[\sum_{l=1}^L\sum_{1 \le i < i' \le N} \delta_{d_{\mM}(\bx_i(t_l), \bx_{i'}(t_l))}(r)\Big].
\end{equation*}
The two probability measures defined above appear naturally in the proofs for the convergence rate of the estimator.
From observational data we compute the empirical version:
\begin{equation*}
\rho_{T, \mM}^{L, M}(r) \coloneqq \frac{1}{ML\binom{N}{2}}\sum_{l, m = 1}^{L, M}\sum_{1 \le i < i' \le N}\delta_{d_{\mM}(\bx_i(t_l), \bx_{i'}(t_l))}(r).
\end{equation*}
The geometry of $\mM$ is incorporated in these three measures by the presence of geodesic distances.  
The norm
\begin{equation*}
\norm{\intkernelvar(\cdot)\cdot}_{L^2(\rho_{T,\mM})}^2 \coloneqq \int_{r = 0}^{\infty}\abs{\intkernelvar(r)r}^2\, d\rho_{T,\mathcal{M}}(r)\,
\end{equation*}
is used to define the estimation error: $||{\lintkernel_{L, M, \mH}(\cdot)\cdot - \intkernel(\cdot)\cdot}||_{L^2(\rho_{T, \mM})}$.  
We also use a relative version of this error, to enable a meaningful comparison across different interaction kernels:
\begin{equation}\label{eq:rel_L2rhoT_error}
\norm{\intkernelvar(\cdot)\cdot - \intkernel(\cdot)\cdot}_{\text{Rel.} L^2(\rho_{T,\mM})}\!\! \coloneqq \dfrac{\norm{\intkernelvar(\cdot)\cdot - \intkernel(\cdot)\cdot}_{L^2(\rho_{T,\mM})}}{\norm{\intkernel(\cdot)\cdot}_{L^2(\rho_{T,\mathcal{M}})}}.
\end{equation}
\subsubsection{Trajectory Estimation Error}
Let $\bX^{m}_{[0, T]} \coloneqq (\bX^{m}_t)_{t \in [0, T]}$ be the trajectory generated by the $m^{th}$ initial condition, $\bX^{m}_0$.  The trajectory estimation error between $\bX^{m}_{[0, T]}$ and $\hat\bX^{m}_{[0, T]}$, evolved using, the unknown interaction kernel $\intkernel$ and, respectively, the estimated one, $\lintkernel$, with the same initial condition, is given by
\begin{equation}\label{eq:traj_norm}
d_{\text{trj}}(\bX^{m}_{[0, T]},\hat\bX^{m}_{[0, T]})^2\! \coloneqq \!\!\!\sup\limits_{t\in[0,T]}\!\!\!{\frac{\sum_i d_{\mM}(\bx_i^{m}(t), \hat\bx_i^{m}(t))^2\!\!\!}{N}}.
\end{equation}
We are also interested in the performance over different initial conditions, hence we use $\text{mean}_{\text{IC}}$ and $\text{std}_{\text{IC}}$ to report the mean and std of these trajectory errors over a (large) number of initial conditions sampled i.i.d. from $\muX$.
\subsection{Algorithm}
Algorithm \ref{alg:learning} shows the detailed steps on how to construct the estimator to $\intkernel$ given the observation data.
\subsection{Computational Complexity}\label{sec:comp_complex}
Assuming a finite dimensional subspace of $\hypspace$, i.e. $\hypspace_M \subset \hypspace$ with $\dim(\hypspace_M) = n(M)$, we are able to re-write the minimization problem of \eqref{error functional} over $\hypspace_M$ as a linear system, i.e. $A_M\vec{\alpha} = \vec{b}_M$ with $A_M \in R^{n \times n}$ and $\vec{b}_M \in \R^{n \times 1}$; for details, see the Appendix.   This linear system is well conditioned, ensured by the geometric coercivity condition. 

The total computational cost for solving the learning problem is: $MLN^2 + MLdn^2 + n^3$ with $MLN^2$ for computing pairwise distances, $MLdn^2$ for assembling $A_M$ and $\vec{b}_M$, and $n^3$ for solving $A_M\vec{\alpha} = \vec{b}_M$.   When choosing the optimal $n = n_* \approx (\frac{M}{\log M})^{\frac{1}{2s + 1}} \approx M^{\frac{1}{3}}$ ($s = 1$ for $C^1$ functions) as per Thm. \ref{thm:optimal rate of convergence}, we have
$\text{comp. time} = MLN^2 + MLdM^{\frac{2}{3}} + M = \mathcal{O}(M^{\frac{5}{3}})$.
The computational bottleneck comes from the assembly of $A_M$ and $\vec{b}_M$.  However, since we can parallelize our learning approach in $m$, the updated computing time in the parallel regime is
$\text{comp. time} = \mathcal{O}\Big(\Big(\frac{M}{\text{num. cores}}\Big)^{\frac{5}{3}}\Big)$. 
The total storage for the algorithm is $MLNd$ floating-point numbers for the trajectory data,  albeit one does not need to hold all of the trajectory data in memory.  The algorithm can process the data from one trajectory at a time, requiring $LNd$.  Once the linear system, $A_M\vec{\alpha} = \vec{b}_M$, is assembled, the algorithm just needs to hold roughly $n^2$ floating-point numbers in memory.   When we use the optimal number of basis functions, i.e. $n_* = M^{\frac{1}{3}}$, the memory used is $O(M^{\frac{2}{3}})$.
\begin{algorithm}[H] 
   \caption{Learning Algorithm}
   \label{alg:learning}
\small{
\begin{algorithmic}
   \STATE {\bfseries Input:} data $\{\bx_i^{m}(t_l), \dot\bx_i^{m}(t_l)\}_{i, l, m = 1}^{N, L, M}$
   \STATE Compute \scalebox{0.8}{$R^{\text{obs}}_{\{\min,\max\}} = \{\min,\max\}_{i, i', l, m}d_{\mM}(\bx_i^{m}(t_l), \bx_{i'}^{m}(t_l))$}
   \STATE Choose a type of basis functions, e.g., clamped B-spline
   \STATE Construct basis of $\hypspace_M$, e.g. $\{\basis_{\eta}\}_{\eta = 1}^n$, on the uniform partition of $[R^{\text{obs}}_{\min}, R^{\text{obs}}_{\max}]$
   \STATE Choose either a local chart $\mU:\mM \rightarrow \R^{\sdim}$ or a natural embedding $\mI:\mM \rightarrow \R^{d'}$
   \STATE Construct $\Psi^{m} \in (T_{\bX^{m}_{t_1}}\mM^N \times \cdots \times T_{\bX^{m}_{t_L}}\mM^N)^n$ and $\vec{d}^{m} \in T_{\bX^{m}_{t_1}}\mM^N \times \cdots \times T_{\bX^{m}_{t_L}}\mM^N$:
   \begin{align*}
   \Psi^{m}(:, \eta) &\coloneqq \Psi^{m}_{\eta} = \frac{1}{\sqrt{N}}\begin{bmatrix} \mbf{f}^{\text{c}}_{\basis_{\eta}}(\bX^{m}_{t_1}) \\ \vdots \\ \mbf{f}^{\text{c}}_{\basis_{\eta}}(\bX^{m}_{t_L}) \end{bmatrix}\,,\,
   \vec{d}^{m} \coloneqq \frac{1}{\sqrt{N}}\begin{bmatrix} \dot\bX^{m}_{t_1} \\ \vdots \\ \dot\bX^{m}_{t_L} \end{bmatrix}
   \end{align*}
   \STATE Define $\inprod{\cdot, \cdot}_G$ on $\Psi^{m}_{\eta} \in T_{\bX^{m}_{t_1}}\mM^N \times \cdots \times T_{\bX^{m}_{t_L}}\mM^N$ as
   \[
   \scalebox{0.8}{$
     \inprod{\Psi^{m}_{\eta}, \Psi^{m}_{\eta'}}_{G} = \sum_{l = 1}^L\inprod{\mbf{f}^{\text{c}}_{\psi_{\eta}}(\bX^{m}_{t_l}), \mbf{f}^{\text{c}}_{\psi_{\eta'}}(\bX^{m}_{t_l})}_{g^{\mM^N}(\bX_l^{m})} 
   $}
   \]
   \STATE Assemble $A_M(\eta, \eta') = \frac{1}{LM}\sum_{m = 1}^M\inprod{\Psi^{m}_{\eta}, \Psi^{m}_{\eta'}}_G\in \R^{n \times n}$.
   \STATE Assemble $\vec{b}_M(\eta) = \frac{1}{LM}\sum_{m = 1}^M\inprod{\vec{d}, \Psi^{m}_{\eta}}_G\in\R^{n \times 1}$.
   \STATE Solve $A_M\vec{\alpha}=\vec{b}_M$ for $\vec{\hat{\alpha}}\in\R^{n}$.
   \STATE Assemble $\lintkernel = \sum_{\eta = 1}^n \hat{\alpha}_{\eta}\basis_{\eta}$.
\end{algorithmic}
}
\end{algorithm}

\section{Learning Theory} \label{sec:learningtheory}
We present in this section the major results establishing the convergence of the estimator $\lintkernel_{L, M, \mH}$ to $\intkernel$, at the optimal learning rate, and bounding the trajectory estimation error between the true and estimated dynamics  (evolved using $\lintkernel_{L, M, \mH}$), with their corresponding proofs in the Appendix.  

\subsection{Learnability: geometric coercivity condition}\label{sec:gcc}
We establish a geometry-adapted coercivity condition, extending that of \cite{bongini2016inferring,Lu2019a} to the Riemannian setting, which will guarantee the uniqueness of the minimizer of $\mE_{L, \infty, \mM}(\intkernelvar)$, and that $\mE_{L, \infty, \mM}(\intkernelvar)$ controls the $\norm{\cdot}_{L^2(\rho_{T,\mM})}$ distance between the minimizer and the true interaction kernel.
\begin{definition}[Geometric Coercivity condition]\label{CoercivityCondition}
The geometric evolution system in \eqref{eq:mainmodel} with initial condition sampled from $\muX$ on $\mM^N$ is said to satisfy the geometric coercivity condition on the admissible hypothesis space $\mH$ if there exists a constant $c \equiv c_{L, N, \mH, \mM} > 0$ such that for any $\intkernelvar \in \mH$ with $\intkernelvar(\cdot)\cdot \in L^2(\rho_{T, \mM}^L)$ we have
\begin{align}
  &c\norm{\intkernelvar(\cdot)\cdot}_{L^2(\rho_{T,\mM}^L)}^2 \leq \nonumber
  \frac{1}{L}\sum_{l = 1}^{L}\E\Big[\norm{\mbf{f}^{\text{c}}_{\intkernelvar}(\bXtl)}_{T_{\bXtl}\mM^N}^2\Big].\label{ccineq}
\end{align}
\end{definition}
\noindent
In order to simplify the argument on how this geometric coercivity condition controls the distance between $\lintkernel_{L, \infty, \mH}$ and $\intkernel$, we introduce an inner product on $L^2 = L^2(\rho_{T, \mM}^L)$:
\[
\dbinprod{\intkernelvar_1, \intkernelvar_2}_{L^2} \coloneqq \frac{1}{L}\sum_{l = 1}^{L}\E\Big[\inprod{\mbf{f}^{\text{c}}_{\intkernelvar_1}(\bXtl), \mbf{f}^{\text{c}}_{\intkernelvar_2}(\bXtl)}_{T_{\bXtl}\mM^N}\Big].
\]
Then the geometric coercivity condition can be rewritten as
\[
  c_{L,N, \mH, \mM}\norm{\intkernelvar(\cdot)\cdot}_{L^2(\rho_{T,\mM}^L)}^2 \leq \dbinprod{\intkernelvar, \intkernelvar}_{L^2(\rho_{T, \mM}^L)},
\]
and since the loss function from \eqref{error functional expected} can be written as
$
\mE_{L, \infty, \mH}(\intkernelvar) = \dbinprod{\intkernelvar - \intkernel, \intkernelvar - \intkernel} 
$, this implies		
\[
c_{L, N, \mH, \mM}\norm{\intkernelvar(\cdot)\cdot - \intkernel(\cdot)\cdot}_{L^2(\rho_{T,\mM}^L)}^2 \le \mE_{L, \infty, \mH}(\intkernelvar).
\]
Hence when $\mE_{L, \infty, \mH}(\intkernelvar)$ is small, $\norm{\intkernelvar(\cdot)\cdot - \intkernel(\cdot)\cdot}_{L^2(\rho_{T,\mM}^L)}$ is also small; hence if we construct a sequence of minimizers of $\mE_{L, \infty, \mH}$ over increasing $\mH$ with decreasing $\mE_{L, \infty, \mH}$ values, the convergence of $\lintkernel_{L, \infty, \mH}$ to $\intkernel$ can be established.  
\subsection{Concentration and Consistency}
The first theorem bounds, with high probability, the difference between the estimator $\lintkernel_{L, M, \mH}$ and the true interaction kernel $\intkernel$, which makes apparent the trade-off between the $L^2(\rho_{T, \mM}^L)$-distance between $\intkernel$ and $\mH$ (approximation error), and $M$ the number of trajectories needed for achieving the desired accuracy.
Here $\mN(\mU, \epsilon)$ is the covering number of a set $\mU$ with open balls of radius $\epsilon$ w.r.t the $L^{\infty}$-norm.
\begin{theorem}\label{control bias}
Let $\intkernel \in L^2([0,R])$, and $\mH$ an admissible hypothesis space such that the geometric coercivity condition holds with a constant $c_{L, N,\mH, \mM}$. 
Then, $\lintkernel_{L, M, \mH}$, minimizer of \eqref{error functional} on the trajectory data generated by \eqref{eq:mainmodel}, satisfies
\[
\norm{\lintkernel_{L, M, \mH}(\cdot)\cdot - \intkernel(\cdot)\cdot}_{L^2(\rho_{T, \mM}^L)}^2 \le \frac{2}{c_{L, N,\mH, \mM}}\Big(\epsilon + \inf\limits_{\intkernelvar \in \mH}\norm{\intkernelvar(\cdot)\cdot - \intkernel(\cdot)\cdot}_{L^2(\rho_{T, \mM}^L)}^2\Big)
\]
with probability at least $1 - \tau$, when $M \geq \frac{1152S_0^2R^2}{\epsilon c_{L, N, \mH, \mM}}(\ln\mN(\mH, \frac{\epsilon}{48S_0R^2}) + \ln\frac{1}{\tau})$. 
\end{theorem}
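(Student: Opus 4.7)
My plan is a standard empirical-process argument adapted to the geometric setting: decompose the excess risk of $\lintkernel_{L,M,\mH}$ into an approximation term and a sample-deviation term, translate loss values into $L^2(\rho_{T,\mM}^L)$-distance via the geometric coercivity condition, and control the sample deviations by a Bernstein-type concentration inequality applied uniformly over an $L^\infty$-cover of $\mH$. The first step is algebraic: since $\dot\bX_{t_l}=\mbf{f}^{\text{c}}_{\intkernel}(\bX_{t_l})$ and $\intkernelvar\mapsto\mbf{f}^{\text{c}}_{\intkernelvar}$ is linear, one has $\mE_{L,\infty,\mM}(\intkernelvar)=\dbinprod{\intkernelvar-\intkernel,\intkernelvar-\intkernel}$, so the coercivity inequality applied to $\intkernelvar-\intkernel$ yields $c_{L,N,\mH,\mM}\norm{\intkernelvar(\cdot)\cdot-\intkernel(\cdot)\cdot}_{L^2(\rho_{T,\mM}^L)}^2\leq \mE_{L,\infty,\mM}(\intkernelvar)$, and it suffices to bound $\mE_{L,\infty,\mM}(\lintkernel_{L,M,\mH})$ from above. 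Letting $\intkernel_*\in\arg\min_{\intkernelvar\in\mH}\norm{\intkernelvar(\cdot)\cdot-\intkernel(\cdot)\cdot}^2_{L^2(\rho_{T,\mM}^L)}$, a Cauchy-Schwarz bound on $\norm{\mbf{f}^{\text{c}}_\psi}_g$ at each agent gives $\mE_{L,\infty,\mM}(\intkernel_*)\leq \inf_{\intkernelvar\in\mH}\norm{\intkernelvar(\cdot)\cdot-\intkernel(\cdot)\cdot}^2_{L^2(\rho_{T,\mM}^L)}$, and the defining property of $\lintkernel_{L,M,\mH}$ gives
\[
\mE_{L,\infty,\mM}(\lintkernel_{L,M,\mH}) - \mE_{L,\infty,\mM}(\intkernel_*)\leq\big[\mE_{L,\infty,\mM}-\mE_{L,M,\mM}\big](\lintkernel_{L,M,\mH}) + \big[\mE_{L,M,\mM}-\mE_{L,\infty,\mM}\big](\intkernel_*).
\]

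\textbf{Uniform Bernstein bound.} Define i.i.d.\ random variables $Y^m(\intkernelvar):=\frac{1}{L}\sum_l\norm{\mbf{f}^{\text{c}}_{\intkernel-\intkernelvar}(\bX^m_{t_l})}_g^2$, so that $\mE_{L,M,\mM}(\intkernelvar)=\frac{1}{M}\sum_m Y^m(\intkernelvar)$ and $\E[Y^m(\intkernelvar)]=\mE_{L,\infty,\mM}(\intkernelvar)$. Using $\norm{\intkernel}_{L^\infty},\norm{\intkernelvar}_{L^\infty}\leq S_0$ together with the bound $\norm{\bw(\bz_1,\bz_2)}_{T_{\bz_1}\mM}\leq R$ on the relevant support, one obtains $0\leq Y^m(\intkernelvar)\leq 4S_0^2R^2 =: B$, from which $\mathrm{Var}(Y^m(\intkernelvar))\leq B\,\E[Y^m(\intkernelvar)]$. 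Bernstein's inequality then yields, for fixed $\intkernelvar$ and any $\alpha>0$,
\[
\mathbb{P}\!\Big((\mE_{L,\infty,\mM}-\mE_{L,M,\mM})(\intkernelvar)>\tfrac12\mE_{L,\infty,\mM}(\intkernelvar)+\alpha\Big)\leq\exp(-CM\alpha/B).
\]
The map $\intkernelvar\mapsto Y^m(\intkernelvar)$ is $L^\infty$-Lipschitz with constant $\mathcal{O}(S_0 R^2)$, so a union bound over an $\eta$-net of $\mH$ with $\eta=\epsilon/(48 S_0 R^2)$ pays only a $\ln\mathcal{N}(\mH,\eta)$ factor. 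Choosing $\alpha$ of order $\epsilon\,c_{L,N,\mH,\mM}$ and combining with Hoeffding/Bernstein applied at the single kernel $\intkernel_*$ produces exactly the stated sample threshold $M\geq\tfrac{1152 S_0^2 R^2}{\epsilon c_{L,N,\mH,\mM}}\big(\ln\mathcal{N}(\mH,\eta)+\ln(1/\tau)\big)$; plugging the resulting bound on $\mE_{L,\infty,\mM}(\lintkernel_{L,M,\mH})$ back through the coercivity inequality delivers the theorem.

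\textbf{Main obstacle.} The crux is attaining the \emph{fast} rate $M=\mathcal{O}(1/\epsilon)$ rather than the Hoeffding rate $\mathcal{O}(1/\epsilon^2)$. This relies critically on the variance bound $\mathrm{Var}(Y^m(\intkernelvar))\lesssim B\,\E[Y^m(\intkernelvar)]$ and on using Bernstein in a ``relative'' form whose $\tfrac12\mE_{L,\infty,\mM}(\intkernelvar)$ term is \emph{absorbed} into the left-hand side after applying the argument to $\lintkernel_{L,M,\mH}$, rather than bounded crudely. Tracking the precise constants $48$ and $1152$ requires careful bookkeeping of the $L^\infty$-Lipschitz constant of $\intkernelvar\mapsto Y^m(\intkernelvar)$, of the net discretization error, and of the absorbing step; the geometric ingredients (geodesic distance, the tangent norm) enter only through the uniform bounds $\norm{\bw}\leq R$ and through the coercivity constant, so the manifold structure does not create any new analytical difficulty beyond what is already packaged in those objects.
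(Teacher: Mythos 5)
Your argument is correct in substance and belongs to the same family as the paper's: both rest on a one-sided, \emph{relative} Bernstein-type concentration bound applied uniformly over an $L^\infty$-cover of $\mH$, combined with the geometric coercivity condition to convert a bound on $\mE_{L,\infty,\mM}(\lintkernel_{L,M,\mH})$ into a bound on $\norm{\lintkernel_{L,M,\mH}(\cdot)\cdot-\intkernel(\cdot)\cdot}_{L^2(\rho_{T,\mM}^L)}$. The difference is the centering of the decomposition. The paper (following Lu et al.) introduces the population minimizer $\lintkernel_{L,\infty,\mH}$ and the defect functions $\mD_{L,M,\mH}(\intkernelvar)=\mE_{L,M,\mM}(\intkernelvar)-\mE_{L,M,\mM}(\lintkernel_{L,\infty,\mH})$, proves the uniform ratio bound of Prop.~\ref{uniform bound} for $(\mD_{L,\infty,\mH}-\mD_{L,M,\mH})/(\mD_{L,\infty,\mH}+\epsilon)$ — the constants $48=8\cdot 6$ and $1152=32\cdot 6^2$ come precisely from taking $\alpha=1/6$ there — and relies on Prop.~\ref{uniqueness}, hence on the convexity of $\mH$, to relate $\mD_{L,\infty,\mH}$ to the $L^2(\rho_{T,\mM}^L)$ distance and to control the variance of the (possibly sign-changing) defect. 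You instead center at the best approximant $\intkernel_*$ and concentrate the nonnegative variable $Y^m(\intkernelvar)$ directly, for which $\mathrm{Var}(Y^m)\le B\,\E[Y^m]$ is immediate from $0\le Y^m\le B$; this is more elementary and does not invoke convexity in the concentration step (the paper defers this final assembly to the references, so your write-up is, if anything, more explicit). Two caveats. First, your absorbing step with a $\tfrac12\mE_{L,\infty,\mM}$ relative term gives $\mE_{L,\infty,\mM}(\lintkernel_{L,M,\mH})\le 2(\alpha+\alpha')+3\,\mE_{L,\infty,\mM}(\intkernel_*)$, i.e.\ a factor $3$ rather than $2$ in front of the approximation error; recovering the stated factor $2$ forces the relative fraction down to $\theta\le 1/3$, which perturbs the Bernstein constants, so your route proves the theorem but not automatically with the literal $48$ and $1152$, which are artifacts of the paper's parametrization. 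Second, both you and the paper apply the coercivity inequality to differences such as $\intkernelvar-\intkernel$ (and $\intkernelvar-\lintkernel_{L,\infty,\mH}$), which need not lie in $\mH$; this shared, standard abuse strictly requires coercivity on a set closed under the relevant differences.
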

This quantifies the usual bias-variance tradeoff in our setting:  on the one hand, with a large hypothesis space, the quantity $\smash{\inf_{\intkernelvar \in \mH}\norm{\intkernelvar(\cdot)\cdot - \intkernel(\cdot)\cdot}_{L^2(\rho_{T, \mM}^L)}}$ could be made small.  On the other hand, we wish to have the right number of samples to make the variance of the estimator small, by controlling the covering number of the hypothesis space $\mH$.
\subsection{Convergence Rate} \label{sec:learningrate}
Next we establish the convergence rate of $\lintkernel_{L, M, \mH}$ to $\intkernel$ as $M$ increases. 		
\begin{theorem}\label{optimal rate of convergence}
Let $\muX$ be the distribution of the initial conditions of trajectories, and $\mH_M = \mB_n$ with $n \asymp ({M}/{\log M})^{\frac{1}{2s + 1}}$, where $\mB_{n}$ is the central ball of $\mL_n$ with radius $c_1 + S$, and the linear space $\mL_n \subseteq L^{\infty}([0,R])$ satisfies
  \[
    dim(\mL_n) \leq c_0n \mand \inf\limits_{\intkernelvar \in \mathcal{L}_n} \norm{\intkernelvar - \intkernel}_{L^{\infty}} \leq c_1n^{-s}
  \]
  for some constants $c_0, c_1, s > 0$. Suppose that the geometric coercivity condition holds on $\mathcal{L}\coloneqq\cup_n \mL_n$ with constant $c_{L, N, \mathcal{L}, \mM}$.  Then there exists some constant $C(S, R, c_0, c_1)$ such that 
  \[
  \E\Big[\norm{\lintkernel_{L, M, \mH_M}(\cdot)\cdot - \intkernel(\cdot)\cdot}_{L^2(\rho_{T, \mM}^L)}\Big] \le \frac{C(S, R, c_0, c_1)}{c_{L, N, \mathcal{L}, \mM}}\Big(\frac{\log M}{M}\Big)^{\frac{s}{2s + 1}}\,.
  \]
\end{theorem}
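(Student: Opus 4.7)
The plan is to invoke Theorem \ref{control bias} with $\mH = \mH_M = \mB_n$, substitute tailored bounds on the approximation error and the covering number appearing there, optimize the free parameters $\epsilon$ and $n$, and finally convert the resulting high-probability bound into an expectation bound.

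For the approximation error, the hypothesis $\inf_{\intkernelvar \in \mL_n} \norm{\intkernelvar - \intkernel}_{L^\infty} \le c_1 n^{-s}$ produces some $\varphi^\star \in \mL_n$ with $\norm{\varphi^\star}_{L^\infty} \le S + c_1$, so $\varphi^\star \in \mB_n = \mH_M$; since $\varphi^\star$ and $\intkernel$ are supported in $[0,R]$ and $\rho_{T,\mM}^L$ is a probability measure, this yields $\inf_{\intkernelvar \in \mH_M}\norm{\intkernelvar(\cdot)\cdot - \intkernel(\cdot)\cdot}_{L^2(\rho_{T,\mM}^L)}^2 \le c_1^2 R^2 n^{-2s}$. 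For the covering number, since $\mB_n$ is a ball of radius $c_1 + S$ in a linear subspace of $L^\infty$ of dimension at most $c_0 n$, the standard volumetric estimate gives $\log \mN(\mB_n, \eta) \le c_0 n \log(3(c_1 + S)/\eta)$, so with $\eta = \epsilon/(48 S_0 R^2)$ one has $\log \mN(\mH_M, \eta) \le c_0 n \log(C_3/\epsilon)$ for a constant $C_3 = C_3(S, R, c_1)$.

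Next I would balance the variance parameter $\epsilon$ against the bias by taking $\epsilon = n^{-2s}$, and set the confidence level to $\tau = 1/M$. The sample-size threshold of Theorem \ref{control bias} then reduces to an inequality of the form $M \gtrsim c_{L,N,\mL,\mM}^{-1}(n^{2s+1}\log n + n^{2s}\log M)$, which is satisfied, up to constants depending only on $S, R, c_0, c_1$, by the prescribed choice $n \asymp (M/\log M)^{1/(2s+1)}$. Theorem \ref{control bias} then produces, on an event of probability at least $1 - 1/M$,
\[
\norm{\lintkernel_{L,M,\mH_M}(\cdot)\cdot - \intkernel(\cdot)\cdot}_{L^2(\rho_{T,\mM}^L)}^2 \le \frac{C(S, R, c_0, c_1)}{c_{L,N,\mL,\mM}}\Big(\frac{\log M}{M}\Big)^{2s/(2s+1)}.
\]
Because $\lintkernel_{L,M,\mH_M}, \intkernel \in \mK_{R, S_0}$, the left-hand side is deterministically bounded by $(2 S_0 R)^2$, so integrating the failure tail contributes only an $O(1/M)$ lower-order term; the same bound therefore holds in expectation, and a final Jensen step transfers it to the non-squared $L^2$-norm at the claimed rate $(\log M/M)^{s/(2s+1)}$.

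The main obstacle is the bookkeeping at the balancing step: the $\log(C_3/\epsilon) \sim 2s\log n$ factor coming from the covering number, combined with $\epsilon \asymp n^{-2s}$, is precisely what forces the $\log M$ factor in the final rate and singles out the scaling $n \asymp (M/\log M)^{1/(2s+1)}$; a secondary nuisance is the clean passage from a high-probability bound to an expectation bound, which is handled by choosing $\tau$ polynomially small in $M$ and exploiting the uniform bound $2 S_0 R$ on the integrand of the squared error.
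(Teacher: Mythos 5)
Your overall strategy---invoke Theorem \ref{control bias} with $\mH_M=\mB_n$, bound the bias via the $L^\infty$ approximation hypothesis, bound $\log\mN$ by the standard volumetric estimate $c_0 n\log(3(c_1+S)/\eta)$ for a ball in a $c_0n$-dimensional subspace, balance, and convert to an expectation using the uniform bound $2S_0R$ and Jensen---is exactly the route the paper intends (it omits the proof and defers to the Euclidean arguments of Lu et al., which proceed this way), and the approximation-error, covering-number, and tail-to-expectation steps are all sound. The genuine problem is at the balancing step you yourself flagged. With $\epsilon=n^{-2s}$ and $\tau=1/M$, the sample-size threshold of Theorem \ref{control bias} reads
\[
M \;\ge\; \frac{1152\,S_0^2R^2}{\epsilon\, c_{L,N,\mL,\mM}}\Big(\ln\mN\big(\mH_M,\tfrac{\epsilon}{48S_0R^2}\big)+\ln M\Big)\;\asymp\;\frac{S_0^2R^2}{c_{L,N,\mL,\mM}}\Big(c_0\, n^{2s+1}\log n+n^{2s}\log M\Big),
\]
and with the prescribed $n\asymp(M/\log M)^{1/(2s+1)}$ the dominant term $n^{2s+1}\log n$ is itself of order $M$. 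So the two sides scale identically in $M$, and the inequality is \emph{not} ``satisfied up to constants depending only on $S,R,c_0,c_1$'': the right-hand side carries the factor $c_{L,N,\mL,\mM}^{-1}$, and the condition fails for \emph{every} $M$ once the coercivity constant is smaller than the resulting absolute constant (a realistic regime; the experiments report smallest eigenvalues of order $10^{-7}$). As written, your argument proves the theorem only under an implicit lower bound on $c_{L,N,\mL,\mM}$.

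The repair is small but necessary: take $\epsilon = A\,c_{L,N,\mL,\mM}^{-1}\,n^{-2s}$ with $A=A(S,R,c_0,c_1,s)$ sufficiently large. Enlarging $\epsilon$ only shrinks $\ln\mN(\mH_M,\epsilon/(48S_0R^2))$, so the threshold becomes $M\gtrsim A^{-1}\big(n^{2s+1}\log n+n^{2s}\log M\big)$, which the prescribed $n$ satisfies for a suitable $A$ independent of $c_{L,N,\mL,\mM}$ and $M$. The high-probability bound from Theorem \ref{control bias} then becomes
\[
\norm{\lintkernel_{L,M,\mH_M}(\cdot)\cdot-\intkernel(\cdot)\cdot}_{L^2(\rho_{T,\mM}^L)}^2\;\le\;\frac{2A}{c_{L,N,\mL,\mM}^{2}}\,n^{-2s}+\frac{2c_1^2R^2}{c_{L,N,\mL,\mM}}\,n^{-2s},
\]
and since the coercivity constant is bounded above by an absolute constant (Jensen applied to the right-hand side of the coercivity inequality), both terms are $\lesssim c_{L,N,\mL,\mM}^{-2}n^{-2s}$; taking square roots and running your tail/Jensen step unchanged recovers precisely the claimed $C(S,R,c_0,c_1)\,c_{L,N,\mL,\mM}^{-1}\big(\log M/M\big)^{s/(2s+1)}$, so the theorem's constant structure is preserved---but the $1/c_{L,N,\mL,\mM}$ inflation of $\epsilon$ must appear explicitly in the argument.
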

The constant $s$ is tied closely to the regularity of $\intkernel$, and it plays an important role in the convergence rate.  For example, when $\intkernel \in C^1$, we can take $s = 1$ with linear spaces of first degree piecewise polynomials, we end up with a $M^{\frac{1}{3}}$ learning rate.
The rate is the same as the minimax rate for nonparametric regression in one dimension (up to the logarithmic factor), and is independent of the dimension $D = N\sdim$ of the state space.   Empirical results suggest that at least in some cases, when $L$ grows, i.e. each trajectory is sampled at more points, then the estimators improve; this is however not captured by our bound.  
\subsection{Trajectory Estimation Error} \label{sec:trajanalysis}
We have established the convergence of the estimator $\lintkernel_{L, M, \mH}$ to the true interaction kernel $\intkernel$.   
We now establish the convergence of the trajectories of the estimated dynamics, evolved using $\lintkernel_{L, M, \mH}$, to the observed trajectories.
\begin{theorem}\label{Traj Acc Bound}
Let $\intkernel \in \mK_{R, S}$ and $\lintkernel \in \mK_{R, S_0}$, for some $S_0\ge S$.  Suppose that $\bX_{[0, T]}$ and $\hat\bX_{[0, T]}$ are solutions of \eqref{eq:mainmodel} w.r.t to $\intkernel$ and $\lintkernel$, respectively, for $t \in [0, T]$, with $\hat\bX_0=\bX_0$.  Then we have the following inequality,
\[
  \E\Big[d_{\text{trj}}\Big(\bX_{[0, T]},\hat\bX_{[0, T]}\Big)^2\Big] \leq 4T^2C(\mM,T)\exp(64T^2S_0^2)\norm{\intkernel(\cdot)\cdot - \lintkernel(\cdot)\cdot}_{L^2(\rho_{T, \mM})}^2,
\]
where $C(\mM,T)$ is a positive constant depending only on geometric properties of $\mM$ and $T$, but may be chosen independent of $T$ if $\mM$ is compact.
\end{theorem}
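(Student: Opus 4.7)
The plan is a Gronwall-type argument adapted to the Riemannian setting, executed on the pathwise normalized squared geodesic error $D(t) := \frac{1}{N}\sum_{i=1}^N d_{\mM}(\bx_i(t),\hat\bx_i(t))^2$. Since both trajectories start from $\bX_0=\hat\bX_0$, I would first bound $D(t)$ above by a time integral of the squared velocity mismatch: using an isometric embedding of $\mM$ into an ambient Euclidean space (or equivalently, the first variation of the geodesic distance) together with the Cauchy--Schwarz inequality in time, one obtains
$$D(t) \;\leq\; t\,C(\mM,T)\!\int_0^t \norm{\dot\bX_s - \dot{\hat\bX}_s}_g^2\,ds,$$
where $C(\mM,T)$ absorbs any curvature-induced distortion between the ambient chord and the intrinsic geodesic distance along the trajectories up to time $T$; this constant can be taken independent of $T$ when $\mM$ is compact because the distortion is then globally bounded.

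Next I would split $\dot\bX_s - \dot{\hat\bX}_s = \bigl[\mbf{f}^{\text{c}}_{\intkernel}(\bX_s)-\mbf{f}^{\text{c}}_{\intkernel}(\hat\bX_s)\bigr] + \bigl[\mbf{f}^{\text{c}}_{\intkernel}(\hat\bX_s)-\mbf{f}^{\text{c}}_{\lintkernel}(\hat\bX_s)\bigr]$ and apply $(a+b)^2\le 2a^2+2b^2$. The first piece is controlled by the Lipschitz constant of $\mbf{f}^{\text{c}}_{\intkernel}$ on $(\mM^N,g_{\mM}^N)$, which by the cancellation mechanism explained in Section~\ref{sec:mainmodel} (the length factor in $\bw$ cancels the singular behaviour of the geodesic direction) is bounded by a multiple of $S_0$; this yields $\norm{\mbf{f}^{\text{c}}_{\intkernel}(\bX_s)-\mbf{f}^{\text{c}}_{\intkernel}(\hat\bX_s)}_g^2 \le c_1 S_0^2\,D(s)$. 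Substituting back,
$$D(t) \;\leq\; 2t\,C(\mM,T)\,c_1 S_0^2\!\int_0^t D(s)\,ds \;+\; 2t\,C(\mM,T)\!\int_0^t \norm{\mbf{f}^{\text{c}}_{\intkernel}(\hat\bX_s)-\mbf{f}^{\text{c}}_{\lintkernel}(\hat\bX_s)}_g^2\,ds,$$
and applying the integral form of Gronwall's inequality with the $t$-dependent coefficient produces an exponential factor $\exp(c_1 C(\mM,T) S_0^2\,t^2)$; tracking the constants and choosing $c_1\,C(\mM,T) = 64$ yields the stated $\exp(64\,T^2 S_0^2)$.

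Finally, I would take $\sup_{t\in[0,T]}$ and then the expectation over $\bX_0 \sim \muX$. By the very definition of $\rho_{T,\mM}$, together with $\norm{\bw(\bz_1,\bz_2)}_g = d_{\mM}(\bz_1,\bz_2)$, a Jensen step to pull the inner $i'$-average outside the square gives
$$\E\!\int_0^T\!\!\norm{\mbf{f}^{\text{c}}_{\intkernel}(\hat\bX_s)-\mbf{f}^{\text{c}}_{\lintkernel}(\hat\bX_s)}_g^2\,ds \;\le\; T\,\norm{\intkernel(\cdot)\cdot-\lintkernel(\cdot)\cdot}_{L^2(\rho_{T,\mM})}^2,$$
which, combined with the overall prefactor $2T\,C(\mM,T)$ from the Gronwall step, produces the announced $4T^2 C(\mM,T)$. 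The main obstacle is Step~2: proving the Lipschitz bound for $\mbf{f}^{\text{c}}_{\intkernel}$ on the product manifold. The weight vector $\bw(\bx_i,\bx_{i'})$ lives in $T_{\bx_i}\mM$, so comparing its value at $\bx_i$ with its value at $\hat\bx_i$ requires an identification of tangent spaces (via parallel transport or the ambient embedding), and the resulting constant must absorb curvature-dependent Jacobi-field rotation of the geodesic direction. Because $\intkernel$ is compactly supported within the injectivity radius, the cut locus is avoided entirely and this constant is finite; for non-compact $\mM$ one additionally needs that the dynamics confines the agents to a set of bounded diameter over $[0,T]$, which is where the $T$-dependence in $C(\mM,T)$ may enter.
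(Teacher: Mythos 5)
Your overall strategy (isometric embedding, Cauchy--Schwarz in time, split the velocity mismatch into a Lipschitz term plus a kernel-mismatch term, Gr\"onwall, then take expectations) is the same as the paper's, but your decomposition is oriented the wrong way, and this creates a genuine gap at the final step. You write $\dot\bX_s - \dot{\hat\bX}_s = \bigl[\mbf{f}^{\text{c}}_{\intkernel}(\bX_s)-\mbf{f}^{\text{c}}_{\intkernel}(\hat\bX_s)\bigr] + \bigl[\mbf{f}^{\text{c}}_{\intkernel}(\hat\bX_s)-\mbf{f}^{\text{c}}_{\lintkernel}(\hat\bX_s)\bigr]$, so the kernel-difference term is evaluated along the \emph{estimated} trajectory $\hat\bX_s$. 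But $\rho_{T,\mM}$ is by definition the expected distribution of pairwise geodesic distances along the \emph{true} trajectories (driven by $\intkernel$ from $\bX_0\sim\muX$). Consequently your claimed inequality
\[
\E\int_0^T \norm{\mbf{f}^{\text{c}}_{\intkernel}(\hat\bX_s)-\mbf{f}^{\text{c}}_{\lintkernel}(\hat\bX_s)}_g^2\,ds \;\le\; T\,\norm{\intkernel(\cdot)\cdot-\lintkernel(\cdot)\cdot}_{L^2(\rho_{T,\mM})}^2
\]
does not follow from the definition of $\rho_{T,\mM}$: the Jensen/averaging argument produces the $L^2$ norm with respect to the pairwise-distance measure of the \emph{estimated} dynamics, which is a different (and a priori uncontrolled) measure. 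The statement you are proving has $\rho_{T,\mM}$ on the right-hand side, so this step fails as written.

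The fix is exactly the paper's choice: decompose as $\dot\bX_s - \dot{\hat\bX}_s = \bigl[\mbf{f}^{\text{c}}_{\intkernel}(\bX_s)-\mbf{f}^{\text{c}}_{\lintkernel}(\bX_s)\bigr] + \bigl[\mbf{f}^{\text{c}}_{\lintkernel}(\bX_s)-\mbf{f}^{\text{c}}_{\lintkernel}(\hat\bX_s)\bigr]$, i.e.\ put the kernel mismatch on the true trajectory and apply the Lipschitz estimate to the \emph{estimator} $\lintkernel$. This is precisely why the hypothesis $\lintkernel\in\mK_{R,S_0}$ (a bound on $\lintkernel$ and $\lintkernel'$) is needed and why the exponential carries $S_0$ rather than $S$: the Gr\"onwall coefficient comes from $\mathrm{Lip}(F^{\mM}_{\lintkernel})\le 2S_0$, not from the true kernel. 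With this orientation, the remaining term $\norm{\dot\bX_s - \mbf{f}^{\text{c}}_{\lintkernel}(\bX_s)}_g^2$ is bounded, via the continuity estimate of Proposition \ref{prop:continuity} applied along the true trajectory and then averaged over $\bX_0\sim\muX$, exactly by $\norm{\intkernel(\cdot)\cdot-\lintkernel(\cdot)\cdot}_{L^2(\rho_{T,\mM})}^2$, yielding the stated $4T^2C(\mM,T)\exp(64T^2S_0^2)$ bound. Your other observations (working in a Nash embedding to avoid parallel transport, converting ambient to geodesic distance on the compact reachable set, the $T$-dependence of $C(\mM,T)$ for non-compact $\mM$) are consistent with the paper's argument.
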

While these bounds are mainly useful for small times $T$,  given the exponential dependence on $T$ of the bounds, they can be overly pessimistic. 
It may also happen that the predicted trajectories are not accurate in terms of agent positions, but they maintain, and even predict from initial conditions, large-scale, emergent properties of the original system, such as flocking of birds of milling of fish \cite{Zhong20}. We suspect this can hold also in the manifold setting, albeit in ways that are affected by geometric properties of the manifold.
\section{Numerical Experiments}\label{sec:num_results}
We consider two prototypical first order dynamics, Opinion Dynamics (OD) and Predator-Swarm dynamics (PS$1$), each on two different manifolds, the $2D$ sphere $\mathbb{S}^2$, centered at the origin with radius $\frac{5}{\pi}$, and the Poincar\'{e} disk $\mathbb{PD}$ (unit disk centered at the origin, with the hyperbolic metric).  These are model spaces with constant positive and negative curvature, respectively.  We conduct extensive experiments on these four scenarios to demonstrate the performance of the estimators both in terms of the estimation errors (approximating $\intkernel$'s) and trajectory estimator errors (estimating the observed dynamics) over $[0, T]$.

For each type of dynamics, on each of the two model manifolds, we visualize trajectories of the system, with a random initial condition (i.e. not in the training set), driven by $\intkernel$ and $\lintkernel$. We also augment the system by adding new agents: without any re-learning, we can transfer $\lintkernel$ to drive this augmented system (with $N=40$ in our examples), for which will also visualize the trajectories (again, started from a new random initial condition. We also report on the (relative) estimation error of the interaction kernel, as defined in \eqref{eq:rel_L2rhoT_error}, and on the trajectory errors, defined in \eqref{eq:traj_norm}.

\begin{figure}[H]
\centering
\begin{subfigure}[b]{0.49\textwidth} 
\centering
\includegraphics[width=\textwidth]{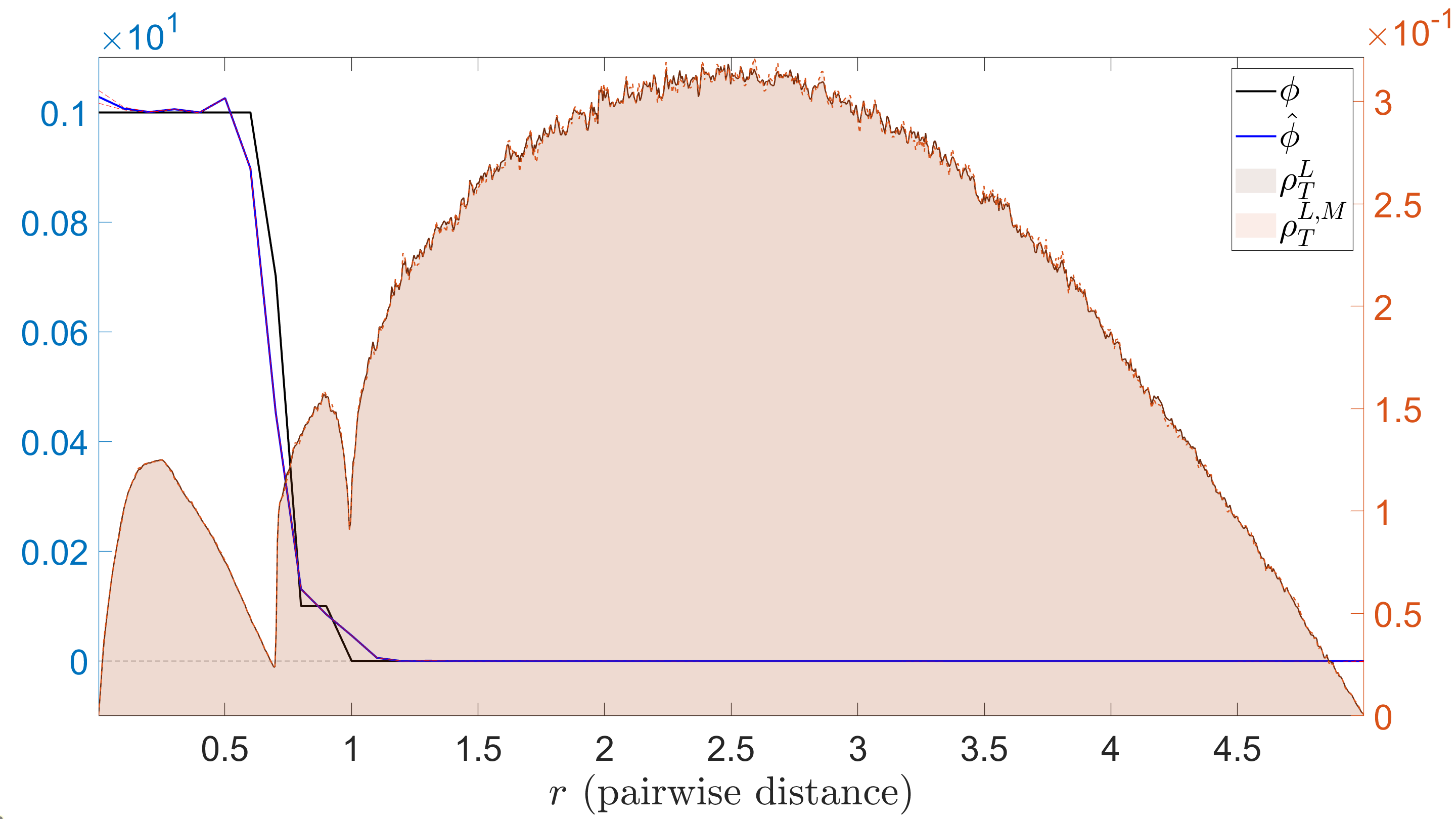} 
\caption{OD on $\mathbb{S}^2$}
\end{subfigure}
\begin{subfigure}[b]{0.49\textwidth}
\includegraphics[width=\textwidth]{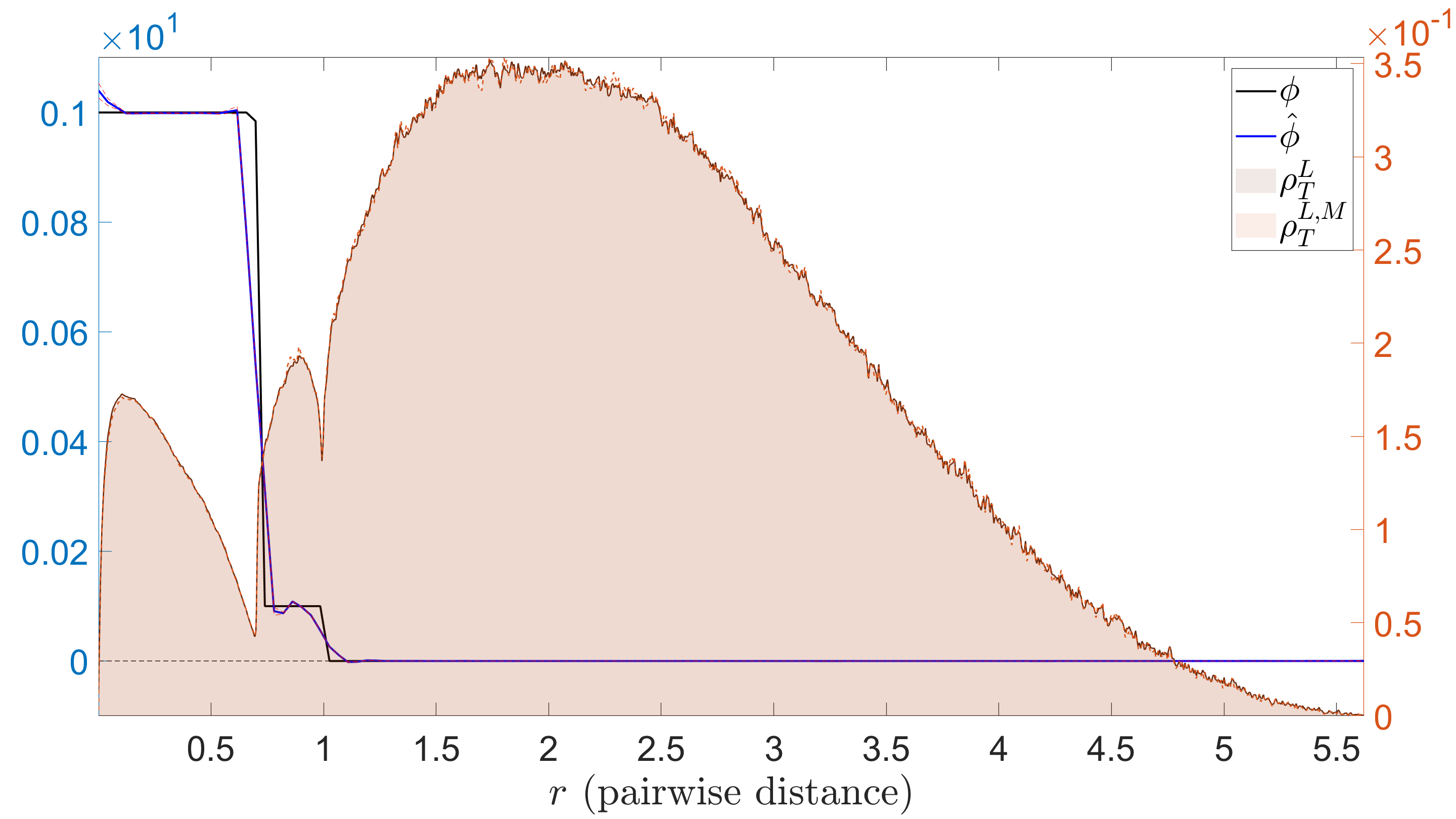}
\caption{OD on $\mathbb{PD}$}
\end{subfigure}
\centering
\begin{subfigure}[b]{0.49\textwidth}
\centering
\includegraphics[width=\textwidth]{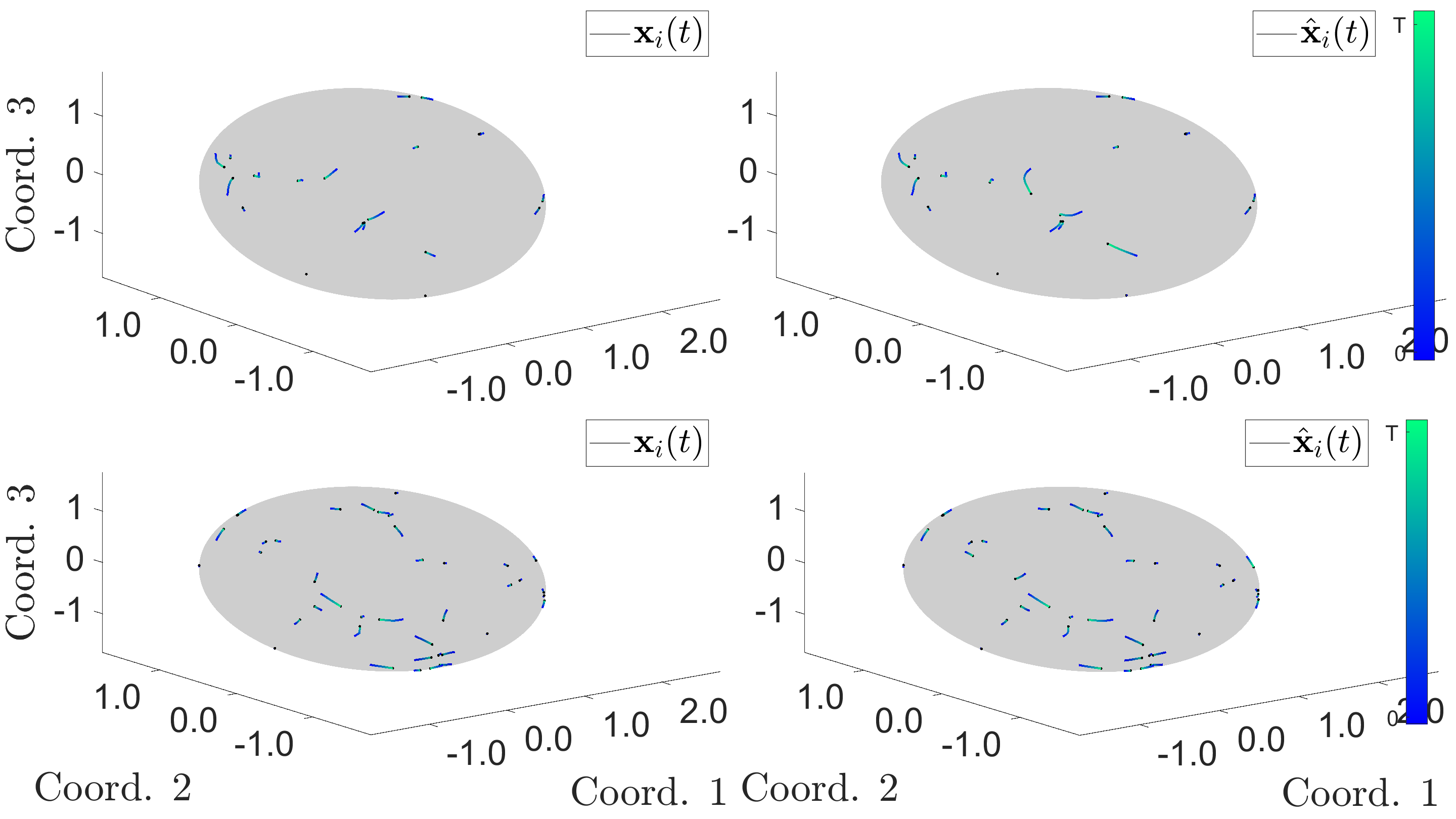}
\caption{OD on $\mathbb{S}^2$}
\end{subfigure} \,
\begin{subfigure}[b]{0.49\textwidth}
\centering
\includegraphics[width=\textwidth]{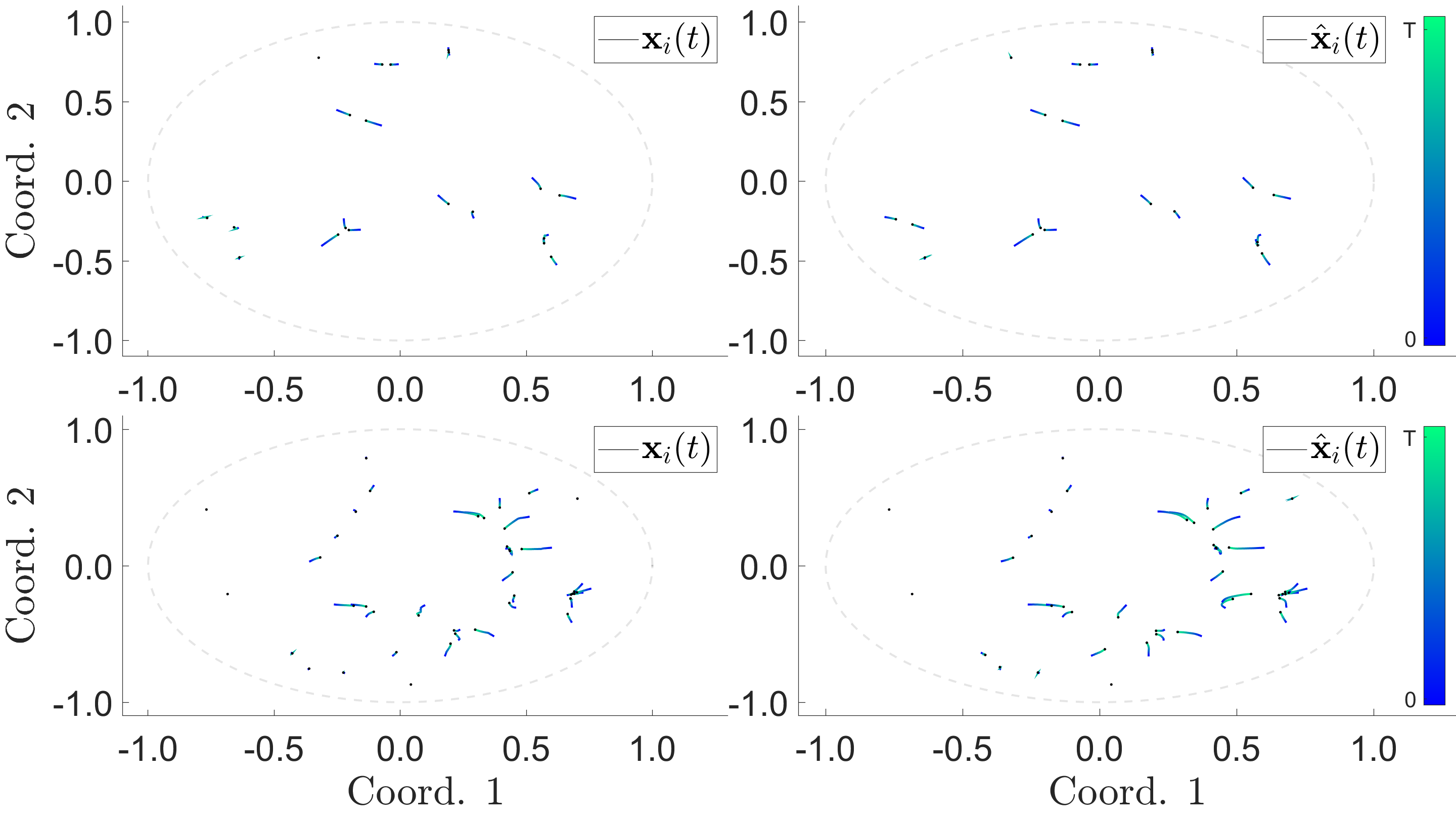}
\caption{OD on $\mathbb{PD}$}
\end{subfigure}
\caption{{\bf{Top}}: comparison of $\intkernel$ and $\lintkernel$. The true interaction kernel is shown with a black solid line, whereas the mean estimated interaction kernel is shown with a blue solid line with its confidence interval shown in red dotted lines.  Shown in the background is the comparison of the approximate $\rho_{T, \mM}^L$ versus the empirical $\rho_{T, \mM}^{L, M}$.
{\bf{Bottom}}: comparison of trajectories $\bX_{[0,T]}$ and $\hat\bX_{[0,T]}$. The trajectories $\bX_{[0,T]}$'s generated by the true interaction kernel $\intkernel$; whereas $\smash{\hat\bX_{[0,T]}}$'s are trajectories generated by the estimator $\smash{\lintkernel}$, with the same initial conditions. In the first row, trajectories are started from a randomly chosen initial condition. In the second row, trajectories are generated for a new system, with $N = 40$ agents. The colors along the trajectories indicate time, from deep blue (at $t = 0$) to light green (at $t = T$).}
\label{fig:OD_results}
\end{figure}
For each system of $N = 20$ agents, we take $M = 500$ and $L = 500$ to generate the training data.  For each $\hypspace_M$, we use first-degree clamped B-splines as the basis functions with $\dim(\hypspace_M) = \mathcal{O}(n_*) = (\frac{ML}{\log(ML)})^{\frac{1}{3}}N^{\frac{1}{d}}$.  We use a geometric numerical integrator \cite{Hairer2001} ($4^{th}$ order Backward Differentiation Formula with a projection scheme) for the evolution of the dynamics.  For details, see the Appendix. 

\textbf{Opinion Dynamics (OD)} is used to model simple interactions of opinions \cite{OpinionDynamicsAylin2017, continuousOD} as well as choreography \cite{CLP2014}. 
In fig.\ref{fig:OD_results} we display trajectories of the system on the two model manifolds.
The results are summarized in fig.\ref{fig:OD_results}.
The relative error of the estimator $\lintkernel$ for OD on $\mathbb{S}^2$ is $1.894 \cdot 10^{-1} \pm 3.1 \cdot 10^{-4}$, whereas for OD on $\mathbb{PD}$ is $1.935 \cdot 10^{-1} \pm 9.5 \cdot 10^{-4}$, both are calculated using \eqref{eq:rel_L2rhoT_error}.
The errors for trajectory prediction are reported in table \ref{tab:traj_err}.
\begin{table}[H]
\vskip-1cm
\centering
\small{\begin{tabular}{| c || c |} 
\hline
        OD                                             & $[0, T]$                                \\
\hline
$\text{mean}_{\text{IC}}^{\mathbb{S}^2}$: Training ICs & $8.8 \cdot 10^{-2} \pm 1.7 \cdot 10^{-3}$ \\ 
\hline         
$\text{mean}_{\text{IC}}^{\mathbb{S}^2}$: Random ICs   & $9.0 \cdot 10^{-2} \pm 1.6 \cdot 10^{-3}$ \\
\hline   
\hline
$\text{mean}_{\text{IC}}^{\mathbb{PD}}$: Training ICs & $1.08 \cdot 10^{-1} \pm 1.6 \cdot 10^{-3}$ \\
\hline         
$\text{mean}_{\text{IC}}^{\mathbb{PD}}$: Random ICs   & $1.08 \cdot 10^{-1} \pm 2.6 \cdot 10^{-3}$ \\
\hline 
\end{tabular}}
\caption{(OD on $\mathbb{S}^2$ or $\mathbb{PD}$) $\text{mean}_{\text{IC}}$ is the mean of the trajectory errors over $M$ initial conditions (ICs), as defined in eq.\eqref{eq:traj_norm}.}
\label{tab:traj_err}
\end{table}
\begin{figure}[H]
\centering
\begin{subfigure}[b]{0.49\textwidth}
\centering
\includegraphics[width=\textwidth]{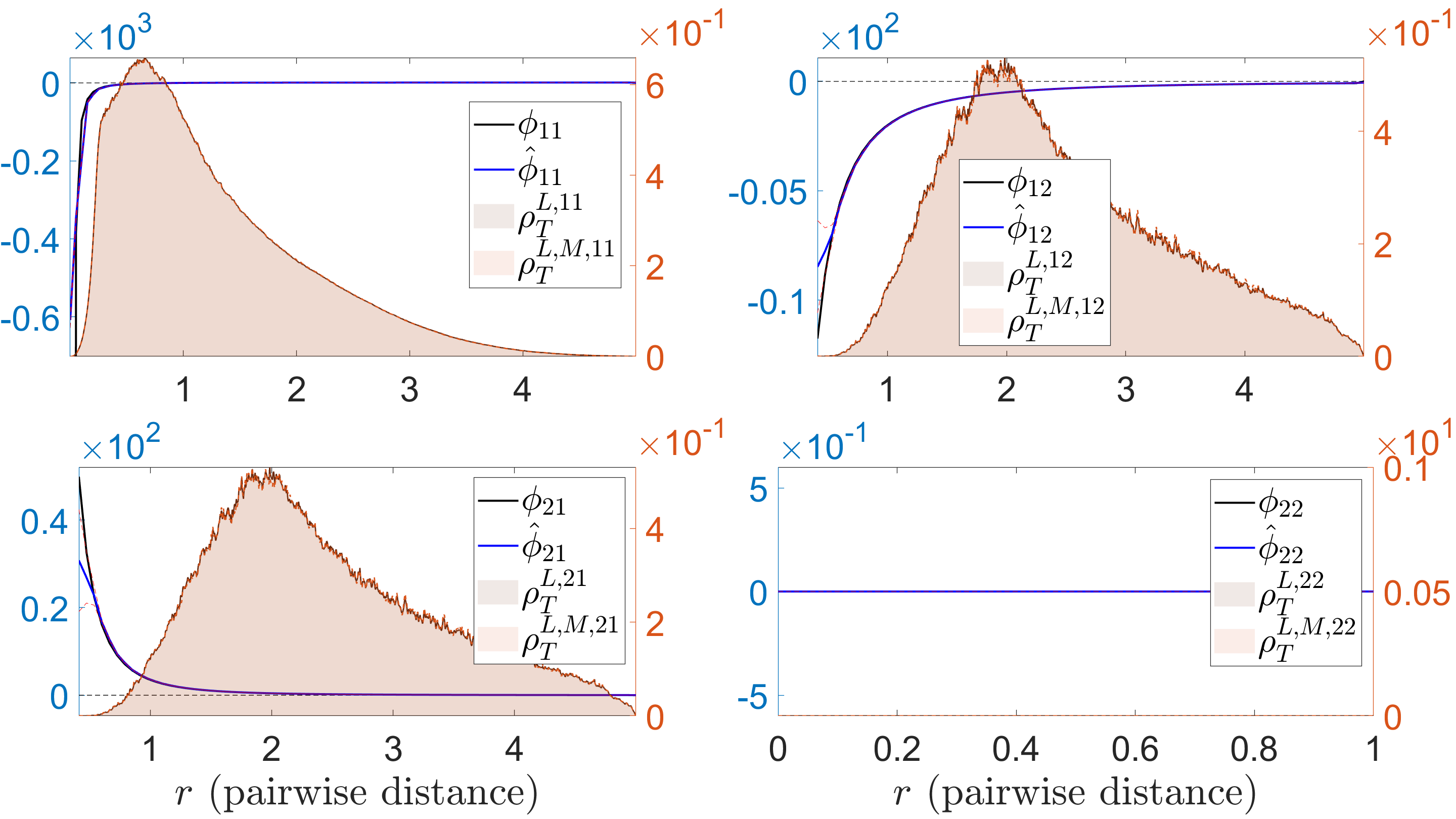}
\caption{PS$1$ on $\mathbb{S}^2$}
\end{subfigure}
\begin{subfigure}[b]{0.49\textwidth}
\centering
\includegraphics[width=\textwidth]{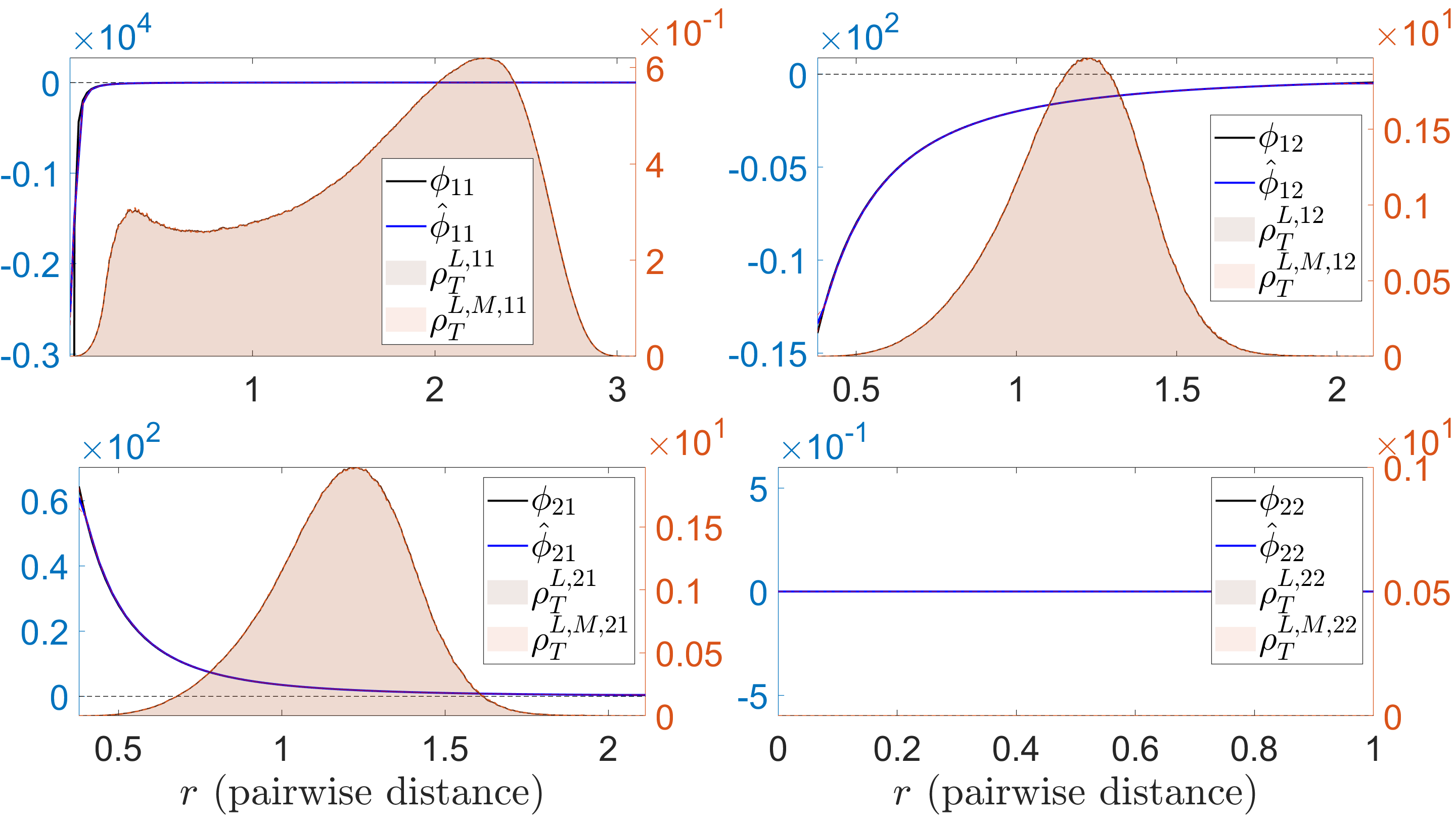}
\caption{PS$1$ on $\mathbb{PD}$}
\end{subfigure}
\begin{subfigure}[b]{0.49\textwidth}
\centering
\includegraphics[width=\textwidth]{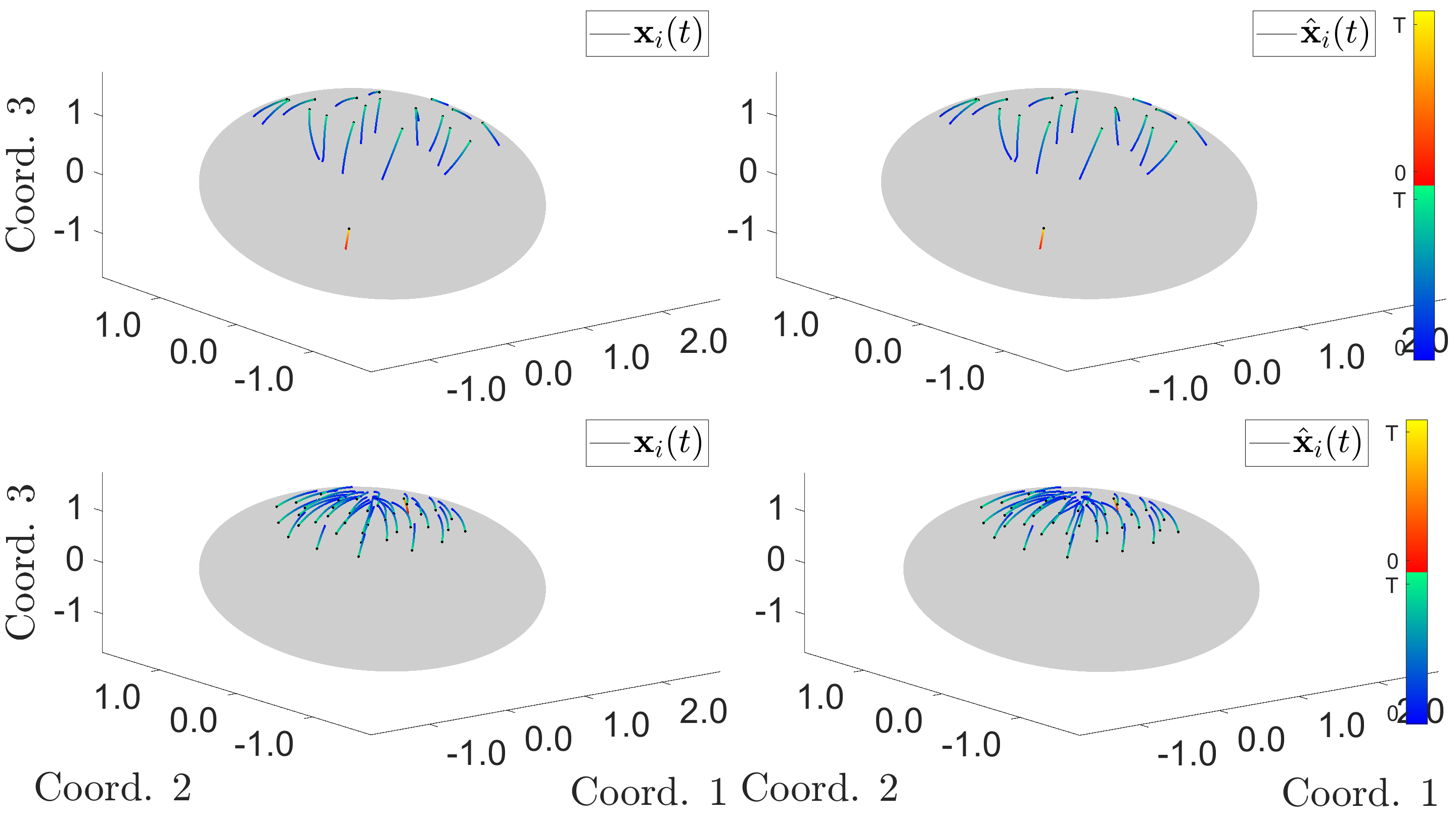}
\caption{PS$1$ on $\mathbb{S}^2$}
\end{subfigure}
\begin{subfigure}[b]{0.49\textwidth}
\centering
\includegraphics[width=\textwidth]{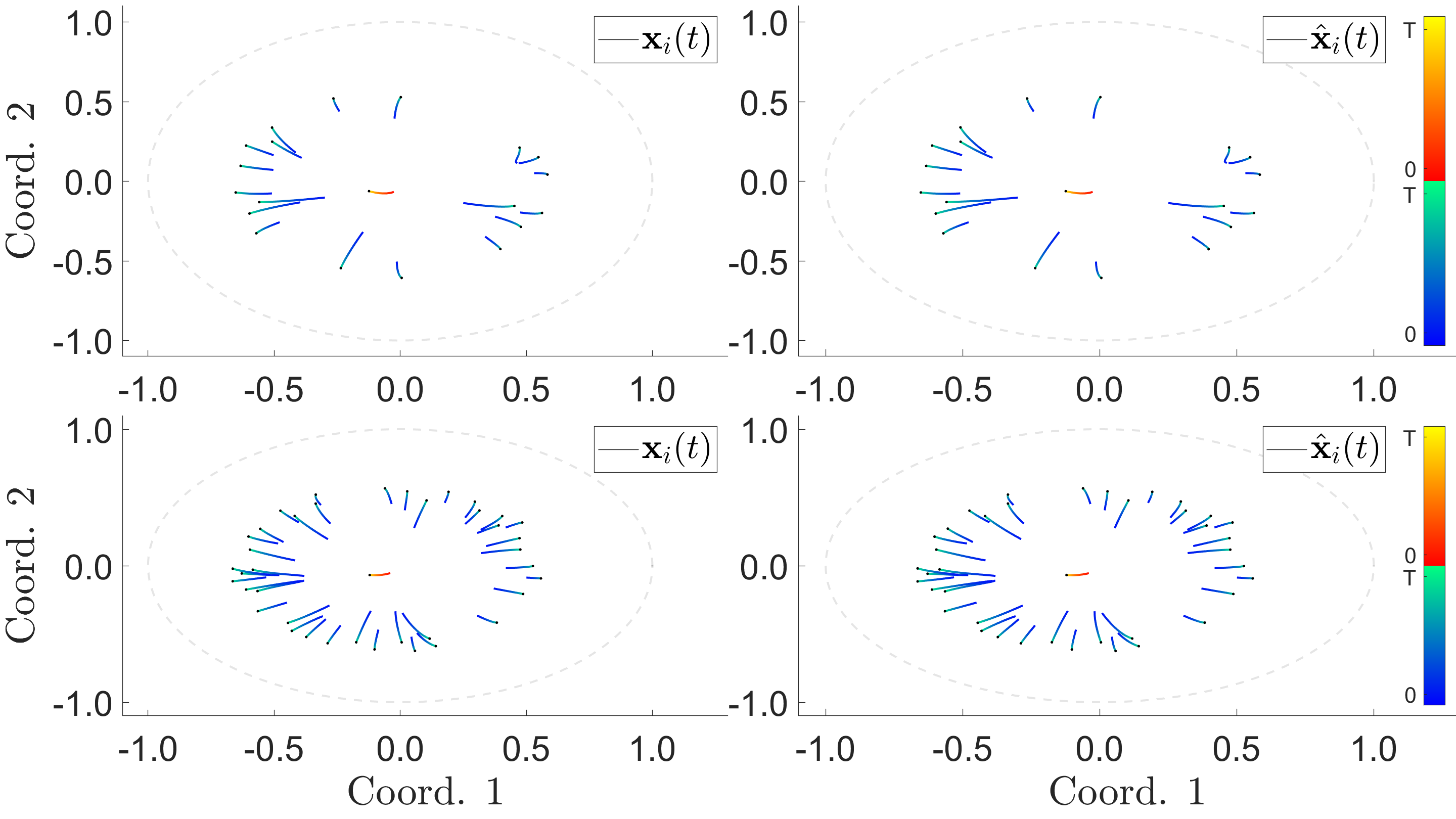}
\caption{PS$1$ on $\mathbb{PD}$}
\end{subfigure}
\centering
\caption{{\bf{Top}}: comparison of $\intkernel_{\idxcl, \idxcl'}$ and $\smash{\lintkernel_{\idxcl, \idxcl'}}$. The true interaction kernels are shown with a black solid line, whereas the mean estimated interaction kernels are shown with a blue solid line with their confidence intervals shown in red dotted lines.  Shown in the background is the comparison of the approximate $\rho_{T, \mM}^{L, \idxcl\idxcl'}$ versus the empirical $\rho_{T, \mM}^{L, M, \idxcl\idxcl'}$.  Notice that $\rho_T^{L, 12}$/$\rho_T^{L, M, 12}$ and $\rho_T^{L, 12}$/$\rho_T^{L, M, 21}$ are the same distributions.
{\bf{Bottom}}: comparison of trajectories $\bX_{[0,T]}$ and $\hat\bX_{[0,T]}$. The trajectories $\bX_{[0,T]}$'s generated by the true interaction kernel $\intkernel_{\idxcl, \idxcl'}$; whereas $\smash{\hat\bX_{[0,T]}}$'s are trajectories generated by the estimator $\smash{\lintkernel_{\idxcl, \idxcl'}}$, with the same initial conditions. In the first row, trajectories are started from a randomly chosen initial condition. In the second row, trajectories are generated for a new system, with $N = 40$ agents. The colors along the trajectories indicate time, from deep blue/bright red (at $t = 0$) to light green/light yellow (at $t = T$).  The blue/green combo is assigned to tge preys; whereas red/yellow combo to the predator.}
\label{fig:PS1_results}
\end{figure}
\textbf{Predator-Swarm System (PS1)}: this is a heterogeneous agent system, which is used to model interactions between multiple types of animals \cite{CK2013, Olson_2016}.  The learning theory presented in section \ref{sec:learningtheory} is described for homogeneous agent systems, but the theory and the corresponding algorithms extend naturally to heterogeneous agent systems in a manner analogous to \cite{Tang2019, miller2020learning}.  In this case, there are different interaction kernels, $\intkernel_{k,k'}$, one for each (directed) interaction between agents of type $k$ and agents of type $k'$. In our example here there are two types, \{prey,predator\}, and therefore $4$ interaction kernels; however there is only one predator, so the interaction kernel predator-predator is $0$. 
The results are visualized in fig.\ref{fig:PS1_results}.
The (relative) errors of the estimators are in table \ref{tab:PS1_phiE_errs}.
\begin{table}[H]
\centering
\tiny{
\begin{tabular}{ c | c }
\hline
$\text{Err}_{1, 1}^{\mathbb{S}^2}=2.98 \cdot 10^{-1} \pm 5.9 \cdot 10^{-3}$ & $\text{Err}_{1, 2}^{\mathbb{S}^2}=8.4 \cdot 10^{-3} \pm 3.0 \cdot 10^{-4}$ \\
\hline
$\text{Err}_{2, 1}^{\mathbb{S}^2}=2.5 \cdot 10^{-2} \pm 1.6 \cdot 10^{-3}$  & $\text{Err}_{2, 2}^{\mathbb{S}^2}=0$ \\
\hline
\hline
$\text{Err}_{1, 1}^{\mathbb{PD}}=6.2 \cdot 10^{-2} \pm 3.0 \cdot 10^{-3}$  & $\text{Err}_{1, 2}^{\mathbb{PD}}=9.1 \cdot 10^{-4} \pm 4.8 \cdot 10^{-5}$ \\
\hline
$\text{Err}_{2, 1}^{\mathbb{PD}}=2.7 \cdot 10^{-3} \pm 1.4 \cdot 10^{-4}$  & $\text{Err}_{2, 2}^{\mathbb{PD}}=0$ \\
\hline
\end{tabular}  
}
\caption{(PS$1$ on $\mathbb{S}^2$ or $\mathbb{PD}$) Relative estimation errors for $\lintkernel$.}
\label{tab:PS1_phiE_errs} 
\end{table}
\begin{table}[H]
\vskip-0.5cm
\centering
\small{\begin{tabular}{| c || c |} 
\hline
        PS$1$                                          & $[0, T]$                                \\
\hline
$\text{mean}_{\text{IC}}^{\mathbb{S}^2}$: Training ICs & $2.36 \cdot 10^{-2} \pm 9.8 \cdot 10^{-4}$ \\
\hline          
$\text{mean}_{\text{IC}}^{\mathbb{S}^2}$: Random ICs   & $2.40 \cdot 10^{-2} \pm 8.1 \cdot 10^{-4}$ \\
\hline
\hline 
$\text{mean}_{\text{IC}}^{\mathbb{PD}}$: Training ICs & $6.3 \cdot 10^{-3} \pm 2.0 \cdot 10^{-4}$ \\
\hline
$\text{mean}_{\text{IC}}^{\mathbb{PD}}$: Random ICs   & $6.4 \cdot 10^{-3} \pm 2.2 \cdot 10^{-4}$ \\
\hline  
\end{tabular}}
\caption{As in table \ref{tab:traj_err}, but for the PS1 system.}
\label{tab:traj_errPS1}
\vskip-0.5cm
\end{table}
\textbf{Discussion}: As shown in the figures and tables in this section, the estimators not only provide close approximation to their corresponding interaction kernels $\intkernel$'s, but also capture additional information about the true interaction laws, e.g. the support. The accuracy on the trajectories is consistent with the theory, and the lack of overfitting and the ability to generalize well to predicting trajectories started at new random initial conditions, which in general are very far from any of the initial conditions in the training data, given the high-dimensionality of the state space. This is truly made possible because we have taken advantage of the symmetries in the system, in particular invariance of the governing equations under permutations of the agents (of the same type, in the case of heterogeneous agent systems, such as PS1), and radial symmetry of the interaction kernels. Further invariances, when the number of agents increases, make it possible to re-use the interaction kernel estimated on a system of $N$ agents to predict trajectories of a system with the same interaction kernel, but a different number of agents, which of course has a state space of different dimension. This admittedly simple example of transfer would not possible for general-purpose techniques that directly estimate the r.h.s. of the system of ODEs.
\section{Conclusion}
We have considered the problem of estimating the dynamics of a special yet widely used set of dynamical systems, consisting of interacting agents on Riemannian manifolds. 
These are driven by a first-order system of ODEs on the manifold, with a typically very high-dimensional state space $\mM^{N}$, where $N$ is the (typically large) number of agents. 
We constructed estimators that are optimal and avoid the curse of dimensionality, but exploiting the multiple symmetries in these systems, and the simplicity of the underlying interaction laws.
Extensions to more complex systems of interacting agents may be considered, in particular to second-order systems, which will require the use of parallel transport on $\mM$, to more general interaction kernels, depending on other variables beyond pairwise distances, as well as to systems interacting with a varying environment.
\section{Acknowledgment}
MM is grateful for partial support from NSF-1837991, NSF-1913243, NSF-1934979, NSF-Simons-2031985, FA9550-20-1-0288, ARO W911NF-18-C-0082, and to the Simons Foundation for the Simons Fellowship for the year '20-'21; please direct any correspondence to MZ\footnote{mzhong5@jhu.edu}.   Prisma Analytics, Inc. provided computing equipment and support.

MM and MZ designed the research; all authors jointly wrote the manuscript; HQ derived theoretical results together with JM and MZ; MZ developed algorithms and applications; JM and MZ analyzed the data.
\appendix
\section{Preliminaries}\label{sec:manifold}
In this work, $\mM$ is a connected, smooth, and geodesically complete $\sdim$-dimensional Riemannian manifold with Riemannian metric $g$.  For details regarding the basic definitions of Riemannian manifolds, geodesics, Riemannian distances, exponential maps, cut loci, and injectivity radii, please see \cite{Lee2003, docarmo1976}.  We will discuss how to find the minimal geodesic and the Riemannian distance between any two points on the two prototypical manifolds used in our numerical algorithms: the two-dimensional sphere ($\mathbb{S}^2$) and the Poincar\'{e} Disk ($\mathbb{PD}$).
\subsection{Riemannian Geometry on the $2D$ Sphere}
The $2D$ Sphere ($\mathbb{S}^2$) of radius $r$ and centered at the origin can be isometrically embedded in $\R^3$ in the natural way, i.e., $\bx, \by \in \mathbb{S}^2 \subset \R^3$.  Then for any $\bx, \by \in \mathbb{S}^2$, the Riemannian distance between $\bx$ and $\by$ is given by
\[
d_{\mM}(\bx, \by) = r\cdot\theta, \quad \theta = \text{acos} \bigg(\frac{\inprod{\bx, \by}}{\norm{\bx}\cdot\norm{\by}}\bigg).
\]
The minimal geodesic between $\bx$ and $\by$ is the piece of the arc on the great circle of $\mathbb{S}^2$ with the smallest length, assuming $\bx$ and $\by$ are not in each others' cut locus, i.e. diametrically opposed.  The unit vector on the minimal geodesic from $\bx$ to $\by$, denoted as $\bv(\bx, \by)$, can be computed as follows
\[
\bv(\bx, \by) = \frac{\by - \bx - \text{Proj}_{-\bx}(\by - \bx)}{\norm{\by - \bx - \text{Proj}_{-\bx}(\by - \bx)}}.
\]
Here $\text{Proj}_{\bu}(\bw)$ is the projection of $\bw$ onto $\bu$.  
\subsection{Riemannian Geometry on the Poincar\'{e} Disk}
For any two points $\bx, \by \in \mathbb{PD}$ on the Poincar\'{e} Disk ($\mathbb{PD}$) where $\mathbb{PD} \coloneqq \{\bx \in \R^2  \text{ s.t. } \norm{\bx} < 1\}$, the Riemannian metric, written in the standard coordinates of $\R^2$, is given by
\[
g_{i, j}(\bx) = \frac{4\delta_{i, j}}{(1 - \norm{\bx}^2)^2}, \quad \bx \in \mathbb{PD}\,,
\]
with $\delta_{i, j}$ being the Kronecker delta, and the corresponding Riemannian distance between $\bx$ and $\by$ is
\[
d_{\mM}(\bx, \by) = \text{acosh}\bigg(1 + \frac{\norm{\bx - \by}^2}{(1 - \norm{\bx}^2)(1 - \norm{\by}^2)}\bigg)\,.
\]
The minimal geodesics between $\bx$ and $\by$ are either straight line segments if $\bx$ and $\by$ are on a line through the origin or circular arc perpendicular to the boundary.  For the straight line segment case, we have the unit vector on the minimal geodesic from $\bx$ to $\by$, denoted as $\bv(\bx, \by)$, computed as follows: we identify the vector $\by - \bx$, computed in $\R^2$ as a tangent vector in $T_{\bx}\mM$, then normalize it to obtain $\bv(\bx, \by) = \frac{\by - \bx}{\norm{\by - \bx}_{T_{\bx}\mM}}$.  For the perpendicular arc case, we first find the inverse $\by'$ of $\by$ w.r.t to the unit disk (in $\R^2$); then we use the three points $\bx, \by, \by'$ to find the center $\bo'$ of the circle passing through $\bx, \by$ and $\by'$. Then the unit tangent vector on the geodesic from $\bx$ to $\by$ is computed as follows: , we compute $\by - \bx - \text{Proj}_{\bo' -\bx}(\by - \bx)$ in $\R^2$ (with the Euclidean metric), then identify it as a tangent vector in $T_{\bx}\mM$, and normalize it:
\[
\bv(\bx, \by) = \frac{\by - \bx - \text{Proj}_{\bo' -\bx}(\by - \bx)}{\norm{\by - \bx - \text{Proj}_{\bo' -\bx}(\by - \bx)}_{T_{\bx}\mM}}\,.
\]
\section{Learning Theory: Foundation}\label{sec:learn_proof}
In this section, we present the theoretical foundation needed to prove the theorems presented in the main body.  We follow the ideas presented in \cite{Lu2019a} with similar strategies presented in \cite{cucker2002mathematical, gyorfi2006distribution}.  We begin with the following assumption.
\begin{assumption}
$\mH$ is a compact (in $L^{\infty}$-norm) and convex subset of $L^2([0,R])$, such that every $\intkernelvar \in \mH$ is bounded above by some constant $S_0 \geq S$, i.e. $\norm{\intkernelvar}_{L^\infty([0,R])}\le S_0$; moreover $\intkernelvar$ is smooth enough to ensure the existence and uniqueness of solutions of 
\begin{equation}\label{eq:mainmodel SI}
\dot\bx_i(t) = \frac{1}{N}\sum_{i' = 1}^{N}\intkernel(d_{\mM}(\bx_i(t), \bx_{i'}(t)))\bw(\bx_i(t),\bx_{i'}(t)), \qquad i = 1, \ldots, N.
\end{equation}
for $t \in [0, T]$, i.e. $\intkernelvar \in \mH \cap \mK_{R, S_0}$.
\end{assumption}
Another important observation is that since $\intkernel \in \mK_{R, S}$ and $T$ is finite, the distribution of $\bx_i(t)$'s does not blow up over $[0,T]$ ensuring that the $\bx_i(t)$'s have bounded distance from the $\bx_i(0)$'s.  
In fact, let $R_0$ be the maximum Riemannian distance between any pair of agents at $t = 0$, then 
\[
\max_{i, i' = 1, \ldots, N} r_{i, i'}(t) = \max_{i, i' = 1, \ldots, N} d_{\mM}(\bx_i(t), \bx_{i'}(t)) \le R_0 + TRS, \quad \text{for $t \in [0, T]$}.
\]
Hence the $\bx_i(t)$'s live in a compact (w.r.t to the $d_{\mM}$ metric) ball around the $\bx_i(0)$'s, denoted as $\mB_{\mM}(\bX_0, R_1)$ where $R_1 = R_0 + TRS$.  Recall the definition of the loss functional used to find the estimator, namely $\lintkernel_{L, M, \mH}$ to the unknown interaction kernel $\intkernel$, give by
\begin{equation}\label{error functional SI}
  \mE_{L, M, \mM}(\intkernelvar) := \frac{1}{ML}\sum_{l, m = 1}^{L, M}\norm{\dotbXmtl - \mbf{f}^{\text{c}}_{\intkernelvar}(\bXmtl)}_{T_{\bXmtl}\mM^N}^2\,.
\end{equation}
Further recall that the estimator is defined as 
$\lintkernel_{L, M, \mH} \coloneqq \argmin{\varphi \in \mH}\mE_{L, M, \mM}(\intkernelvar)$.  When $M \rightarrow \infty$, we obtain the following loss functional (by the law of large numbers).
\begin{equation}\label{error functional expected SI}
  \mE_{L, \infty, \mM}(\intkernelvar) := \frac{1}{L}\sum_{l = 1}^{L}\E_{\bX_0 \sim \muX}\Big[\norm{\dotbXtl - \mbf{f}^{\text{c}}_{\intkernelvar}(\bXtl)}_{T_{\bXtl}\mM^N}^2\Big].
\end{equation}
The minimizer of $\mE_{L, \infty, \mM}$ over $\mH$ is defined as $\lintkernel_{L, \infty, \mH}$, which is closely related to $\lintkernel_{L, M, \mH}$ (in the $M \rightarrow \infty$ sense).  And they are close to $\intkernel$, when we establish the following condition on $\mH$.
\begin{definition}[Geometric Coercivity condition]\label{CoercivityCondition SI}
The geometric evolution system in \eqref{eq:mainmodel SI} with initial condition sampled from $\muX$ on $\mM^N$ is said to satisfy the geometric coercivity condition on the admissible hypothesis space $\mH$ if there exists a constant $c_{L, N, \mH, \mM} > 0$ such that for any $\intkernelvar \in \mH$ with $\intkernelvar(\cdot)\cdot \in L^2(\rho_{T, \mM}^L)$, the following inequality holds:
\begin{equation}\label{ccineq SI}
  c_{L, N, \mH, \mM}\norm{\intkernelvar(\cdot)\cdot}_{L^2(\rho_{T,\mM}^L)}^2 \leq \frac{1}{L}\sum_{l = 1}^{L}\E_{\bX_0 \sim \muX}\Big[\norm{\mbf{f}^{\text{c}}_{\intkernelvar}(\bXtl)}_{T_{\bXtl}\mM^N}^2\Big].
\end{equation}
\end{definition}
From this condition, we can derive the following theorem. \begin{theorem}\label{control bias SI}
Let $\intkernel \in L^2([0,R])$, and $\mH$ a compact (w.r.t the $L^{\infty}$ norm) and convex subset of $ L^2([0,R])$ such that the geometric coercivity condition \eqref{ccineq SI} holds with a constant $c_{L, N,\mH, \mM}$. Then, for $\lintkernel_{L, M, \mH}$, estimated by minimizing \eqref{error functional SI} on the trajectory data generated by \eqref{eq:mainmodel SI}, the following inequality
\begin{equation}\label{estimate of norm difference}
\norm{\lintkernel_{L, M, \mH}(\cdot)\cdot - \intkernel(\cdot)\cdot}_{L^2(\rho_{T, \mM}^L)}^2 \le \frac{2}{c_{L, N,\mH, \mM}}\Big(\epsilon + \inf\limits_{\intkernelvar \in \mH}\norm{\intkernelvar(\cdot)\cdot - \intkernel(\cdot)\cdot}_{L^2(\rho_{T, \mM}^L)}^2\Big)
\end{equation}
holds with probability at least $1 - \tau$, when $M \geq \frac{1152S_0^2R^2}{\epsilon c_{L, N, \mH, \mM}}\Big(\ln(\mN(\mH, \frac{\epsilon}{48S_0R^2})) + \ln(\frac{1}{\tau})\Big)$. Here $\mN(\mU, \epsilon)$ is the covering number of a set $\mU$ with open balls of radius $\epsilon$ w.r.t the $L^{\infty}$-norm.
\end{theorem}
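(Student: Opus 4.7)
My plan is the standard empirical-risk-minimization argument, with the geometric coercivity condition playing the role of a margin/variance-control condition. First, because the trajectories are generated by the true kernel, $\dot\bX_{t_l} = \mbf{f}^{\text{c}}_\intkernel(\bX_{t_l})$, and consequently the expected loss $\mE_{L,\infty,\mM}(\intkernelvar)$ reduces to $\dbinprod{\intkernelvar - \intkernel, \intkernelvar - \intkernel}_{L^2(\rho_{T,\mM}^L)}$. Extending the geometric coercivity condition to the difference $\intkernelvar - \intkernel$ through the quadratic form $\dbinprod{\cdot,\cdot}$ yields
\[
c_{L,N,\mH,\mM}\,\|\intkernelvar(\cdot)\cdot - \intkernel(\cdot)\cdot\|_{L^2(\rho_{T,\mM}^L)}^2 \;\le\; \mE_{L,\infty,\mM}(\intkernelvar) - \mE_{L,\infty,\mM}(\intkernel),
\]
so it suffices to bound the expected loss gap at $\intkernelvar = \lintkernel_{L,M,\mH}$.

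Next, pick $\intkernelvar^\star \in \mH$ within $\epsilon/4$ of the infimum of $\|\intkernelvar(\cdot)\cdot - \intkernel(\cdot)\cdot\|_{L^2(\rho_{T,\mM}^L)}^2$, and split the loss gap into an approximation term $\mE_{L,\infty,\mM}(\intkernelvar^\star) - \mE_{L,\infty,\mM}(\intkernel)$ (handled by the first step applied to $\intkernelvar^\star$) plus two concentration defects $[\mE_{L,\infty,\mM} - \mE_{L,M,\mM}](\lintkernel_{L,M,\mH})$ and $[\mE_{L,M,\mM} - \mE_{L,\infty,\mM}](\intkernelvar^\star)$; the remaining cross term $\mE_{L,M,\mM}(\lintkernel_{L,M,\mH}) - \mE_{L,M,\mM}(\intkernelvar^\star) \le 0$ drops out by empirical optimality of $\lintkernel_{L,M,\mH}$. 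For the concentration, define the ``defect function''
\[
Z_\intkernelvar(\bX_{[0,T]}) \;:=\; \tfrac{1}{L}\sum_{l=1}^{L} \|\mbf{f}^{\text{c}}_{\intkernel - \intkernelvar}(\bX_{t_l})\|_g^{2},
\]
which is what $\mE_{L,\infty,\mM}(\intkernelvar) - \mE_{L,\infty,\mM}(\intkernel)$ becomes after cross-term cancellation. The bounds $\|\intkernelvar - \intkernel\|_{L^\infty}\le 2S_0$ and $\|\bw(\bz_1,\bz_2)\|_g = d_\mM(\bz_1,\bz_2)\le R$ on $\mathrm{supp}(\intkernelvar - \intkernel)$ give $0 \le Z_\intkernelvar \le C S_0^2 R^2$ pointwise, hence $\mathrm{Var}(Z_\intkernelvar) \le \|Z_\intkernelvar\|_\infty \E[Z_\intkernelvar]$. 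Bernstein's inequality on the i.i.d.\ sample $\{Z_\intkernelvar(\bX^{m}_{[0,T]})\}_{m=1}^{M}$ then produces an exponential tail with rate $\sim M\epsilon/(S_0^2 R^2)$. Since $\intkernelvar \mapsto Z_\intkernelvar$ is Lipschitz in $L^\infty$ with constant $O(S_0 R^2)$, an $\tfrac{\epsilon}{48 S_0 R^2}$-cover of $\mH$ followed by a union bound extends this deviation uniformly over $\mH$. Under the hypothesized sample size $M \ge \frac{1152 S_0^2 R^2}{\epsilon c_{L,N,\mH,\mM}}\bigl(\ln\mN(\mH,\tfrac{\epsilon}{48 S_0 R^2}) + \ln\tfrac{1}{\tau}\bigr)$, both defects are bounded by $\epsilon\, c_{L,N,\mH,\mM}/4$ with probability at least $1-\tau$; dividing by $c_{L,N,\mH,\mM}$ yields the stated inequality.

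The main obstacle is the variance-to-mean inequality $\mathrm{Var}(Z_\intkernelvar) \lesssim \E[Z_\intkernelvar]$: this is what upgrades a crude $1/\sqrt{M}$ Hoeffding rate to the $1/M$-type rate reflected in the $\epsilon^{-1}$ scaling of the sample complexity. It is available precisely because $Z_\intkernelvar \ge 0$ after cross-term cancellation and is uniformly bounded by a constant depending only on $S_0$ and $R$; the latter hinges on the geometric identity $\|\bw(\bz_1,\bz_2)\|_g = d_\mM(\bz_1,\bz_2)$ together with the effective bound $d_\mM \le R$ on the interaction support, which keeps the Riemannian setting on the same footing as the Euclidean one. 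Once this is in place, the rest (Bernstein, Lipschitz constants feeding the covering number, and union-bound bookkeeping to pin down the constants $1152$ and $48$) is entirely routine.
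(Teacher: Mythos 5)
Your overall skeleton (coercivity to convert an excess-risk bound into an $L^2(\rho_{T,\mM}^L)$ bound, an oracle decomposition using a near-minimizer $\intkernelvar^\star$ and empirical optimality, then concentration plus an $L^\infty$-cover of $\mH$) is reasonable, but the concentration step as you state it has a genuine gap, and it is exactly the step where the paper's argument is structurally different. You claim that, at sample size $M\gtrsim \frac{S_0^2R^2}{\epsilon\, c_{L,N,\mH,\mM}}\big(\ln\mN+\ln\frac1\tau\big)$, \emph{both} defects $[\mE_{L,\infty,\mM}-\mE_{L,M,\mM}](\lintkernel_{L,M,\mH})$ and $[\mE_{L,M,\mM}-\mE_{L,\infty,\mM}](\intkernelvar^\star)$ can be bounded in absolute terms by $\epsilon\, c_{L,N,\mH,\mM}/4$ via Bernstein with the variance-to-mean inequality $\mathrm{Var}(Z_\intkernelvar)\le \norm{Z_\intkernelvar}_\infty\,\E[Z_\intkernelvar]$. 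This does not follow: for hypotheses $\intkernelvar$ whose expected loss $\E[Z_\intkernelvar]$ is of order $S_0^2R^2$ (and nothing prevents $\lintkernel_{L,M,\mH}$, a priori, or $\intkernelvar^\star$ from being such a function, since the approximation error need not be small), the variance is of order $(S_0^2R^2)^2$ and Bernstein's quadratic regime forces $M\gtrsim S_0^4R^4/(\epsilon c)^2$, i.e.\ an $\epsilon^{-2}$ sample complexity, not the $\epsilon^{-1}$ in the theorem. The variance-to-mean inequality only buys the $1/M$-type rate when the deviation is measured \emph{relative} to the mean, i.e.\ when one bounds a ratio such as $\frac{\E Z_\intkernelvar-\widehat Z_\intkernelvar}{\E Z_\intkernelvar+\epsilon}$ by a constant, so that large means sit in the denominator; insisting on a uniform additive bound of size $\epsilon c/4$ throws that advantage away.

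This is precisely what the paper's route supplies and yours is missing. The paper defines the defect $\mD_{L,M,\mH}(\intkernelvar)=\mE_{L,M,\mM}(\intkernelvar)-\mE_{L,M,\mM}(\lintkernel_{L,\infty,\mH})$ relative to the minimizer of the \emph{expected} loss over $\mH$ (not relative to $\intkernel$ or to a near-minimizer), proves the ratio-type uniform bound of Proposition \ref{uniform bound} (with Lemma \ref{lemma defect} handling the covering at radius $\frac{\alpha\epsilon}{8S_0R^2}$), and uses coercivity \emph{together with convexity of $\mH$} (Proposition \ref{uniqueness}) to relate $\mD_{L,\infty,\mH}$ to $\norm{\intkernelvar(\cdot)\cdot-\lintkernel_{L,\infty,\mH}(\cdot)\cdot}_{L^2(\rho_{T,\mM}^L)}^2$; the choice of reference point makes $\mD_{L,M,\mH}(\lintkernel_{L,M,\mH})\le 0$ exactly, so only one concentration event is needed, and taking $\alpha=1/6$ in the ratio bound produces both the constants $48$ and $1152$ and the factor $\frac{2}{c_{L,N,\mH,\mM}}$ multiplying $\big(\epsilon+\inf_{\intkernelvar\in\mH}\norm{\intkernelvar(\cdot)\cdot-\intkernel(\cdot)\cdot}^2\big)$ — that factor $2$ on the approximation term is itself a fingerprint of the multiplicative argument, which an additive decomposition like yours cannot reproduce at the stated sample complexity. (Your direct application of coercivity to the difference $\intkernelvar-\intkernel$ mirrors what the paper itself does informally in Section \ref{sec:gcc}, so I would not count that against you; note also that you never invoke convexity of $\mH$, which the paper's Proposition \ref{uniqueness} does need.) To repair your proof, replace the additive Bernstein step by the relative deviation bound of Proposition \ref{uniform bound} (or re-derive it: one-sided Bernstein in the ratio form for a single $\intkernelvar$, the $L^\infty$-Lipschitz perturbation lemma, then a union bound over the cover), and then either adopt the paper's reference point $\lintkernel_{L,\infty,\mH}$ or redo your $\intkernelvar^\star$ bookkeeping multiplicatively, accepting the constant $2$ in front of the infimum.
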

Using this concentration result, we can get the strong consistency of our estimators under mild hypotheses.  
\begin{theorem}\label{consistency SI}
For a family of compact (w.r.t. the $L^{\infty}$ norm) convex subsets, $\{\mH_M\}_{M = 1}^{\infty}$, of $L^2([0,R])$, when the following conditions hold, (i) $\cup_M \mH_M$ is compact in $L^{\infty}$; (ii) the geometric coercivity condition, \eqref{CoercivityCondition SI}, holds on $\cup_M \mH_M$;
(iii) $\inf\limits_{\intkernelvar \in \mH_M} \norm{\intkernelvar(\cdot)\cdot - \intkernel(\cdot)\cdot}_{L^2(\rho_{T, \mM}^L)} \stackrel{M\to\infty}{\longrightarrow} 0$,
then 
\begin{equation}
\lim_{M \to \infty}\norm{\lintkernel_{L, M, \mH_M}(\cdot)\cdot - \intkernel(\cdot)\cdot}_{L^2(\rho_{T, \mM}^L)}  = 0 \hspace{0.5 cm} a.s.
\end{equation}  
This theorem establishes the almost sure convergence of our estimator to the true interaction kernel as $M \to \infty$. 
\end{theorem}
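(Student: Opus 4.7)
The plan is to convert the high-probability bound of Theorem \ref{control bias SI} into an almost-sure statement via the first Borel--Cantelli lemma, by choosing deterministic sequences $\epsilon_M\to 0$ and $\tau_M$ summable in $M$. Condition (iii) directly gives that the approximation term $\mathrm{app}_M \coloneqq \inf_{\intkernelvar\in\mH_M}\norm{\intkernelvar(\cdot)\cdot-\intkernel(\cdot)\cdot}_{L^2(\rho_{T,\mM}^L)}^2$ tends to $0$, and condition (ii) supplies a single coercivity constant $c \coloneqq c_{L,N,\cup_j\mH_j,\mM}>0$ that, via the inclusion $\mH_M\subseteq \cup_j\mH_j$, serves as a valid coercivity constant on every $\mH_M$ simultaneously.

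The real work lies in verifying the sample-size hypothesis of Theorem \ref{control bias SI} for the chosen sequences. Condition (i) is the key ingredient: $L^\infty$-compactness of $\cup_j\mH_j$ yields $\mN(\mH_M,\delta)\le \mN(\cup_j\mH_j,\delta)<\infty$, a quantity depending on $\delta$ alone and independent of $M$. I would take $\epsilon_M = M^{-\alpha}$ for some fixed $\alpha\in(0,1)$ and $\tau_M = M^{-2}$, so that $\ln(1/\tau_M)=2\ln M$ while $\ln \mN(\mH_M,\epsilon_M/(48 S_0 R^2))$ is bounded by the (finite) $L^\infty$ metric entropy of $\cup_j\mH_j$ at scale $\Theta(M^{-\alpha})$, and is in particular $o(M^\alpha)$. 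Substituting these into
\[
M \;\ge\; \frac{1152\, S_0^2 R^2}{\epsilon_M\, c}\Big(\ln \mN(\mH_M,\tfrac{\epsilon_M}{48 S_0 R^2}) + \ln(1/\tau_M)\Big)
\]
shows that the right-hand side grows strictly slower than $M$, so the hypothesis holds for all sufficiently large $M$. Theorem \ref{control bias SI} then gives, for all such $M$,
\[
\mathbb{P}\Big[\,\norm{\lintkernel_{L,M,\mH_M}(\cdot)\cdot-\intkernel(\cdot)\cdot}_{L^2(\rho_{T,\mM}^L)}^2 \;>\; \tfrac{2}{c}\bigl(\epsilon_M+\mathrm{app}_M\bigr)\,\Big] \;\le\; M^{-2}.
\]

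With all trajectory data realized jointly on a single product probability space, summability of $\sum_M M^{-2}$ together with the first Borel--Cantelli lemma implies that, almost surely, only finitely many of these exceptional events occur. Hence for $\mathbb{P}$-almost every realization,
\[
\limsup_{M\to\infty}\norm{\lintkernel_{L,M,\mH_M}(\cdot)\cdot-\intkernel(\cdot)\cdot}_{L^2(\rho_{T,\mM}^L)}^2 \;\le\; \frac{2}{c}\lim_{M\to\infty}\bigl(\epsilon_M+\mathrm{app}_M\bigr) \;=\; 0,
\]
which is the claim. The main subtlety is structural rather than computational: each of the three hypotheses is used in an essential way, and condition (i) cannot be weakened to mere compactness of each $\mH_M$ individually, because what is needed is a covering-number bound that is uniform in $M$. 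Obtaining a rate of convergence from this argument would require quantitative control on how $\mN(\cup_j\mH_j,\delta)$ grows as $\delta\to 0$ and on how fast $\mathrm{app}_M$ decays, which is precisely the refinement carried out in Theorem \ref{optimal rate of convergence}.
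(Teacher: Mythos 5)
Your overall strategy---feeding deterministic sequences into the concentration bound of Theorem \ref{control bias SI} and then invoking Borel--Cantelli on a common probability space---is the right one, and it is essentially the route the paper intends (the paper defers the details to the analogous consistency proofs in \cite{Lu2019a, Tang2019, miller2020learning}, which proceed exactly this way). You also use the three hypotheses correctly in their respective roles: the coercivity constant of the union transfers to each $\mH_M$, and (iii) disposes of the approximation term.

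There is, however, one genuine gap: the step where you assert that $\ln\mN\big(\cup_j\mH_j,\Theta(M^{-\alpha})\big)$ is $o(M^{\alpha})$ (note also that what the sample-size hypothesis actually requires is $O(M^{1-\alpha})$, since the right-hand side carries the prefactor $\epsilon_M^{-1}=M^{\alpha}$). Compactness of $\cup_j\mH_j$ in $L^{\infty}$ guarantees only that $\mN(\cup_j\mH_j,\delta)<\infty$ for each \emph{fixed} $\delta>0$; it imposes no rate whatsoever on the growth of the metric entropy as $\delta\to 0$, and one can construct compact subsets of an infinite-dimensional Banach space whose entropy grows faster than any prescribed function of $1/\delta$. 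So the choice $\epsilon_M=M^{-\alpha}$ cannot be justified from hypothesis (i) alone. The standard repair needs no entropy rate: fix $\epsilon>0$, so that $N_\epsilon\coloneqq\mN(\cup_j\mH_j,\tfrac{\epsilon}{48S_0R^2})$ is a single finite number bounding $\mN(\mH_M,\tfrac{\epsilon}{48S_0R^2})$ for every $M$; inverting the sample-size condition of Theorem \ref{control bias SI} shows the failure probability at sample size $M$ is at most $\tau_M=N_\epsilon\exp\big(-\tfrac{c\,\epsilon\,M}{1152S_0^2R^2}\big)$, which is summable. Borel--Cantelli then gives $\limsup_{M}\norm{\lintkernel_{L,M,\mH_M}(\cdot)\cdot-\intkernel(\cdot)\cdot}_{L^2(\rho_{T,\mM}^L)}^2\le\tfrac{2}{c}\,\epsilon$ almost surely, and intersecting the resulting full-measure events over $\epsilon=1/k$, $k\in\mathbb{N}$, yields the theorem. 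Your closing structural remarks (why (i) cannot be weakened to compactness of each $\mH_M$ separately, and why a rate would require quantitative entropy control) are correct and consistent with this fix.
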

\subsection{Concentration and Consistency}
Our first step is to establish the consistency of the estimator for the true kernel $\intkernel$ of the system.  Note that $\mathcal{H}$ can be embedded as a compact (in $L^{\infty}$ sense) set of $L^2(\rho_{T, \mM}^L)$. We establish a strong consistency result on our estimators of the form,
\[
\lim\limits_{M \to \infty}\norm{\lintkernel_{L, M}(\cdot)\cdot - \intkernel(\cdot)\cdot}_{L^2(\rho_{T, \mM}^L)} = 0, \, a.s.
\] 
Our discussions of consistency under the $L^2-$norm on manifolds can be regarded as a natural extension from the case on Euclidean Space in \cite{Lu2019a}.  We define the following loss functional of the vectorized system, $\bX_t$ 
\begin{align}\label{loss at t}
\mE_{\bX_t}(\intkernelvar) &:= \frac{1}{N}\sum_{i = 1}^N\norm{\frac{1}{N}\sum_{i' = 1}^N(\intkernel_{ii', t} - \intkernelvar_{ii', t})\bw_{ii', t}}^2_{T_{\bx_i(t)}\mM} \nonumber \\
                            &= \frac{1}{N}\sum_{i = 1}^N\inprod{\frac{1}{N}\sum_{i' = 1}^N(\intkernel_{ii', t} - \intkernelvar_{ii', t})\bw_{ii', t}, \frac{1}{N}\sum_{i'' = 1}^N(\intkernel_{ii'', t} - \intkernelvar_{ii'', t})\bw_{ii'', t}}_{g(\bx_i(t))}.
\end{align}
Here we take $\bw_{ii', t} = d_{\mM}(\bx_i(t), \bx_{i'}(t))\bv(\bx_i(t), \bx_{i'}(t))$ and $\intkernel_{ii', t} = \intkernel(d_{\mM}(\bx_i(t), \bx_{i'}(t))$; similarly for $\intkernelvar_{ii', t}$.
Now we can see that 
\[
\mE_{L, M,\mM}(\intkernelvar) = \frac{1}{LM}\sum_{l, m = 1}^{L, M}\mE_{\bX^{m}_{t_l}}(\intkernelvar).
\]
When $M \to \infty$, we have, by the law of large numbers,
\[
\mE_{L, \infty, \mM}(\intkernelvar) = \frac{1}{L}\sum_{l = 1}^L\E_{\bX_0 \sim \muX}\mE_{\bX_{t_l}}(\intkernelvar)\,.
\]
We are ready to summarize some basic properties of $\mE_{\bX_t}(\intkernelvar)$. 
\begin{prop}\label{prop:continuity}
For $\intkernelvar_1, \intkernelvar_2 \in \mH$, we have
\begin{equation}\label{eq:continuity}
\abs{\mE_{\bX_t}(\intkernelvar_1) - \mE_{\bX_t}(\intkernelvar_2)} \leq \norm{\intkernelvar_1(\cdot)\cdot - \intkernelvar_2(\cdot)\cdot}_{L^2(\hat\rho^t_{\mM})}\norm{2\intkernel(\cdot)\cdot - \intkernelvar_1(\cdot)\cdot - \intkernelvar_2(\cdot)\cdot}_{L^2(\hat\rho^t_{\mM})}.
\end{equation}
Here we define the probability measure, $\hat{\rho}^t_{\mM}(r) \coloneqq \frac{1}{N^2}\sum_{i, i' =1}^{N}\delta_{d_{\mM}(\bx_i(t), \bx_{i'}(t))}(r)$.
\end{prop}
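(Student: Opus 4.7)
The plan is to rewrite $\mE_{\bX_t}(\intkernelvar_1)-\mE_{\bX_t}(\intkernelvar_2)$ via the polarization identity $\|a\|^2-\|b\|^2=\langle a-b,a+b\rangle$ at each agent $i$, then apply Cauchy--Schwarz twice: once inside each tangent space $T_{\bx_i(t)}\mM$, and once over the index $i$. The key algebraic observation is that $\|\bw_{ii',t}\|_{T_{\bx_i(t)}\mM}=d_{\mM}(\bx_i(t),\bx_{i'}(t))=r_{ii',t}$, which converts the tangent-vector norms into the $r$ factors that appear in the weighted norm $\|\intkernelvar(\cdot)\cdot\|_{L^2(\hat\rho^t_{\mM})}$.

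Concretely, set
\[
a_i \coloneqq \tfrac{1}{N}\sum_{i'}(\intkernel_{ii',t}-\intkernelvar_{1,ii',t})\bw_{ii',t},
\qquad
b_i \coloneqq \tfrac{1}{N}\sum_{i'}(\intkernel_{ii',t}-\intkernelvar_{2,ii',t})\bw_{ii',t}.
\]
Then $a_i-b_i = \tfrac{1}{N}\sum_{i'}(\intkernelvar_{2,ii',t}-\intkernelvar_{1,ii',t})\bw_{ii',t}$ and $a_i+b_i = \tfrac{1}{N}\sum_{i'}(2\intkernel_{ii',t}-\intkernelvar_{1,ii',t}-\intkernelvar_{2,ii',t})\bw_{ii',t}$. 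By \eqref{loss at t} and polarization in $T_{\bx_i(t)}\mM$,
\[
\mE_{\bX_t}(\intkernelvar_1)-\mE_{\bX_t}(\intkernelvar_2) \;=\; \frac{1}{N}\sum_{i=1}^{N}\inprod{a_i-b_i,\,a_i+b_i}_{g(\bx_i(t))},
\]
so the Cauchy--Schwarz inequality in each $T_{\bx_i(t)}\mM$ yields $|\mE_{\bX_t}(\intkernelvar_1)-\mE_{\bX_t}(\intkernelvar_2)|\le \tfrac{1}{N}\sum_i \|a_i-b_i\|\,\|a_i+b_i\|$.

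Next I bound each $\|a_i-b_i\|^2$ and $\|a_i+b_i\|^2$ by Jensen's inequality applied to the average over $i'$: for any vectors $\{v_{i'}\}$, $\|\tfrac{1}{N}\sum_{i'} v_{i'}\|^2 \le \tfrac{1}{N}\sum_{i'}\|v_{i'}\|^2$. Using $\|\bw_{ii',t}\|_{T_{\bx_i(t)}\mM}=r_{ii',t}$, I obtain
\[
\|a_i-b_i\|^2 \le \frac{1}{N}\sum_{i'}\bigl(\intkernelvar_2(r_{ii',t})\,r_{ii',t}-\intkernelvar_1(r_{ii',t})\,r_{ii',t}\bigr)^2,
\]
and an analogous bound for $\|a_i+b_i\|^2$ with integrand $\bigl(2\intkernel(r_{ii',t})r_{ii',t}-\intkernelvar_1(r_{ii',t})r_{ii',t}-\intkernelvar_2(r_{ii',t})r_{ii',t}\bigr)^2$. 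A second application of Cauchy--Schwarz, now to the sum over $i$, gives
\[
\frac{1}{N}\sum_i \|a_i-b_i\|\,\|a_i+b_i\| \le \Bigl(\tfrac{1}{N}\sum_i \|a_i-b_i\|^2\Bigr)^{1/2}\Bigl(\tfrac{1}{N}\sum_i \|a_i+b_i\|^2\Bigr)^{1/2}.
\]

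Finally, the combined double sums $\tfrac{1}{N^2}\sum_{i,i'}$ are exactly integration against $\hat\rho^t_{\mM}$, so the two factors are $\|\intkernelvar_1(\cdot)\cdot-\intkernelvar_2(\cdot)\cdot\|_{L^2(\hat\rho^t_{\mM})}$ and $\|2\intkernel(\cdot)\cdot-\intkernelvar_1(\cdot)\cdot-\intkernelvar_2(\cdot)\cdot\|_{L^2(\hat\rho^t_{\mM})}$, which proves \eqref{eq:continuity}. The proof is essentially a routine chain of Cauchy--Schwarz/Jensen estimates; the only conceptual point that must be handled carefully is the identification of the weighted double-sum $\tfrac{1}{N^2}\sum_{i,i'}(\cdot)^2 r_{ii',t}^2$ with the $L^2(\hat\rho^t_{\mM})$-norm of $\intkernelvar(\cdot)\cdot$, which works precisely because $\bw$ carries a factor of $r_{ii',t}$ and $\hat\rho^t_{\mM}$ is the empirical distribution of pairwise Riemannian distances.
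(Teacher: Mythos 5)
Your proof is correct and follows essentially the same route as the paper's: polarization of the squared residuals, Cauchy--Schwarz in each tangent space, Jensen over $i'$, Cauchy--Schwarz over $i$, and identification of the resulting double sum with the $L^2(\hat\rho^t_{\mM})$ norms. The only minor imprecision is that $\norm{\bw_{ii',t}}_{T_{\bx_i(t)}\mM}\le r_{ii',t}$ rather than $=$ (equality fails when $\bv=\mbf{0}$, e.g.\ for $i=i'$ or cut-locus pairs), but only the upper bound is needed, so the argument is unaffected.
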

\begin{proof}
Let $\intkernelvar_1, \intkernelvar_2 \in \mH$, and define $\intkernelvar_{ii', t}^1 \coloneqq \intkernelvar_1(d_{\mM}(\bx_{i}(t), \bx_{i'}(t)))$, similarly for $\intkernelvar_{ii', t}^2$.  Moreover, let $r_{ii', t} \coloneqq d_{\mM}(\bx_i(t), \bx_{i'}(t))$ and $\bw_{ii', t} \coloneqq d_{\mM}(\bx_i(t), \bx_{i'}(t))\bv(\bx_i(t), \bx_{i'}(t))$.  Immediately, we have
\[
\norm{\bw_{ii', t}}_{T_{\bx_i(t)}\mM} \le r_{ii', t},
\]
since $\bv(\bx_i(t), \bx_{i'}(t))$ has either length $1$ or $0$.  Next, using Jensen's inequality, we have 
\begin{align*}
\abs{\mE_{\bX_t}(\intkernelvar_1) - \mE_{\bX_t}(\intkernelvar_2)}
&= \abs{\frac{1}{N}\sum_{i = 1}^N\inprod{\frac{1}{N}\sum_{i' = 1}^N(\intkernelvar_{ii', t}^1 - \intkernelvar_{ii', t}^2)\bw_{ii', t}, \frac{1}{N}\sum_{i'' = 1}^N(2\intkernel_{ii'', t} - \intkernelvar_{ii'', t}^1 - \intkernelvar_{ii'', t}^2)\bw_{ii'', t}}_{g(\bx_i(t))}}\\
&\le \scalebox{0.9}{$\frac{1}{N}\sum_{i = 1}^N\norm{\frac{1}{N}\sum_{i' = 1}^N(\intkernelvar_{ii', t}^1 - \intkernelvar_{ii', t}^2)\bw_{ii', t}}_{T_{\bx_i(t)}\mM}  \norm{\frac{1}{N}\sum_{i'' = 1}^N(2\intkernel_{ii'', t} - \intkernelvar_{ii'', t}^1 - \intkernelvar_{ii'', t}^2)\bw_{ii'', t}}_{T_{\bx_i(t)}\mM}$} \\
&\le \sqrt{\frac{1}{N^2}\sum_{i, i' = 1}^N(\intkernelvar_{ii', t}^1 - \intkernelvar_{ii', t}^2)r_{ii', t}^2}\sqrt{\frac{1}{N^2}\sum_{i, i'' = 1}^N(2\intkernel_{ii'', t} - \intkernelvar_{ii', t}^1 - \intkernelvar_{ii', t}^2)r_{ii'', t}^2} \\
&\le\norm{\intkernelvar_1(\cdot)\cdot - \intkernelvar_2(\cdot)\cdot)}_{L^2(\hat\rho^t_{\mM})}\norm{2\intkernel(\cdot)\cdot - \intkernelvar_1(\cdot)\cdot - \intkernelvar_2(\cdot)\cdot}_{L^2(\hat\rho^t_{\mM})}, 
\end{align*}
where 
$
\hat\rho^t_{\mM}(r) = \frac{1}{N^2}\sum_{i, i' = 1}^N\delta_{r_{ii', t}}(r)
$.
\end{proof}
With Proposition \ref{prop:continuity} proven, we get the following proposition establishing the continuity of our error functionals.
\begin{prop}
For $\intkernelvar_1, \intkernelvar_2 \in \mH$, we have the inequalities
\begin{equation}\label{ELM}
\begin{aligned}
\abs{\mE_{L, M,\mM}(\intkernelvar_1) - \mE_{L, M,\mM}(\intkernelvar_2)} &\le \norm{\intkernelvar_1(\cdot)\cdot - \intkernelvar_2(\cdot)\cdot}_{L^\infty}\norm{2\intkernel(\cdot)\cdot - \intkernelvar_1(\cdot)\cdot - \intkernelvar_2(\cdot)\cdot}_{L^\infty}\\
\abs{\mE_{L, \infty,\mM}(\intkernelvar_1) - \mE_{L, \infty,\mM}(\intkernelvar_2)} &\leq \norm{\intkernelvar_1(\cdot)\cdot - \intkernelvar_2(\cdot)\cdot}_{L^2(\rho_{T,\mM}^L)}\norm{2\intkernel(\cdot)\cdot - \intkernelvar_1(\cdot)\cdot - \intkernelvar_2(\cdot)\cdot}_{L^2(\rho_{T,\mathcal{M}}^L)}\,.
\end{aligned}
\end{equation}
\end{prop}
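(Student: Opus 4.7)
The plan is to leverage Proposition \ref{prop:continuity}, which gives a pointwise (in $\bX_t$) bound controlled by the $L^2(\hat\rho_{\mM}^t)$ norms, and then average or take expectation to obtain the two claimed inequalities. Recall the identities $\mE_{L, M, \mM}(\intkernelvar) = \frac{1}{LM}\sum_{l, m}\mE_{\bX^{m}_{t_l}}(\intkernelvar)$ and $\mE_{L, \infty, \mM}(\intkernelvar) = \frac{1}{L}\sum_l \E\bigl[\mE_{\bX_{t_l}}(\intkernelvar)\bigr]$; these allow me to pull differences inside the sum and expectation via the triangle inequality.

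For the first inequality, I would apply the triangle inequality and then Proposition \ref{prop:continuity} at each $(l, m)$ to get
\[
\abs{\mE_{L, M, \mM}(\intkernelvar_1) - \mE_{L, M, \mM}(\intkernelvar_2)} \le \frac{1}{LM}\sum_{l, m}\norm{\intkernelvar_1(\cdot)\cdot - \intkernelvar_2(\cdot)\cdot}_{L^2(\hat\rho_{\mM}^{t_l, m})}\norm{2\intkernel(\cdot)\cdot - \intkernelvar_1(\cdot)\cdot - \intkernelvar_2(\cdot)\cdot}_{L^2(\hat\rho_{\mM}^{t_l, m})}.
\]
Since each $\hat\rho_{\mM}^{t_l, m}$ is a probability measure, I have the trivial bound $\norm{\psi(\cdot)\cdot}_{L^2(\hat\rho_{\mM}^{t_l, m})} \leq \norm{\psi(\cdot)\cdot}_{L^{\infty}}$ for every $\psi$; applying this to both factors (which are independent of $l, m$) allows me to pull the $L^\infty$ norms outside, yielding the first claim after the sum $\frac{1}{LM}\sum_{l,m}1 = 1$ collapses.

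For the second inequality, I proceed analogously, reaching
\[
\abs{\mE_{L, \infty, \mM}(\intkernelvar_1) - \mE_{L, \infty, \mM}(\intkernelvar_2)} \le \frac{1}{L}\sum_l \E\Bigl[\norm{\intkernelvar_1(\cdot)\cdot - \intkernelvar_2(\cdot)\cdot}_{L^2(\hat\rho_{\mM}^{t_l})}\norm{2\intkernel(\cdot)\cdot - \intkernelvar_1(\cdot)\cdot - \intkernelvar_2(\cdot)\cdot}_{L^2(\hat\rho_{\mM}^{t_l})}\Bigr].
\]
Now I apply the Cauchy--Schwarz inequality with respect to the joint averaging $\frac{1}{L}\sum_l \E[\cdot]$ to separate the two factors, producing
\[
\sqrt{\frac{1}{L}\sum_l \E\bigl[\norm{\intkernelvar_1(\cdot)\cdot - \intkernelvar_2(\cdot)\cdot}^2_{L^2(\hat\rho_{\mM}^{t_l})}\bigr]} \cdot \sqrt{\frac{1}{L}\sum_l \E\bigl[\norm{2\intkernel(\cdot)\cdot - \intkernelvar_1(\cdot)\cdot - \intkernelvar_2(\cdot)\cdot}^2_{L^2(\hat\rho_{\mM}^{t_l})}\bigr]}.
\]
To conclude, I must identify, for any $\psi$, the quantity $\frac{1}{L}\sum_l \E\bigl[\norm{\psi(\cdot)\cdot}^2_{L^2(\hat\rho_{\mM}^{t_l})}\bigr]$ with $\norm{\psi(\cdot)\cdot}^2_{L^2(\rho_{T, \mM}^L)}$. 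Unpacking the definitions, both quantities reduce to an averaged expectation of $\sum_{i,i'}|\psi(r_{ii',t_l}) r_{ii',t_l}|^2$; the $i=i'$ contribution vanishes because of the factor $r_{ii',t_l}=0$, and the symmetry $r_{ii',t_l}=r_{i'i,t_l}$ reconciles the sums over unordered pairs versus ordered pairs with the normalizations $\binom{N}{2}$ versus $N^2$ (up to the convention used for the measure). Once this identification is made, the bound takes the claimed form.

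The main obstacle is the bookkeeping in the last step: carefully matching the normalization of $\hat\rho_{\mM}^t$ (sum over all $N^2$ ordered pairs, divided by $N^2$) with that of $\rho_{T, \mM}^L$ (sum over the $\binom{N}{2}$ unordered pairs, divided by $\binom{N}{2}$), using that the diagonal terms contribute zero to the weighted norm. The rest of the argument is essentially a direct assembly of Proposition \ref{prop:continuity}, Cauchy--Schwarz and the probability-measure bound $L^2 \le L^\infty$, so no new analytical ingredient is needed.
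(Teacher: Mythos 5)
Your proposal is correct and follows essentially the same route as the paper: both arguments decompose the loss functionals into the per-snapshot losses $\mE_{\bX_{t_l}}$, invoke Proposition \ref{prop:continuity}, apply Cauchy--Schwarz over the time/expectation average, and use that each $\hat\rho^{t}_{\mM}$ is a probability measure to pass to the $L^\infty$ bound for the first inequality. The only differences are organizational --- you bound $\mE_{L,\infty,\mM}$ directly rather than taking $M\to\infty$ of the empirical bound, and you spell out the $N^2$-versus-$\binom{N}{2}$ normalization (with vanishing diagonal terms) more explicitly than the paper does --- neither of which changes the substance of the argument.
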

\begin{proof}
Using the results from Prop. \ref{prop:continuity}, and defining $\hat\rho_{T, \mM}^L \coloneqq \frac{1}{L}\sum_{l = 1}^L\hat\rho^{t_l}_{\mM}$, we have
\begin{align*}
\abs{\frac{1}{L}\sum_{l = 1}^L \mE_{\bX_{t_l}}(\intkernelvar_1) - \frac{1}{L}\sum_{l = 1}^L \mE_{\bX_{t_l}}(\intkernelvar_2)} 
&\le \frac{1}{L}\sum_{l = 1}^L\abs{\mE_{\bX_{t_l}}(\intkernelvar_1) - \mE_{\bX_{t_l}}(\intkernelvar_2)} \\
&< \frac{1}{L}\sum_{l = 1}^L\norm{\intkernelvar_1(\cdot)\cdot - \intkernelvar_2(\cdot)\cdot}_{L^2(\hat\rho^t_{\mM})}\norm{2\intkernel(\cdot)\cdot - \intkernelvar_1(\cdot)\cdot - \intkernelvar_2(\cdot)\cdot}_{L^2(\hat\rho^t_{\mM})} \\
&\le \sqrt{\frac{1}{L}\sum_{1 = 1}^L\norm{\intkernelvar_1(\cdot)\cdot - \intkernelvar_2(\cdot)\cdot}_{L^2(\hat\rho^t_{\mM})}}\sqrt{\frac{1}{L}\sum_{l = 1}^L\norm{2\intkernel(\cdot)\cdot - \intkernelvar_1(\cdot)\cdot - \intkernelvar_2(\cdot)\cdot}_{L^2(\hat\rho^t_{\mM})}} \\
&= \norm{\intkernelvar_1(\cdot)\cdot - \intkernelvar_2(\cdot)\cdot}_{L^2(\hat\rho_{T, \mM}^L)}\norm{2\intkernel(\cdot)\cdot - \intkernelvar_1(\cdot)\cdot - \intkernelvar_2(\cdot)\cdot}_{L^2(\hat\rho_{T, \mM}^L)}
\end{align*}
Next, we have
\begin{align*}
\abs{\mE_{L, M,\mM}(\intkernelvar_1) - \mE_{L, M,\mM}(\intkernelvar_2)} &\le \frac{1}{M}\sum_{m = 1}^M\abs{\frac{1}{L}\sum_{l = 1}^L \mE_{\bX_{t_l}^{m}}(\intkernelvar_1) - \frac{1}{L}\sum_{l = 1}^L \mE_{\bX_{t_l}^{m}}(\intkernelvar_2)}\\
&\le \frac{1}{M}\sum_{m = 1}^M\norm{\intkernelvar_1(\cdot)\cdot - \intkernelvar_2(\cdot)\cdot}_{L^2(\hat\rho_{T, \mM}^L)}\norm{2\intkernel(\cdot)\cdot - \intkernelvar_1(\cdot)\cdot - \intkernelvar_2(\cdot)\cdot}_{L^2(\hat\rho_{T, \mM}^L)} \\
&\le \norm{\intkernelvar_1(\cdot)\cdot - \intkernelvar_2(\cdot)\cdot}_{L^{\infty}}\norm{2\intkernel(\cdot)\cdot - \intkernelvar_1(\cdot)\cdot - \intkernelvar_2(\cdot)\cdot}_{L^{\infty}}\\
&\le R^2\norm{\intkernelvar_1 - \intkernelvar_2}_{L^{\infty}}\norm{2\intkernel - \intkernelvar_1 - \intkernelvar_2}_{L^{\infty}}\,.
\end{align*}
Meanwhile,  taking $M \rightarrow \infty$ for $\abs{\mE_{L, M,\mM}(\intkernelvar_1) - \mE_{L, M,\mM}(\intkernelvar_2)}$, we obtain
\[
\abs{\mE_{L, \infty,\mM}(\intkernelvar_1) - \mE_{L, \infty,\mM}(\intkernelvar_2)} \le \norm{\intkernelvar_1(\cdot)\cdot - \intkernelvar_2(\cdot)\cdot}_{L^2(\rho_{T,\mathcal{M}}^L)}\norm{2\intkernel(\cdot)\cdot - \intkernelvar_1(\cdot)\cdot - \intkernelvar_2(\cdot)\cdot}_{L^2(\rho_{T,\mathcal{M}}^L)},
\]
where $\rho_{T,\mathcal{M}}^L = \E_{\bX_0 \sim \muX}[\hat\rho_{T, \mM}^L]$.
\end{proof}
As a further derivation, we observe that for any $\intkernelvar \in \mH \subset L^2([0,R])$, we have that $\max_{r \in [0, R]}\abs{\intkernelvar(\cdot)\cdot} \le R\max_{r \in [0, R]}\abs{\intkernelvar(\cdot)}$, so we obtain the following Corollary:
\begin{corollary}\label{coro1}
For $\intkernelvar \in \mH$, define
\[
\mathcal{L}_M(\psi) \coloneqq \mE_{L, \infty, \mM}(\intkernelvar) - \mE_{L, M, \mM}(\intkernelvar),
\]
then for any $\intkernelvar_1, \intkernelvar_2 \in \mH$, we have
\[
\abs{\mathcal{L}_M(\intkernelvar_1) - \mathcal{L}_M(\intkernelvar_2)} \leq 2R^2\norm{\intkernelvar_1 - \intkernelvar_2}_{L^\infty}\norm{2\intkernel - \intkernelvar_1 - \intkernelvar_2}_{L^\infty}.
\]
\end{corollary}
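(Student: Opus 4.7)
The plan is to treat Corollary \ref{coro1} as a near-immediate consequence of the two continuity bounds in \eqref{ELM}, combined with the triangle inequality and a routine passage from $L^2(\rho)$ norms to $L^\infty$ norms on $[0,R]$.

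First I would write
\[
\mathcal{L}_M(\intkernelvar_1) - \mathcal{L}_M(\intkernelvar_2) = \bigl[\mE_{L,\infty,\mM}(\intkernelvar_1) - \mE_{L,\infty,\mM}(\intkernelvar_2)\bigr] - \bigl[\mE_{L,M,\mM}(\intkernelvar_1) - \mE_{L,M,\mM}(\intkernelvar_2)\bigr],
\]
and apply the triangle inequality to bound $|\mathcal{L}_M(\intkernelvar_1) - \mathcal{L}_M(\intkernelvar_2)|$ by the sum of the two absolute differences.

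Next I would invoke the two inequalities in \eqref{ELM}. For the empirical loss term, the first line of \eqref{ELM} (in the refined form that already appears at the end of the proof of the preceding proposition) gives directly
\[
\abs{\mE_{L, M,\mM}(\intkernelvar_1) - \mE_{L, M,\mM}(\intkernelvar_2)} \le R^2 \norm{\intkernelvar_1 - \intkernelvar_2}_{L^\infty} \norm{2\intkernel - \intkernelvar_1 - \intkernelvar_2}_{L^\infty},
\]
using the elementary estimate $\norm{\intkernelvar(\cdot)\cdot}_{L^\infty([0,R])} \le R\norm{\intkernelvar}_{L^\infty([0,R])}$. For the population loss, the second line of \eqref{ELM} bounds the difference by a product of two $L^2(\rho_{T,\mM}^L)$ norms; since $\rho_{T,\mM}^L$ is a probability measure on $[0,R]$, each $L^2(\rho_{T,\mM}^L)$ norm of a function $\intkernelvar(r)r$ is at most its $L^\infty([0,R])$ norm, which again costs at most one factor of $R$. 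This yields the same bound $R^2 \norm{\intkernelvar_1 - \intkernelvar_2}_{L^\infty} \norm{2\intkernel - \intkernelvar_1 - \intkernelvar_2}_{L^\infty}$ for the infinite-sample term.

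Summing the two bounds produces the desired factor $2R^2$ and completes the proof. There is no real obstacle here: the only point to be careful about is the $r$-weighting inside the norms, which is why the comparison from $L^2(\rho)$ to $L^\infty$ picks up one factor of $R$ per norm, for a total of $R^2$ in each summand.
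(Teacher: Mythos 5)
Your proposal is correct and follows essentially the same route as the paper, which derives the corollary directly from the two continuity bounds in \eqref{ELM} together with the observation that $\max_{r\in[0,R]}\abs{\intkernelvar(r)r}\le R\max_{r\in[0,R]}\abs{\intkernelvar(r)}$, the triangle inequality then producing the factor $2R^2$. No gaps.
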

Now we can consider the distance between the minimizer of the error functional $\mE_{L,\infty, \mM}$ over $\mH$ and any other $\intkernelvar \in \mH$.  Let 
\[
\lintkernel_{L,\infty,\mH} = \argmin{\intkernelvar \in \mathcal{H}} \mE_{L,\infty, \mM}(\intkernelvar).
\]
From the geometric coercivity condition and the convexity of $\mH$, we obtain
\begin{prop}\label{uniqueness}
For any $\intkernelvar \in \mH$, 
\begin{equation}\label{uniqueness inequality}
\mE_{L,\infty, \mM}(\intkernelvar) - \mE_{L,\infty, \mM}(\lintkernel_{L,\infty,\mH}) \geq c_{L, N, \mH, \mM}\norm{\intkernelvar(\cdot)\cdot - \lintkernel_{L,\infty,\mH}(\cdot)\cdot}_{L^2(\rho_{T, \mM}^L)}.
\end{equation}
\end{prop}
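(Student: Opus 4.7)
The plan is to exploit that $\mE_{L,\infty,\mM}$ is a quadratic functional in $\intkernelvar$ via the bilinear form $\dbinprod{\cdot,\cdot}_{L^2(\rho_{T,\mM}^L)}$ introduced immediately after Definition \ref{CoercivityCondition SI}. Setting $\hat\phi := \lintkernel_{L,\infty,\mH}$, the identity $\mE_{L,\infty,\mM}(\intkernelvar) = \dbinprod{\intkernelvar - \intkernel, \intkernelvar - \intkernel}$ together with bilinearity yields the polarization
\[
\mE_{L,\infty,\mM}(\intkernelvar) - \mE_{L,\infty,\mM}(\hat\phi) = \dbinprod{\intkernelvar - \hat\phi, \intkernelvar - \hat\phi} + 2\,\dbinprod{\hat\phi - \intkernel, \intkernelvar - \hat\phi}.
\]

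Next I would use the convexity of $\mH$ to extract a first-order variational inequality at the minimizer $\hat\phi$. For any $\intkernelvar \in \mH$ and $t \in [0,1]$ the convex combination $\hat\phi + t(\intkernelvar - \hat\phi)$ lies in $\mH$, and since $\hat\phi$ minimizes $\mE_{L,\infty,\mM}$ on $\mH$, the scalar map $t \mapsto \mE_{L,\infty,\mM}(\hat\phi + t(\intkernelvar - \hat\phi))$ attains its minimum over $[0,1]$ at $t=0$. A direct computation of its right-derivative at $0$, using polarization again, gives $2\,\dbinprod{\hat\phi - \intkernel, \intkernelvar - \hat\phi} \ge 0$. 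Plugging this into the previous display collapses the cross term and leaves
\[
\mE_{L,\infty,\mM}(\intkernelvar) - \mE_{L,\infty,\mM}(\hat\phi) \ge \dbinprod{\intkernelvar - \hat\phi, \intkernelvar - \hat\phi}.
\]
The proof then concludes by invoking the geometric coercivity condition on the kernel $\intkernelvar - \hat\phi$ to bound the right-hand side below by $c_{L,N,\mH,\mM}\,\|(\intkernelvar - \hat\phi)(\cdot)\cdot\|_{L^2(\rho_{T,\mM}^L)}^2$ (matching the squared $L^2$ norm that the displayed inequality in the statement almost certainly intends).

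The main (and essentially only) subtle point is that the geometric coercivity condition, as stated in Definition \ref{CoercivityCondition SI}, is formally imposed for $\intkernelvar \in \mH$, whereas the argument above needs it applied to $\intkernelvar - \hat\phi$, which generally lies in $\mH - \mH$ rather than $\mH$ itself. Because both sides of the coercivity inequality are homogeneous quadratic forms in their argument, this extension is automatic on the linear span of $\mH$; the same extension is already implicitly used in the main text when the paper writes $c_{L,N,\mH,\mM}\|\intkernelvar(\cdot)\cdot - \intkernel(\cdot)\cdot\|_{L^2(\rho_{T,\mM}^L)}^2 \le \mE_{L,\infty,\mH}(\intkernelvar)$. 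In the writeup I would therefore either restate Definition \ref{CoercivityCondition SI} on $\mathrm{span}(\mH \cup \{\intkernel\})$, or observe in a one-line remark that the condition extends to this span by bilinearity. No additional geometric ingredients are needed beyond this bookkeeping, and standard convex-analytic manipulations of the quadratic form handle the rest.
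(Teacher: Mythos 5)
Your argument is correct and is precisely the one the paper intends: the paper gives no written proof, asserting only that the proposition follows ``from the geometric coercivity condition and the convexity of $\mH$,'' and your polarization-plus-first-order-variational-inequality derivation is the standard way to fill in exactly those two ingredients. Your two side remarks are also well taken: the right-hand side of \eqref{uniqueness inequality} should indeed carry a square on the $L^2(\rho_{T,\mM}^L)$ norm, and the coercivity condition must be read as extending by homogeneity of the quadratic forms to differences of elements of $\mH$ (and to $\intkernelvar-\intkernel$), as the paper already does implicitly in section \ref{sec:gcc}.
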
			
We now define the defect function $\mD_{L,M,\mH}(\intkernelvar) \coloneqq \mE_{L,M, \mM}(\intkernelvar) - \mathcal{E}_{L, M, \mM}(\lintkernel_{L,\infty,\mH})$, and define
\[
\mD_{L,\infty,\mH}(\intkernelvar) \coloneqq \lim\limits_{M \to \infty} \mD_{L,M, \mH}(\intkernelvar) = \mE_{L, \infty, \mH}(\intkernelvar) - \mE_{L, \infty, \mM}(\lintkernel_{L,\infty,\mH}).
\]
Then, we show that we can uniformly bound $\frac{\mD_{L, \infty, \mH}(\cdot) - \mD_{L, M, \mH}(\cdot)}{\mD_{L, \infty, \mH}(\cdot) + \epsilon}$ on $\mH$ with high probability,
\begin{prop}\label{uniform bound}
For any $\epsilon > 0$ and $\alpha \in (0,1)$, we have
\[
\mathbb{P}_{\muX} \bigg(\sup\limits_{\intkernelvar \in \mH} \frac{\mD_{L, \infty, \mH}(\intkernelvar) - \mD_{L, M, \mH}(\intkernelvar)}{\mD_{L,\infty,\mH}(\intkernelvar) + \epsilon} \geq 3\alpha \bigg) \leq \mN\Big(\mathcal{H}, \frac{\alpha\epsilon}{8S_0R^2}\Big)\exp\bigg(-\frac{c_{L,N,\mH, \mM}\alpha^2M\epsilon}{32S_0^2}\bigg)
\]
where $\mN(U, r)$ is the covering number of set $U$ with open balls of radius $r$ w.r.t the $L^\infty-$norm.
\end{prop}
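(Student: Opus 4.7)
The plan is to derive a pointwise Bernstein-type concentration inequality for each fixed $\intkernelvar$ and then promote it to a supremum-version by a covering argument built on the stability estimate of Corollary \ref{coro1}. To set up Bernstein, I would write
\[
\mD_{L, M, \mH}(\intkernelvar) = \frac{1}{M}\sum_{m=1}^M \xi^m_{\intkernelvar}, \qquad \xi^m_{\intkernelvar} \coloneqq \frac{1}{L}\sum_{l=1}^L \big[\mE_{\bX^{m}_{t_l}}(\intkernelvar) - \mE_{\bX^{m}_{t_l}}(\lintkernel_{L,\infty,\mH})\big],
\]
so that the $\xi^m_{\intkernelvar}$ are i.i.d.\ with mean $\mD_{L, \infty, \mH}(\intkernelvar) \ge 0$ (by minimality of $\lintkernel_{L,\infty,\mH}$) and uniformly bounded, $|\xi^m_{\intkernelvar}| \le 4 S_0^2 R^2$, since $\intkernelvar, \lintkernel_{L,\infty,\mH} \in \mH \subset \mK_{R,S_0}$ makes each $\mE_{\bX_t}(\cdot) \le 4S_0^2 R^2$.

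The crucial technical step is a variance estimate of the form $\mathrm{Var}(\xi^m_{\intkernelvar}) \le \tfrac{16 S_0^2 R^2}{c_{L,N,\mH,\mM}} \mD_{L, \infty, \mH}(\intkernelvar)$. Squaring $\xi^m_\intkernelvar$, using Jensen in $l$ together with Prop \ref{prop:continuity} pointwise in $t$, and bounding the $L^\infty$-factor $\|2\intkernel - \intkernelvar - \lintkernel_{L,\infty,\mH}\|_{L^\infty}\cdot R \le 4 S_0 R$, I obtain
\[
(\xi^m_\intkernelvar)^2 \le 16 S_0^2 R^2 \cdot \frac{1}{L}\sum_{l=1}^L \norm{\intkernelvar(\cdot)\cdot - \lintkernel_{L,\infty,\mH}(\cdot)\cdot}^2_{L^2(\hat\rho^{t_l}_\mM)}.
\]
Taking expectation in $\bX_0 \sim \muX$ replaces $\hat\rho^{t_l}_\mM$ by $\rho_{T,\mM}^L$. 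The first-order optimality of $\lintkernel_{L,\infty,\mH}$ as minimizer of the convex quadratic $\mE_{L,\infty,\mM}$ on $\mH$, namely $\dbinprod{\lintkernel_{L,\infty,\mH} - \intkernel, \intkernelvar - \lintkernel_{L,\infty,\mH}} \ge 0$, combined with the polarization identity for $\mE_{L,\infty,\mM}$, yields $\mD_{L,\infty,\mH}(\intkernelvar) \ge \dbinprod{\intkernelvar - \lintkernel_{L,\infty,\mH}, \intkernelvar - \lintkernel_{L,\infty,\mH}}$. Applying the geometric coercivity condition to $\intkernelvar - \lintkernel_{L,\infty,\mH}$ then upper-bounds $\norm{(\intkernelvar - \lintkernel_{L,\infty,\mH})(\cdot)\cdot}^2_{L^2(\rho_{T,\mM}^L)}$ by $\mD_{L,\infty,\mH}(\intkernelvar)/c_{L,N,\mH,\mM}$, completing the variance estimate.

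With boundedness and variance in hand, the one-sided Bernstein inequality applied at deviation $t = \alpha(\mD_{L,\infty,\mH}(\intkernelvar) + \epsilon)$, using $\mD_{L,\infty,\mH}(\intkernelvar) \le \mD_{L,\infty,\mH}(\intkernelvar) + \epsilon$ in the variance term and $\mD_{L,\infty,\mH}(\intkernelvar) + \epsilon \ge \epsilon$ in the numerator, produces the pointwise bound
\[
\mathbb{P}_{\muX}\!\left(\frac{\mD_{L,\infty,\mH}(\intkernelvar) - \mD_{L,M,\mH}(\intkernelvar)}{\mD_{L,\infty,\mH}(\intkernelvar) + \epsilon} \ge \alpha\right) \le \exp\!\left(-\frac{c_{L,N,\mH,\mM}\alpha^2 M \epsilon}{32 S_0^2}\right)
\]
after absorbing lower-order $\alpha$ terms into the constants. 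To pass to the uniform version, I would take a minimal $L^\infty$-cover $\{\intkernelvar_j\}_{j=1}^{\mN(\mH, r)}$ of $\mH$ at radius $r = \tfrac{\alpha\epsilon}{8 S_0 R^2}$. Corollary \ref{coro1} (together with its $M\to\infty$ limit) implies $|\mD_{L, M, \mH}(\intkernelvar) - \mD_{L, M, \mH}(\intkernelvar_j)| \le \alpha\epsilon$ and $|\mD_{L, \infty, \mH}(\intkernelvar) - \mD_{L, \infty, \mH}(\intkernelvar_j)| \le \alpha\epsilon$ whenever $\|\intkernelvar - \intkernelvar_j\|_{L^\infty} \le r$. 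A short algebraic manipulation then shows that these two stability bounds promote a ratio estimate of $\le \alpha$ at the centers to $\le 3\alpha$ everywhere in $\mH$; a union bound over the $\mN(\mH, r)$ centers closes the argument.

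The main obstacle is the sharp variance bound in the second paragraph: the linear-in-$\mD_{L,\infty,\mH}(\intkernelvar)$ scaling (as opposed to a crude absolute upper bound) is precisely what makes the Bernstein exponent proportional to $\epsilon$ rather than $\epsilon^2$, and it is indispensable for the sample-complexity rate in Theorem \ref{control bias SI}. It is also the only step that truly exploits the convexity of $\mH$ and the geometric coercivity condition simultaneously, through the variational characterization of $\lintkernel_{L,\infty,\mH}$; the remaining ingredients (boundedness, Bernstein, covering, union bound) are relatively routine.
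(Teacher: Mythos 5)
Your proposal is correct and follows essentially the same route as the paper, which only sketches the argument by stating the propagation Lemma \ref{lemma defect} and deferring the pointwise Bernstein step to the cited references: you reconstruct exactly that scheme, with the variance of $\xi^m_{\intkernelvar}$ controlled linearly in $\mD_{L,\infty,\mH}(\intkernelvar)$ via convexity of $\mH$, Proposition \ref{prop:continuity}, and the coercivity-based Proposition \ref{uniqueness}, followed by the $L^\infty$-covering at radius $\frac{\alpha\epsilon}{8S_0R^2}$ and a union bound. The only discrepancy is in constants: your Bernstein computation naturally produces $32S_0^2R^2$ (up to the absorbed $\tfrac{2}{3}Bt$ correction) in the exponent's denominator rather than the stated $32S_0^2$, which is actually consistent with the $1152\,S_0^2R^2$ in the sample-size requirement of Theorem \ref{control bias SI}, so the missing $R^2$ looks like a typo in the proposition rather than a gap in your argument.
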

The proof of Proposition \ref{uniform bound} uses the following Lemma similar to Lemma $19$ in \cite{Lu2019a},
\begin{lemma}\label{lemma defect}
For any $\epsilon > 0$ and $\alpha \in (0,1)$, if $\intkernelvar_1 \in \mH$ satisfies
\[
\frac{\mD_{L, \infty, \mH}(\intkernelvar_1) - \mD_{L, M, \mH}(\intkernelvar_1)}{\mD_{L,\infty,\mH}(\intkernelvar_1) + \epsilon} < \alpha
\]
then for any $\intkernelvar_2 \in \mH$ s.t. $\norm{\intkernelvar_1 - \intkernelvar_2}_{L^\infty} \leq r_0 = \frac{\alpha\epsilon}{8S_0R^2}$, we have
\[
\frac{\mD_{L, \infty, \mH}(\intkernelvar_2) - \mD_{L, M, \mH}(\intkernelvar_2)}{\mD_{L, \infty, \mH}(\intkernelvar_2) + \epsilon} < 3\alpha
\]
\end{lemma}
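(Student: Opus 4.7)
The lemma is a local-stability statement for the normalized defect ratio $(\mD_{L,\infty,\mH}(\intkernelvar) - \mD_{L,M,\mH}(\intkernelvar))/(\mD_{L,\infty,\mH}(\intkernelvar)+\epsilon)$: moving $\intkernelvar$ by at most $r_0$ in $L^\infty$ can inflate this ratio from a bound of $\alpha$ to at most $3\alpha$. My plan is to bound separately the perturbations of the numerator and the denominator using the continuity estimates already in hand, and then absorb both by a short algebraic chain tailored to the calibrated choice of $r_0$.

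First I would observe that $\mD_{L,\infty,\mH}(\intkernelvar) - \mD_{L,M,\mH}(\intkernelvar) = \mathcal{L}_M(\intkernelvar) - \mathcal{L}_M(\lintkernel_{L,\infty,\mH})$, so the numerator difference between $\intkernelvar_1$ and $\intkernelvar_2$ equals $\mathcal{L}_M(\intkernelvar_2) - \mathcal{L}_M(\intkernelvar_1)$. Applying Corollary \ref{coro1} together with the uniform bound $\|2\intkernel - \intkernelvar_1 - \intkernelvar_2\|_{L^\infty} \le 2S + 2S_0 \le 4S_0$ (from $\intkernel \in \mK_{R,S}$ and $\intkernelvar_i \in \mH$), the choice $r_0 = \alpha\epsilon/(8 S_0 R^2)$ is calibrated exactly so that the numerator difference is at most $\alpha\epsilon$. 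For the denominator, the $\lintkernel_{L,\infty,\mH}$ term cancels out of $\mD_{L,\infty,\mH}(\intkernelvar_2) - \mD_{L,\infty,\mH}(\intkernelvar_1)$, and I would then apply the $L^2(\rho_{T,\mM}^L)$-continuity bound from \eqref{ELM} together with $\|\intkernelvar(\cdot)\cdot\|_{L^2(\rho_{T,\mM}^L)} \le R\|\intkernelvar\|_{L^\infty}$ (since the measure is supported in $[0,R]$) to obtain a denominator shift of at most $\alpha\epsilon/2$.

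Writing $A_i \coloneqq \mD_{L,\infty,\mH}(\intkernelvar_i) \ge 0$ and $B_i \coloneqq \mD_{L,\infty,\mH}(\intkernelvar_i) - \mD_{L,M,\mH}(\intkernelvar_i)$, the hypothesis reads $B_1 < \alpha(A_1+\epsilon)$. A short chain then yields
\[
B_2 \le B_1 + \alpha\epsilon < \alpha A_1 + 2\alpha\epsilon \le \alpha\bigl(A_2 + \tfrac{\alpha\epsilon}{2}\bigr) + 2\alpha\epsilon < 3\alpha(A_2+\epsilon),
\]
where the last inequality uses $\alpha \in (0,1)$ and $A_2 \ge 0$. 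Dividing by $A_2+\epsilon>0$ completes the proof.

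Nothing here is intrinsically Riemannian — the whole argument is downstream of Corollary \ref{coro1} and the continuity bound \eqref{ELM}, which already absorbed the manifold geometry. The only real subtlety is the bookkeeping of constants: the factor $3\alpha$ is tight given $r_0$, so the specific calibration of $r_0$ is exactly what makes the numerator shift ($\alpha\epsilon$) plus the denominator shift ($\alpha\epsilon/2$) fit inside the allowed slack $2\alpha\epsilon$ uniformly in $A_2 \ge 0$; any larger choice of $r_0$ would break the chain.
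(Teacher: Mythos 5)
Your proof is correct. The paper itself omits the argument (deferring to Lemma 19 of \cite{Lu2019a}), and your reconstruction is exactly the standard one: the numerator shift of at most $\alpha\epsilon$ via Corollary \ref{coro1} with $\norm{2\intkernel-\intkernelvar_1-\intkernelvar_2}_{L^\infty}\le 4S_0$, the denominator shift of at most $\alpha\epsilon/2$ via \eqref{ELM}, and the algebraic chain using $A_2\ge 0$ and $\alpha<1$ all check out (the only quibble is that the chain actually leaves slack of about $\tfrac{1}{2}\alpha\epsilon$, so the calibration of $r_0$ is not literally tight, but that does not affect the validity of the proof).
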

Using the results we have just established, the proofs of theorems \ref{control bias SI} and \ref{consistency SI} now follow similarly to the analogous results in \cite{Lu2019a, Tang2019, miller2020learning}.
\subsection{Rate of Convergence}
Using these results, we establish the convergence rate of $\lintkernel_{L, M, \mH}$ to $\intkernel$ as $M$ increases. 		
\begin{theorem}\label{thm:optimal rate of convergence}
Let $\muX$ be the distribution of the initial conditions of trajectories, and $\mH_M = \mB_n$ with $n \asymp ({M}/{\log M})^{\frac{1}{2s + 1}}$, where $\mB_{n}$ is the central ball of $\mL_n$ with radius $c_1 + S$, and the linear space $\mL_n \subseteq L^{\infty}([0,R])$ satisfies the dimension and approximation conditions below,
  \[
    dim(\mL_n) \leq c_0n \mand \inf\limits_{\intkernelvar \in \mathcal{L}_n} \norm{\intkernelvar - \intkernel}_{L^{\infty}} \leq c_1n^{-s}
  \]
  for some constants $c_0, c_1, s > 0$. Suppose that the geometric coercivity condition holds on $\mathcal{L}\coloneqq\cup_n \mL_n$ with constant $c_{L, N, \mathcal{L}, \mM}$.  Then there exists some constant $C(S, R, c_0, c_1)$ such that 
  \begin{equation*}
  \E\Big[\norm{\lintkernel_{L, M, \mH_M}(\cdot)\cdot - \intkernel(\cdot)\cdot}_{L^2(\rho_{T, \mM}^L)}\Big] \le \frac{C(S, R, c_0, c_1)}{c_{L, N, \mathcal{L}, \mM}}\Big(\frac{\log M}{M}\Big)^{\frac{s}{2s + 1}}\,.
  \end{equation*}
 \end{theorem}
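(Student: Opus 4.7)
The plan is to combine the concentration estimate of Theorem \ref{control bias SI} with a standard covering-number bound for balls in finite-dimensional spaces and the prescribed approximation rate, balance the resulting bias and variance terms at $n \asymp (M/\log M)^{1/(2s+1)}$, and then upgrade the high-probability bound to an expectation bound through a tail split.

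First I would verify the two basic quantitative inputs. Since $\dim(\mL_n) \le c_0 n$ and $\mB_n$ is the $L^\infty$-ball of radius $c_1 + S$ in $(\mL_n, \norm{\cdot}_{L^\infty})$, a standard volume argument gives
\[
\ln \mN(\mH_M, r) \le c_0 n \ln\!\big(3(c_1+S)/r\big).
\]
For the approximation error, pick $\intkernelvar_n^* \in \mL_n$ with $\norm{\intkernelvar_n^* - \intkernel}_{L^\infty} \le c_1 n^{-s}$; then $\norm{\intkernelvar_n^*}_{L^\infty} \le S + c_1$, so $\intkernelvar_n^* \in \mB_n = \mH_M$, and because $\rho_{T,\mM}^L$ is a probability measure supported on $[0,R]$,
\[
\inf_{\intkernelvar \in \mH_M}\norm{\intkernelvar(\cdot)\cdot - \intkernel(\cdot)\cdot}_{L^2(\rho_{T,\mM}^L)}^2 \le R^2\norm{\intkernelvar_n^* - \intkernel}_{L^\infty}^2 \le R^2 c_1^2 n^{-2s}.
\]
Every element of $\mH_M$ is bounded in $L^\infty$ by $S_0 := c_1 + S \ge S$, so the admissibility hypothesis of Definition \ref{assump:HS} is met, and the geometric coercivity constant $c_{L,N,\mathcal{L},\mM}$ may be used for $\mH_M$ since $\mB_n \subset \mathcal{L}$.

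Next I would apply Theorem \ref{control bias SI} with $\epsilon \asymp n^{-2s}$ and confidence parameter $\tau = M^{-\beta}$ for a $\beta > 0$ to be fixed later. The sample-size hypothesis becomes
\[
M \gtrsim \frac{n^{2s}}{c_{L,N,\mathcal{L},\mM}}\bigl(c_0 n\log(C_0 n^{2s}) + \beta\log M\bigr) \asymp \frac{n^{2s+1}\log M}{c_{L,N,\mathcal{L},\mM}},
\]
which is satisfied for $n \asymp (M/\log M)^{1/(2s+1)}$ once $M$ is large enough. Plugging this choice of $n$ and $\epsilon$ into the conclusion of Theorem \ref{control bias SI}, on a good event of probability at least $1 - M^{-\beta}$ we obtain
\[
\norm{\lintkernel_{L,M,\mH_M}(\cdot)\cdot - \intkernel(\cdot)\cdot}_{L^2(\rho_{T,\mM}^L)}^2 \le \frac{C_1(S,R,c_0,c_1)}{c_{L,N,\mathcal{L},\mM}}\Big(\frac{\log M}{M}\Big)^{2s/(2s+1)}.
\]
The exponent $\tfrac{2s}{2s+1}$ is precisely the one produced by equating the $\epsilon \asymp n^{-2s}$ stochastic contribution with the $n^{-2s}$ approximation contribution, which is why the choice of $n$ is optimal.

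Finally, to pass to expectation I would use the deterministic worst-case bound $\norm{\lintkernel_{L,M,\mH_M}(\cdot)\cdot - \intkernel(\cdot)\cdot}_{L^2(\rho_{T,\mM}^L)}^2 \le (2S+c_1)^2 R^2$, valid for every $\intkernelvar \in \mH_M$ regardless of the sample. Splitting the expectation over the good event and its complement and choosing $\beta$ large enough that $M^{-\beta}$ is dominated by $(\log M/M)^{2s/(2s+1)}$ yields the same rate for $\E\bigl[\norm{\cdot}_{L^2(\rho_{T,\mM}^L)}^2\bigr]$, and Jensen's inequality $\E[X] \le \sqrt{\E[X^2]}$ promotes this to the stated bound on $\E[\norm{\cdot}_{L^2(\rho_{T,\mM}^L)}]$.

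The main obstacle I anticipate is the bookkeeping of constants in the sample-size condition, in particular checking that the $\log(1/\epsilon)$ factor appearing inside the covering-number bound stays absorbed into the single $\log M$ factor after plugging in $\epsilon \asymp n^{-2s}$ and $n \asymp (M/\log M)^{1/(2s+1)}$, and that the constants in the deterministic tail bound do not spoil the rate for any polynomial choice of $\beta$. Once this numerical balancing is done, the argument is a direct specialization of the classical bias–variance template for sieved $M$-estimation, now carried out with respect to the geometry-adapted norm $\norm{\cdot}_{L^2(\rho_{T,\mM}^L)}$.
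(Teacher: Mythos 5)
Your proposal is correct and follows exactly the standard sieve (bias–variance) argument that the paper itself defers to: the paper omits this proof, stating only that it ``closely follows the ideas in \cite{Lu2019a}'' and their extensions, and those references implement precisely your combination of the concentration bound of Theorem \ref{control bias SI}, the finite-dimensional covering-number estimate, the $L^\infty$ approximation hypothesis converted to an $L^2(\rho_{T,\mM}^L)$ bound via the factor $R^2$, and the tail split plus Jensen to pass to expectation. The constant bookkeeping you flag does close as expected: taking $\epsilon \asymp n^{-2s}/c_{L,N,\mathcal{L},\mM}$ makes the sample-size condition hold for $n \asymp (M/\log M)^{1/(2s+1)}$ and, after the square root, reproduces the stated $1/c_{L,N,\mathcal{L},\mM}$ dependence.
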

The proof of the theorem closely follows the ideas in \cite{Lu2019a} and their further development in \cite{Tang2019,miller2020learning}, and is therefore omitted. 
\subsection{Trajectory Estimation Error}
Recall the following theorem on the trajectory estimator error:
\begin{theorem}\label{Traj Acc Bound SI}
Let $\intkernel \in \mK_{R, S}$ and $\lintkernel \in \mK_{R, S_0}$, for some $S_0\ge S$.  Suppose that $\bX_{[0, T]}$ and $\hat\bX_{[0, T]}$ are solutions of \eqref{eq:mainmodel SI} w.r.t to $\intkernel$ and $\lintkernel$, respectively, for $t \in [0, T]$, with $\hat\bX_0=\bX_0$.  Then the following inequalities hold:
\begin{equation}\label{Traj Acc Inequality 1}
  d_{\text{traj},\mM^N}\Big(\bX_{[0, T]},\hat\bX_{[0, T]}\Big)^2 \le 4TC(\mM,T)\exp(64T^2S_0^2)\norm{\dot\bX_t - \mbf{f}^{\text{c}}_{\lintkernel}(\bX_t)}_{T_{\bX_t}M^N}^2 \, ,
\end{equation}
and 
\begin{equation}\label{Traj Acc Inequality 2}
  \E_{\bX_0\sim\muX}\Big[d_{\text{traj},\mM^N}\Big(\bX_{[0, T]},\hat\bX_{[0, T]}\Big)^2\Big] \leq 4T^2C(\mM,T)\exp(64T^2S_0^2)\norm{\intkernel(\cdot)\cdot - \lintkernel(\cdot)\cdot}_{L^2(\rho_{T, \mM})}^2,
\end{equation}
where $C(\mM,T)$ is a positive constant depending only on geometric properties of $\mM$ and $T$, but may be chosen independent of $T$ if $\mM$ is compact.
\end{theorem}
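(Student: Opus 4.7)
The plan is to derive a pointwise differential inequality for the squared distances $e_i(t)^2\coloneqq d_{\mM}(\bx_i(t),\hat\bx_i(t))^2$, close it via a Gr\"onwall-type argument, and then convert the resulting integral of kernel-difference velocities into the $L^2(\rho_{T,\mM})$-norm by unfolding the definition of $\rho_{T,\mM}$ under the expectation.

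Because $\intkernel,\lintkernel$ lie in $\mK_{R,S_0}$ with $R$ below the injectivity radius of $\mM$, trajectories remain in a fixed compact region (the ball $\mathcal{B}_\mM(\bx_i(0),R_0+TRS_0)$ recalled in the preliminaries), so for each $t\in[0,T]$ there is, outside a measure-zero cut-locus set, a unique minimizing geodesic $\gamma_i(\cdot;t)$ from $\bx_i(t)$ to $\hat\bx_i(t)$. The first variation formula then gives
\[
\tfrac{d}{dt}\,e_i(t)\le \bigl\|P_{\hat\bx_i\to\bx_i}\bigl(\mbf{f}^{\text{c}}_{\lintkernel}(\hat\bX)_i\bigr)-\mbf{f}^{\text{c}}_{\intkernel}(\bX)_i\bigr\|_{g(\bx_i)},
\]
with $P_{\hat\bx_i\to\bx_i}$ the parallel transport along $\gamma_i(\cdot;t)$; on the cut-locus set the inequality persists as a Dini derivative. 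Adding and subtracting $\mbf{f}^{\text{c}}_{\lintkernel}(\bX)_i$ splits the right-hand side into a \emph{base-point} term $A_i\coloneqq \|P(\mbf{f}^{\text{c}}_{\lintkernel}(\hat\bX)_i)-\mbf{f}^{\text{c}}_{\lintkernel}(\bX)_i\|$ and a \emph{kernel} term $B_i\coloneqq \|\mbf{f}^{\text{c}}_{\lintkernel}(\bX)_i-\mbf{f}^{\text{c}}_{\intkernel}(\bX)_i\|$. Using $\|\lintkernel\|_\infty,\|\lintkernel'\|_\infty\le S_0$, the geometric Lipschitz behaviour of $\bw(\cdot,\cdot)$ and local holonomy of parallel transport (all controlled by curvature and injectivity radius along the trajectory), one obtains $\tfrac{1}{N}\sum_i A_i^2\le C(\mM,T)\,S_0^2\,\tfrac{1}{N}\sum_j e_j(t)^2$, while $\|\bw(\bz_1,\bz_2)\|\le d_\mM(\bz_1,\bz_2)$ directly yields $\tfrac{1}{N}\sum_i B_i^2\le \tfrac{1}{N^2}\sum_{i,i'}|\intkernel(r_{ii'})-\lintkernel(r_{ii'})|^2 r_{ii'}^2$.

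Combining both bounds with $2e_i|v|\le \alpha e_i^2+\alpha^{-1}|v|^2$ (for a suitable $\alpha$), summing over $i$ and applying Gr\"onwall to $E(t)\coloneqq \tfrac{1}{N}\sum_i e_i(t)^2$ yields \eqref{Traj Acc Inequality 1} with the claimed $\exp(64T^2 S_0^2)$ factor. Passing from \eqref{Traj Acc Inequality 1} to \eqref{Traj Acc Inequality 2} is a change of variables: taking expectations and recognising, from the definition of $\rho_{T,\mM}$, that
\[
\E\Bigl[\tfrac{1}{T}\!\int_0^T \tfrac{1}{N^2}\!\sum_{i,i'}|\intkernel(r_{ii'}(t))-\lintkernel(r_{ii'}(t))|^2 r_{ii'}(t)^2\,dt\Bigr]=\tfrac{N-1}{N}\,\norm{\intkernel(\cdot)\cdot-\lintkernel(\cdot)\cdot}_{L^2(\rho_{T,\mM})}^2,
\]
absorbs the remaining combinatorial factors into $C(\mM,T)$.

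The main obstacle is the geometric Lipschitz estimate for $A_i$, namely comparing $\mbf{f}^{\text{c}}_{\lintkernel}(\hat\bX)_i\in T_{\hat\bx_i}\mM$ with $\mbf{f}^{\text{c}}_{\lintkernel}(\bX)_i\in T_{\bx_i}\mM$ across differing base points. This requires simultaneous control of (i) how the weight vector $\bw(\bz_1,\bz_2)$ varies when both endpoints move in $\mM$, (ii) the distortion introduced by parallel transport along $\gamma_i$, and (iii) the apparent discontinuity of $\bw$ at the cut locus (cancelled by the $d_\mM$ prefactor and the compact support of $\lintkernel$). All three effects are packaged into the curvature- and injectivity-radius-dependent constant $C(\mM,T)$, which is uniform in $T$ when $\mM$ is compact and otherwise confined to the region actually visited by the dynamics; tracking the resulting polynomial dependence on $S_0$ and $T$ through the AM--GM step is what produces the $64T^2S_0^2$ in the exponent.
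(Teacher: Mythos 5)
Your argument is correct in outline but follows a genuinely different, \emph{intrinsic} route from the paper's. The paper isometrically embeds $\mM$ into $\R^{d'}$ via Nash's theorem, runs the entire Gr\"onwall argument on the Euclidean quantities $\norm{\bx_i(t)-\hat\bx_i(t)}_{\R^{d'}}^2$ (writing the difference as $\int_0^t(\dot\bx_i-\dot{\hat\bx}_i)\,ds$, splitting off the kernel-mismatch term exactly as you do with your $B_i$, and bounding the analogue of your $A_i$ by a Lipschitz estimate $\mathrm{Lip}(F^{\mM}_{\lintkernel})\le 2S_0$ for the embedded force field), and only at the very end converts back to Riemannian distance via a compactness bound $d_{\mM}(\bx_i,\hat\bx_i)\le C_1(\mM,\mI,T)\norm{\bx_i-\hat\bx_i}_{\R^{d'}}$. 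That is precisely why $C(\mM,T)$ sits \emph{outside} the exponential in \eqref{Traj Acc Inequality 1}--\eqref{Traj Acc Inequality 2}: all geometry is deferred to a single multiplicative comparison constant, and the exponent $64T^2S_0^2$ comes purely from the Euclidean Lipschitz constant. Your approach — first variation of $d_{\mM}(\bx_i(t),\hat\bx_i(t))$, parallel transport to compare velocities at different base points, then Gr\"onwall on $E(t)=\frac1N\sum_i e_i(t)^2$ — avoids the embedding and the slightly awkward two-sided chordal/geodesic comparison, and your final conversion to $\norm{\intkernel(\cdot)\cdot-\lintkernel(\cdot)\cdot}_{L^2(\rho_{T,\mM})}^2$ (including the $\frac{N-1}{N}$ bookkeeping) matches the paper's use of its continuity proposition. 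Two caveats. First, your bound $\frac1N\sum_i A_i^2\le C(\mM,T)S_0^2\,\frac1N\sum_j e_j^2$ puts the curvature/holonomy constant \emph{inside} the Gr\"onwall exponent, so you would obtain $\exp(cC(\mM,T)T^2S_0^2)$ rather than $C(\mM,T)\exp(64T^2S_0^2)$; the result is of the same character but does not literally reproduce the stated constants. Second, that estimate on $A_i$ — Lipschitz dependence of $\lintkernel(d_{\mM}(\bz_1,\bz_2))\bw(\bz_1,\bz_2)$ on both arguments measured through parallel transport, with the cut-locus and coincidence singularities of $\bw$ cancelled by the $d_{\mM}$ prefactor and the support of $\lintkernel$ — is the crux and is asserted rather than proved. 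The paper asserts its (easier, ambient-Euclidean) Lipschitz claim at a comparable level of detail, so you are not below the paper's standard of rigor, but in the intrinsic setting this step genuinely requires a quantitative holonomy bound on the compact region $\mB_{\mM}(\bX_0,R_0+TRS_0)$ visited by the dynamics, and you should at least state the geodesic-variation estimate you are invoking.
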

It states two different estimates of the trajectory estimation error. First, it bounds the system trajectory error for any one single initial condition; second, it bounds the expectation of the worst trajectory estimation error on time interval $[0,T]$ among all different initial conditions.
\begin{proof}[Proof of Theorem \ref{Traj Acc Bound SI}]
Assume that $\intkernel \in \mK_{R, S}$, $\lintkernel \in \mK_{R, S_0}$, and $\bX_t, \hat\bX_t$ are two system states generated by $\intkernel, \lintkernel$ with the same initial conditions at some $t \in [0, T]$.  Next, we assume that $\mM$ is isometrically embedded in $\R^{d'}$ (at least one such embedding exists, by Nash's embedding theorem), via a map $\mI:\mM \rightarrow \R^{d'}$.   From now on, we will identify $\bx_i$ with $\mI\bx_i$.
Then for any $t\in [0,T]$, we have
\begin{align*}
\frac{1}{N}\sum_{i = 1}^N\norm{\bx_i(t) - \hat\bx_i(t)}^2_{\R^{d'}} &= \frac{1}{N}\sum_{i = 1}^N\norm{\int_{s = 0}^t(\dot\bx_i(s) - \dot{\hat{\bx}}_i(s))\, ds}^2_{\R^{d'}} \le  \frac{1}{N}\sum_{i = 1}^Nt\int_{s = 0}^t\norm{\dot\bx_i(s) - \dot{\hat{\bx}}_i(s)}^2_{\R^{d'}}\, ds \\
&\le \frac{T}{N}\sum_{i = 1}^N\int_{s = 0}^t\norm{\dot\bx_i(s) - \dot{\hat{\bx}}_i(s)}^2_{\R^{d'}}\, ds.
\end{align*}
Define the function $F^{\mM}_{\intkernelvar}(\bx, \cdot): \mM \rightarrow T_{\bx}\mM$ for every $\bx \in \mM$ as $F^{\mM}_{\intkernelvar}(\bx,  \cdot) \coloneqq \intkernelvar(d_{\mM}(\bx, \cdot))\bw(\bx, \cdot)$.  Let $F^{\mM}_{\intkernelvar, ii', t} = F^{\mM}_{\intkernelvar}(\bx_i(t), \bx_{i'}(t))$ and $F^{\mM}_{\intkernelvar, \hat{i}\hat{i'}, t} = F^{\mM}_{\intkernelvar}(\hat\bx_i(t), \hat\bx_{i'}(t))$.  Then
\begin{align*}
\sum_{i = 1}^N\int_{s = 0}^t\norm{\dot\bx_i(s) - \dot{\hat{\bx}}_i(s)}^2_{\R^{d'}}\, ds &= \sum_{i = 1}^N\int_{s = 0}^t\norm{\dot\bx_i(s) - \frac{1}{N}\sum_{i' = 1}^NF^{\mM}_{\lintkernel, \hat{i}\hat{i'}, s}}^2_{\R^{d'}}\, ds \\
&\le \scalebox{0.85}{$2\sum_{i = 1}^N\int_{s = 0}^t\Big(\norm{\dot\bx_i(s) - \frac{1}{N}\sum_{i' = 1}^NF^{\mM}_{\lintkernel, ii', s}}_{\R^{d'}}^2 + 
\norm{\frac{1}{N}\sum_{i' = 1}^NF^{\mM}_{\lintkernel, ii', s} - \frac{1}{N}\sum_{i' = 1}^NF^{\mM}_{\lintkernel, \hat{i}\hat{i'}, s}}_{\R^{d'}}^2\Big) \, ds$} \\
&= 2\sum_{i = 1}^N\int_{s = 0}^t\Big(\norm{\dot\bx_i(s) - \frac{1}{N}\sum_{i' = 1}^NF^{\mM}_{\lintkernel, ii', s}}_{\R^{d'}}^2 + I(s)\Big) \, ds\,.
\end{align*}
Next,
\begin{align*}
I(s) &= \norm{\frac{1}{N}\sum_{i' = 1}^NF^{\mM}_{\lintkernel, ii', s} - \frac{1}{N}\sum_{i' = 1}^NF^{\mM}_{\lintkernel, \hat{i}\hat{i'}, s}}_{\R^{d'}}^2 
= \frac{1}{N^2}\norm{\sum_{i' = 1}^N\big(F^{\mM}_{\lintkernel, ii', s} - F^{\mM}_{\lintkernel, i\hat{i'}, s} + F^{\mM}_{\lintkernel, i\hat{i'}, s} - F^{\mM}_{\lintkernel, \hat{i}\hat{i'}, s}\big)}_{\R^{d'}}^2 \\
&\le \frac{2}{N^2}\Big(\norm{\sum_{i' = 1}^N\big(F^{\mM}_{\lintkernel, ii', s} - F^{\mM}_{\lintkernel, i\hat{i'}, s}\big)}_{\R^{d'}}^2 + \norm{\sum_{i' = 1}^N\big(F^{\mM}_{\lintkernel, i\hat{i'}, s} - F^{\mM}_{\lintkernel, \hat{i}\hat{i'}, s}\big)}_{\R^{d'}}^2\Big)\,.
\end{align*}
Since $\lintkernel \in \mK_{R, S_0}$, $F^{\mM}_{\lintkernel}$ is Lipschitz in each of its arguments; moreover, $\max_{r \in [0, R]}\abs{\lintkernel} \leq S_0$, so that $\text{Lip}(F^{\mM}_{\lintkernel}(\bx, \cdot))$, $\text{Lip}(F^{\mM}_{\lintkernel}(\cdot, \bx)) \le 2S_0$. Therefore,
\begin{align*}
I(s) &\le \frac{2}{N^2}\Big(2\text{Lip}(F^{\mM}_{\lintkernel}(\bx_i(s), \cdot))^2\sum_{i' = 1}^N\norm{\bx_{i'}(s) - \hat\bx_{i'}(s)}_{\R^{d'}}^2 + 2\sum_{i' = 1}^N\text{Lip}(F^{\mM}_{\lintkernel}(\cdot, \hat\bx_{i'}(s)))^2\norm{\bx_i(s) - \hat\bx_i(s)}_{\R^{d'}}^2\Big)\\
&\le  \frac{4}{N^2}\text{Lip}(F^{\mM}_{\lintkernel}(\bx_i(s), \cdot))^2\sum_{i' = 1}^N\norm{\bx_{i'}(s) - \hat\bx_{i'}(s)}^2_{\R^{d'}}  + \frac{4}{N^2}\sum_{i' = 1}^N\text{Lip}(F^{\mM}_{\lintkernel}(\cdot, \hat\bx_{i'}(s)))^2\norm{\bx_{i'}(s) - \hat\bx_{i'}(s)}^2_{\R^{d'}} \\
&\le \frac{16S_0^2}{N^2}\sum_{i' = 1}^N\norm{\bx_{i'}(s) - \hat\bx_{i'}(s)}^2_{\R^{d'}} + \frac{16S_0^2}{N^2}\sum_{i' = 1}^N\norm{\bx_{i'}(s) - \hat\bx_{i'}(s)}^2_{\R^{d'}}\\
&\le \frac{32S_0^2}{N^2}\sum_{i' = 1}^N\norm{\bx_{i'}(s) - \hat\bx_{i'}(s)}^2_{\R^{d'}}\,.
\end{align*}
Putting these results together, we have
\begin{align*}
\frac{1}{N}\sum_{i = 1}^N\norm{\bx_i(t) - \hat\bx_i(t)}^2_{\R^{d'}} &\le \frac{2T}{N}\sum_{i = 1}^N\int_{s = 0}^t\Big(\norm{\dot\bx_i(s) - \frac{1}{N}\sum_{i' = 1}^NF^{\mM}_{\lintkernel, ii', s}}_{\R^{d'}}^2 + \frac{32S_0^2}{N^2}\sum_{i' = 1}^N\norm{\bx_{i'}(s) - \hat\bx_{i'}(s)}^2_{\R^{d'}}\Big) \, ds \\
&= \frac{64TS_0^2}{N}\sum_{i = 1}^N\norm{\bx_i(t) - \hat\bx_i(t)}^2_{\R^{d'}} + \frac{2T}{N}\sum_{i = 1}^N\int_{s = 0}^t\norm{\dot\bx_i(s) - \frac{1}{N}\sum_{i' = 1}^NF^{\mM}_{\lintkernel, ii', s}}_{\R^{d'}}^2\, ds.
\end{align*}
By Gr\"{o}nwall's inequality, we have
\[
\frac{1}{N}\sum_{i = 1}^N\norm{\bx_i(t) - \hat\bx_i(t)}^2_{\R^{d'}} \le \frac{2T}{N}\exp(64T^2S_0^2)\sum_{i = 1}^N\int_{s = 0}^t\norm{\dot\bx_i(s) - \frac{1}{N}\sum_{i' = 1}^NF^{\mM}_{\lintkernel, ii', s}}_{\R^{d'}}^2\, ds.
\]
\noindent
Recall that $T$ is small, hence the solution $\bX_t$ and $\hat\bX_t$ live in a compact neighborhood of the initial condition, $\bX_0 = \hat\bX_0 \in \mM^N$; i.e. $\bX_t, \hat\bX_t \in \mB_{\mM}(\bX_0, R_2)$ with $R_2 = R_0 + TRS_0$.  From the compactness of (the closure of) this set, and via the embedding $\mI$, we deduce that there exists a constant $C_1(\mM,\mI,T)$ such that
\[
d_{\mM}(\bx_i(t), \hat\bx_i(t)) \le C_1(\mM, \mI,T)\norm{\bx_i(t) - \hat\bx_i(t)}_{\mathbb{R}^{d'}}, \quad \text{for $t \in [0, T]$.}
\]			
Since $\mI$ is isometric, for $\bu \in T_{\bx}\mM$ we have $\norm{d\mI(\bu)}_{\mathbb{R}^{d'}} =\norm{\bu}_{T_{\bx}\mM}$.  Using both the bounds above, we have
\begin{align*}
d_{\mM}(\bX_t, \hat\bX_t)^2 &= \frac{1}{N}\sum_{i = 1}^Nd_{\mM}(\bx_i(t), \hat\bx_i(t))^2 \le \frac{C_1(\mM, \mI,T)^2}{N}\sum_{i = 1}^N\norm{\bx_i(t) - \hat\bx_i(t)}^2_{\R^{d'}}  \\
&\le \frac{2C_1(\mM, \mI,T)^2T\exp(64T^2S_0^2)}{N}\sum_{i = 1}^N\int_{s = 0}^t\norm{\dot\bx_i(s) - \frac{1}{N}\sum_{i' = 1}^NF^{\mM}_{\lintkernel, ii', s}}_{\R^{d'}}^2\, ds.\\
&= \frac{2C_1(\mM, \mI,T)^2T\exp(64T^2S_0^2)}{N}\sum_{i = 1}^N\int_{s = 0}^t\norm{\dot\bx_i(s) - \frac{1}{N}\sum_{i' = 1}^NF^{\mM}_{\lintkernel, ii', s}}_{T_{\bx_i(s)}\mM}^2\, ds \\
&= 2C_1(\mM, \mI,T)^2T\exp(64T^2S_0^2)\int_{s = 0}^t\norm{\dot\bX_s - \mbf{f}^{\text{c}}_{\lintkernel}(\bX_s)}_{T_{\bX_s}\mM^N}^2\, ds
\end{align*}
Letting 
\[
C(\mM,T) := \inf_{\text{all isometric embeddings $\mI$}}C_1(\mM, \mI,T)^2\,,
\]
and choosing an isometric embedding $\mathcal{I}$ which gives a value at most twice the infimum, we obtain
\[
d_{\mM}(\bX_t, \hat\bX_t)^2 \le 4TC(\mM,T)\exp(64T^2S_0^2)\int_{s = 0}^t\norm{\dot\bX_s - \mbf{f}^{\text{c}}_{\lintkernel}(\bX_s)}_{T_{\bX}\mM^N}^2 \, ds.
\]	
Now, take $\intkernel$ to be the true interaction kernel, and $\lintkernel$ the estimator of $\intkernel$ by our learning approach, by Prop. \ref{prop:continuity} we have that
\[
\frac{1}{T}\int_{t = 0}^{T}\norm{\dot\bX_s - \mbf{f}^{\text{c}}_{\lintkernel}(\bX_s)}_{T_{\bX}\mM^N}^2 \, dt \le \norm{\intkernel(\cdot)\cdot - \lintkernel(\cdot)\cdot}_{L^2(\rho_{T, \mM})}^2.
\]
Together with \eqref{Traj Acc Inequality 1}, recalling that $\hat\bX_0 = \bX_0$ and $\bX_0\sim \muX$, we have the desired result that
\[
\E_{\bX_0\sim\muX}\Big[d_{\text{traj}, \mM}(\bX_{[0, T]}, \hat\bX_{[0, T]})^2\Big] \le 4T^2C(\mM,T)\exp(64T^2S_0^2) \E_{\bX_0\sim\muX} \norm{\intkernel(\cdot)\cdot - \lintkernel(\cdot)\cdot}_{L^2(\rho_{T, \mM})}^2.
\]
\end{proof}
\section{Numerical Implementations}\label{sec:numerics}
If the trajectory data, $\{\bx_i^{m}(t_l), \dot\bx_i^{m}(t_l)\}_{i, l, m = 1}^{N, L, M}$, is given by the user, we use the following geometry-based algorithm to find the minimizer of \eqref{error functional SI}.  First, we construct a finite dimensional subspace of the hypothesis space, i.e. $\hypspace_M \subset \hypspace$, where $\hypspace_M$ with dimension $\dim(\mH_M) = n = n(M) \approx \mathcal{O}(M^{\frac{1}{3}})$ is a space of clamped B-spline functions\footnote{Other type of basis functions can be considered, such as piecewise polynomials, Fourier, etc.} supported on $[R_{\min}^{\text{obs}}, R_{\max}^{\text{obs}}]$ with $R_{\min}^{\text{obs}}$/$R_{\max}^{\text{obs}}$ being the minimum/maximum interaction radius computed from the observation data.  Hence the test functions can be expressed as linear combination of the basis functions of $\hypspace_M$, i.e., $\intkernelvar(r) = \sum_{\eta = 1}^n \alpha_{\eta}\basis_{\eta}(r)$ with $\{\basis_{\eta}\}_{\eta = 1}^n$ being a basis for $\hypspace_M$.  Next, we use either a local chart $\mU: \mM \rightarrow \R^{\sdim}$ or a natural embedding $\mI: \mM \rightarrow \R^{d'}$, such that $\bx_i \in \mM$ can be expressed using either local coordinates in $\R^{\sdim}$ (as in the $\mathbb{PD}$ case) or global coordinates in $\R^{d'}$ (as in the $\mathbb{S}^2$ case).  The computation of $\inprod{\cdot, \cdot}_{g(\bx)}$ will be based on the choice of the local chart, or on the embedding, accordingly.  Then, we define a basis matrix, $\Psi^{m} \in (T_{\bX^{m}_{t_1}}\mM^N \times \cdots \times T_{\bX^{m}_{t_L}}\mM^N)^n$, whose columns are
\[
\Psi^{m}(:, \eta) = \Psi^{m}_{\eta} = \frac{1}{\sqrt{N}}\begin{bmatrix} \mbf{f}^{\text{c}}_{\basis_{\eta}}(\bX^{m}_{t_1}) \\ \vdots \\ \mbf{f}^{\text{c}}_{\basis_{\eta}}(\bX^{m}_{t_L}) \end{bmatrix} \in T_{\bX^{m}_{t_1}}\mM^N \times \cdots \times T_{\bX^{m}_{t_L}}\mM^N,
\]
recall
\[
\mbf{f}^{\text{c}}_{\intkernelvar}(\bX_t) = \begin{bmatrix} \vdots \\ \frac{1}{N}\sum_{i' = 1}^{N}\intkernelvar(d_{\mM}(\bx_i(t), \bx_{i'}(t)))\bw(\bx_i(t), \bx_{i'}(t))\\ \vdots \end{bmatrix} \in T_{\bX_t}\mM^N.
\]
Next, we define the derivative vector, $\vec{d}^{m} \in T_{\bX^{m}_{t_1}}\mM^N \times \cdots \times T_{\bX^{m}_{t_L}}\mM^N$, as follows,
\[
\vec{d}^{m} = \frac{1}{\sqrt{N}}\begin{bmatrix} \dot\bX^{m}_{t_1} \\ \vdots \\ \dot\bX^{m}_{t_L} \end{bmatrix}.
\]
Then, we define the learning matrix $A_M \in \R^{n \times n}$ as follows
\[
A_M(\eta, \eta') = \frac{1}{LM}\sum_{m = 1}^m\inprod{\Psi^{m}_{\eta}, \Psi^{m}_{\eta'}}_G, \quad \text{for $\eta, \eta' = 1, \ldots, n$}.
\]
Here the inner product $\inprod{\cdot, \cdot}_G$ on $\Psi^{m}_{\eta} \in T_{\bX^{m}_{t_1}}\mM^N \times \cdots \times T_{\bX^{m}_{t_L}}\mM^N$ is defined as
\[
\inprod{\Psi^{m}_{\eta}, \Psi^{m}_{\eta'}}_{G} = \sum_{l = 1}^L\inprod{\mbf{f}^{\text{c}}_{\psi_{\eta}}(\bX^{m}_{t_l}), \mbf{f}^{\text{c}}_{\psi_{\eta'}}(\bX^{m}_{t_l})}_{g^{\mM^N}(\bX_l^{m})}. 
\]
Next for the learning right hand side, $\vec{b}_M \in \R^{n \times 1}$, we have
\[
\vec{b}_M(\eta) = \frac{1}{LM}\sum_{m = 1}^m\inprod{\vec{d}, \Psi^{m}_{\eta}}_G, \quad \text{for $\eta = 1, \ldots, n$}
\]
Therefore, the minimization of \eqref{error functional SI} over $\hypspace_M$ can be rewritten as
\[
A_M\vec{\alpha} = \vec{b}_M, \quad \vec{\alpha} = \begin{bmatrix}\alpha_1 \\ \vdots \\ \alpha_n \end{bmatrix} \in \R^{n \times 1}.
\]
$A_M$ is symmetric positive definite (guaranteed by the geometric coercivity condition), hence we can solve the linear system to obtain $\hat{\vec{\alpha}}$, and assemble
\[
\lintkernel(r) = \sum_{\eta = 1}^n \hat{\alpha}_{\eta}\basis_{\eta}(r).
\]
In order to produce unique solution of \eqref{eq:mainmodel SI} using $\lintkernel$, we smooth out $\lintkernel$ for the evolution of the dynamics.

If the trajectory data is not given, we will generate it using a Geometric Numerical Integrator, which is a fourth order Backward Differentiation Formula (BDF) of fixed time step size $h$ combined with a projection scheme.  For details see \cite{HLW2006}.  Once a reasonable evolution of the dynamics is obtained, we observe it at $0 = t_1 < \ldots < t_L = T$ to obtain a set of trajectory data, and use it as training data to input to the learning algorithm.   The observation times do not need to be aligned with the numerical integration times, i.e. where numerical solution of $\{\bx_i^{m}(t), \dot\bx_i^{m}(t)\}_{i, m = 1}^{N, M}$ is obtained at $\{t_{l'}\}_{l' = 1}^{L'}$ (except for $t_1 = 0$ and $t_{L'} = T$).  When $t_l$ does not land on one of the numerical integration time points, a continuous extension method is used to interpolate the numerical solution at $t_l$. 
\section{Numerical Experiments}\label{sec:num_results_SI}
We consider three prototypical first order dynamics, Opinion Dynamics (OD), Lennard-Jones Dynamics (LJD), and Predator-Swarm dynamics (PS$1$), on two different manifolds, the $2D$ sphere ($\mathbb{S}^2$ centered at the origin with radius $\frac{5}{\pi}$) and the Poincar\'{e} disk ($ \mathbb{PD} $, unit disk centered at the origin, with the hyperbolic metric).  The two prototypical manifolds are chosen because $\mathbb{S}^2$ and $ \mathbb{PD} $ are model spaces with constant positive and negative curvature, respectively.  We conduct extensive experiments on the aforementioned six different scenarios to demonstrate the performance of our learning approach for dynamics evolving on manifolds.  We report the results in terms of function estimation errors and trajectory estimation errors, and discuss in detail the learning performance of the estimators.

The setup of the numerical experiments is as follows.  We generate a set of $M_{\rho}$ different initial conditions, and evolve the various dynamics of $N$ agents for $t \in [0, T]$ using a Geometric Numerical Integrator with a uniform time step $h$ (for details see section \ref{sec:numerics}); then we observe each dynamics at equidistant times, i.e. $0 = t_1 < \ldots < t_L = T$, to obtain a set of trajectory data, $\{\bx_i^{m}(t_l), \dot\bx_i^{m}(t_l)\}_{i, l, m = 1}^{N, L, M_{\rho}}$, to approximate the ``true'' probability distribution $\rho_{T, \mM}^L$.  From this set of pre-generated trajectory data, we randomly choose a subset of $M \ll M_{\rho}$ of them to be used as training data for the learning simulation.  The hypothesis space where the estimator is learned is generated as a set of $n$ first-degree clamped B-spline basis functions built on a uniform partition of the learning interval $[R^{\text{obs}}_{\min}, R^{\text{obs}}_{\max}]$, with $R^{\text{obs}}_{\min}$ and $R^{\text{obs}}_{\max}$ being the minimum and maximum interaction radii computed from the training and trajectory data, respectively.  Once an estimator, denoted as $\lintkernel$, is obtained, we report the estimation error, $\intkernel(\cdot)\cdot - \lintkernel(\cdot)\cdot$, using
\begin{equation}\label{eq:rel_L2rhoT_error SI}
\norm{\intkernelvar(\cdot)\cdot - \intkernel(\cdot)\cdot}_{\text{Rel.} L^2(\rho_{T,\mM})}\!\! \coloneqq \dfrac{\norm{\intkernelvar(\cdot)\cdot - \intkernel(\cdot)\cdot}_{L^2(\rho_{T,\mM})}}{\norm{\intkernel(\cdot)\cdot}_{L^2(\rho_{T,\mathcal{M}})}};
\end{equation}
and the trajectory estimation error 
\begin{equation}\label{eq:traj_norm SI}
d_{\text{trj}}(\bX^{m}_{[0, T]},\hat\bX^{m}_{[0, T]})^2\! \coloneqq \!\!\!\sup\limits_{t\in[0,T]}\!\!\!{\frac{\sum_i d_{\mM}(\bx_i^{m}(t), \hat\bx_i^{m}(t))^2\!\!\!}{N}}
\end{equation}
between, the true and estimated dynamics, evolved using $\intkernel$ or $\lintkernel$ with the same initial conditions for $t \in [0, T]$ respectively, and observed at the same observation times $0 = t_1 < \ldots < t_L = T$, over both the training initial conditions and another set of $M$ randomly chosen initial conditions.  Moreover, the above learning procedure is run $10$ times independently in order to generate empirical error bars.  We will report the errors in the form of $\text{mean}\pm\text{std}$.  Visual comparisons of $\intkernel$ versus $\lintkernel$, and $\bX$ versus $\hat\bX$ will be shown, and discussions of learning results will be presented in each subsection.

Table \ref{tab:common_params} shows the values of the common parameters shared by all six experiments.
\begin{table}[H]
\centering
\small{
\small{\begin{tabular}{ c | c | c | c | c | c | c}
$M_{\rho}$ & $N$   & $L$  & $M$   & Num. of Learning Trials & $\IR$ on $\mathbb{S}^2$ & $\IR$ on $ \mathbb{PD} $ \\
\hline
$3000$     & $20$ & $500$ & $500$ & $10$                    & $5$            & $\infty$\\
\end{tabular}}  
}
\caption{Values of the parameters shared by the six experiments}
\label{tab:common_params} 
\end{table}
\noindent
Moreover, section \ref{sec:manifold} shows the details on how to calculate the geodesic direction and the Riemannian distance between any two points on $\mathbb{S}^2$ and $ \mathbb{PD} $.  The distribution of the initial conditions, $\muX$, is given as follows: uniform on $\mathcal{M}=\mathbb{S}^2$; whereas uniform on an open ball (centered at origin with radius $r_0)$ for the $ \mathbb{PD} $ case with $r_0$ given as follows.
\[
r_0 = \bigg(2 + \frac{1}{\cosh(5) - 1} - \sqrt{\frac{4}{\cosh(5) - 1} + \frac{1}{(\cosh(5) - 1)^2}}\bigg)/2.
\]
This radius is used so that the maximum distance between any pair of agents on the Poincar\'{e} disk is $5$.  PS$1$ will have different setup for the initial conditions, which will be discussed in section \ref{sec:ps1_results}.
\subsection{Computing Platform}
We use a computing workstation with an AMD Ryzen $9$ $3900$X CPU (which has $12$ computing cores), and available $128$ GB memory, running CentOS $7$, provided and managed by Prisma Analytics, Inc. .  
All $6$ experiments are ran in the MATLAB (R$2020a$) environment with parallel mode enabled and a parallel pool of $12$ workers.  Such parallel mode is used in each experiment for the computation of $\rho_{T, \mM}^L$, learning, and trajectory error estimation.  Detailed report of the running time for the experiments is provided in the result section of each experiment.
\subsection{Opinion Dynamics}
We first choose opinion dynamics, which is used to model simple interactions of opinions \cite{OpinionDynamicsAylin2017, continuousOD} as well as choreography \cite{CLP2014}.  We consider the generalization of this dynamics to take place on two different manifolds: the $2D$ sphere ($\mathbb{S}^2$) and the Poincar\'{e} disk ($\mathbb{PD}$).
We consider the interaction kernel
\[
\intkernel(r) := \left\{
        \begin{array}{ll}
            1,                            & 0                         \le r < \frac{1}{\sqrt{2}} - 0.01 \\
            a_1r^3 + b_1r^2 + c_1r + d_1, & \frac{1}{\sqrt{2}} - 0.01 \le r < \frac{1}{\sqrt{2}} \\
            0.1,                          & \frac{1}{\sqrt{2}}        \le r < 0.99 \\
            a_2r^3 + b_2r^2 + c_2r + d_2, & 0.99                      \le r < 1 \\
            0,   & \text{otherwise}
        \end{array}
    \right.
\]
The parameters, i.e. $(a_1,a_2, b_1,b_2, c_1,c_2, d_1,d_2)$, are chosen so that $\intkernel \in C^1([0, 1])$.  Table \ref{tab:OD_params} shows the values of the parameters needed for the learning simulation.
\begin{table}[H]
\centering
\small{
\small{\begin{tabular}{ c | c | c | c }
$n_{\mathbb{S}^2}$ & $n_{\mathbb{PD}}$ & $T$  & $h$ \\
\hline
$51$      & $69$            & $10$ & $0.01$ \\
\end{tabular}}  
}
\caption{Test Parameters for OD.}
\label{tab:OD_params} 
\end{table}

\textbf{Results for the $\mathbb{S}^2$ case:} Fig. \ref{fig:OD_on_S2_phiE} shows the comparison between $\intkernel$ and its estimator $\lintkernel$ learned from the trajectory data.
\begin{figure}[H]  
\begin{subfigure}{\textwidth}
  \centering
  \includegraphics[width=0.8\textwidth]{OD_on_S2_phiE}
\end{subfigure}
\caption{(OD on $\mathbb{S}^2$) Comparison of $\intkernel$ and $\lintkernel$, with the relative error being $1.894 \cdot 10^{-1} \pm 3.1 \cdot 10^{-4}$ (calculated using \eqref{eq:rel_L2rhoT_error SI}). The true interaction kernel is shown in black solid line, whereas the mean estimated interaction kernel is shown in blue solid line with its confidence interval shown in red dotted lines.  Shown in the background is the comparison of the approximate $\rho_{T, \mM}^L$ versus the empirical $\rho_{T, \mM}^{L, M}$.}
\label{fig:OD_on_S2_phiE}
\end{figure}
As it is shown in Fig. \ref{fig:OD_on_S2_phiE}, the estimator is able to capture the compact support of the $\intkernel$ from the trajectory data.  Fig. \ref{fig:OD_on_S2_trajs} shows the comparison of the trajectory data between the true dynamics and estimated dynamics.
\begin{figure}[H] 
\begin{subfigure}{\textwidth}
  \centering
  \includegraphics[width=0.8\textwidth]{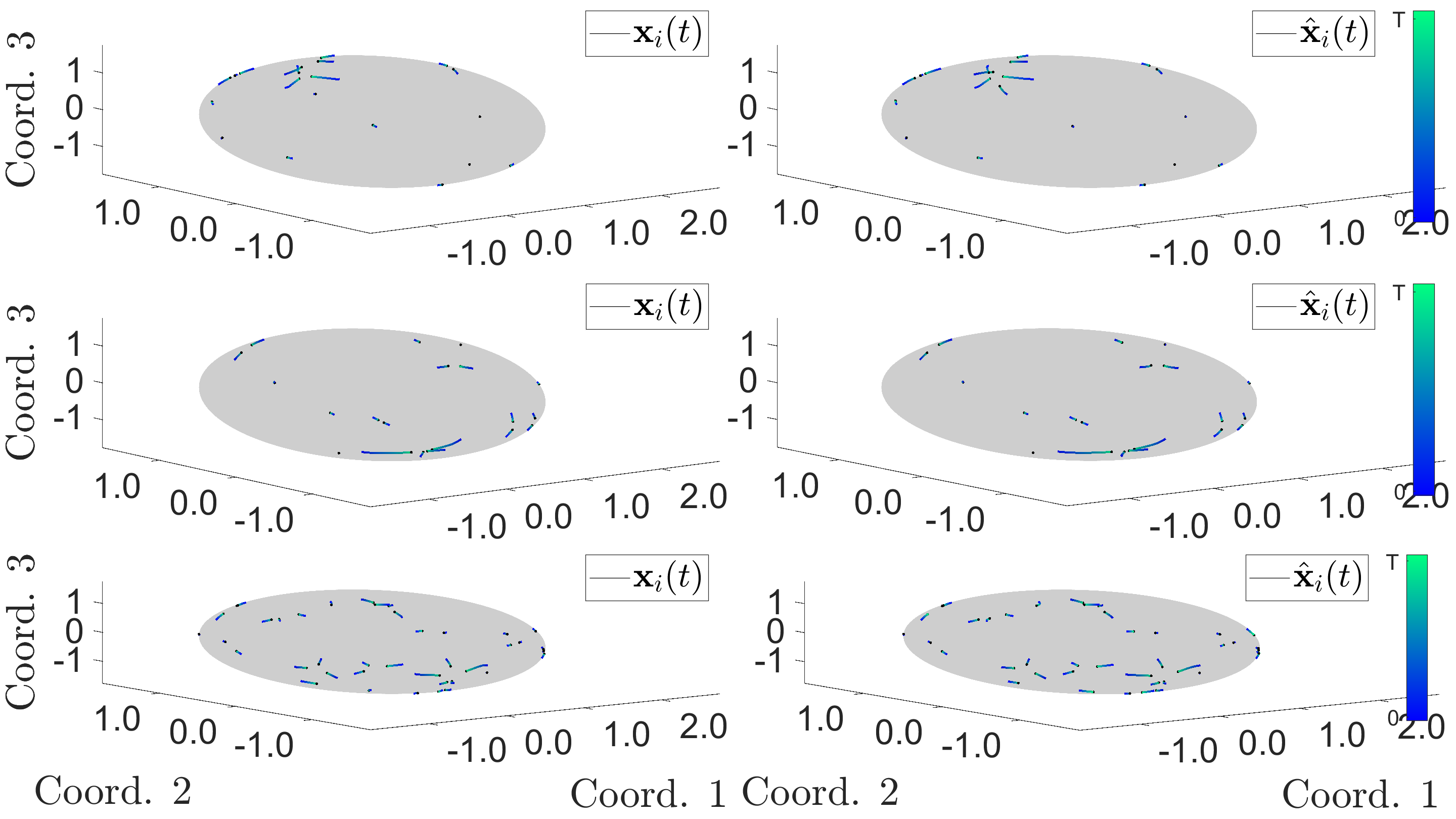} 
\end{subfigure}
\caption{(OD on $\mathbb{S}^2$) Comparison of $\bX$ (generated by $\intkernel$) and $\hat\bX$ (generated by $\lintkernel$), with the errors reported in table \ref{tab:OD_on_S2_traj_err}.  \textbf{Top}: $\bX$ and $\hat\bX$ are generated from an initial condition taken from the training data.  \textbf{Middle}: $\bX$ and $\hat\bX$ are generated from a randomly chosen initial condition.  \textbf{Bottom}: $\bX$ and $\hat\bX$ are generated from a new initial condition with bigger $N = 40$.  The color of the trajectory indicates the flow of time, from deep blue (at $t = 0$) to light green (at $t = T$).}
\label{fig:OD_on_S2_trajs}
\end{figure}
A quantitative comparison of the trajectory estimation errors is shown in Table \ref{tab:OD_on_S2_traj_err}.
\begin{table}[H]
\centering
\small{\begin{tabular}{| c || c |} 
\hline
                                        & $[0, T]$                                \\
\hline
$\text{mean}_{\text{IC}}$: Training ICs & $8.8 \cdot 10^{-2} \pm 1.7 \cdot 10^{-3}$ \\
\hline
$\text{std}_{\text{IC}}$:  Training ICs & $5.9 \cdot 10^{-2} \pm 1.5 \cdot 10^{-3}$ \\
\hline   
\hline         
$\text{mean}_{\text{IC}}$: Random ICs   & $9.0 \cdot 10^{-2} \pm 1.6 \cdot 10^{-3}$ \\
\hline
$\text{std}_{\text{IC}}$:  Random ICs   & $6.0 \cdot 10^{-2} \pm 1.7 \cdot 10^{-3}$ \\
\hline   
\end{tabular}}
\caption{(OD on $\mathbb{S}^2$) trajectory estimation errors: Initial Conditions (ICs) used in the training set (first two rows), new ICs randomly drawn from $\muX$ (second set of two rows).  $\text{mean}_{\text{IC}}$ and $\text{std}_{\text{IC}}$ are the mean and standard deviation of the trajectory errors calculated using \eqref{eq:traj_norm SI}.}
\label{tab:OD_on_S2_traj_err}
\end{table}
We also report the condition number and the smallest eigenvalue of the learning matrix $A$ to indirectly verify the geometric coercivity condition in table \ref{tab:OD_on_S2_coer}.
\begin{table}[H]
\centering
\small{\begin{tabular}{ c || c } 
Condition Number    & $1.8 \cdot 10^{5} \pm 1.4 \cdot 10^{4}$ \\ 
\hline
Smallest Eigenvalue & $1.09 \cdot 10^{-7} \pm 9.0 \cdot 10^{-9}$
\end{tabular}}
\caption{(OD on $\mathbb{S}^2$) Information from the learning matrix $A$.}
\label{tab:OD_on_S2_coer}
\end{table}
It took $1.41 \cdot 10^{4}$ seconds to generate $\rho_{T, \mM}^L$ and $4.76 \cdot 10^{4}$ seconds to run $10$ learning simulations, with $1.44 \cdot 10^{3}$ seconds spent on learning the estimated interactions (on average, it took $1.44 \cdot 10^{2} \pm 3.1$ seconds to run one estimation), and $4.61 \cdot 10^{4}$ seconds spent on computing the trajectory error estimates (on average, it took $4.61 \cdot 10^{3} \pm 20.0$ seconds to run one set of trajectory error estimation).

\textbf{Results for the $ \mathbb{PD} $ case:} Fig. \ref{fig:OD_on_PD_phiE} shows the comparison between the $C^1$ version of $\phi$ and its estimator $\hat\phi$ learned from the trajectory data.
\begin{figure}[H] 
\begin{subfigure}{\textwidth}
  \centering
  \includegraphics[width=0.8\textwidth]{OD_on_PD_phiE}
\end{subfigure}
\caption{(OD on $ \mathbb{PD} $) Comparison of $\intkernel$ and $\lintkernel$ , with the relative error being $2.114 \cdot 10^{-1} \pm 5.0 \cdot 10^{-4}$ (calculated using \eqref{eq:rel_L2rhoT_error SI}). The true interaction kernel is shown in black solid line, whereas the mean estimated interaction kernel is shown in blue solid line with its confidence interval shown in red dotted lines.  Shown in the background is the comparison of the approximate $\rho_{T, \mM}^L$ versus the empirical $\rho_{T, \mM}^{L, M}$.}
\label{fig:OD_on_PD_phiE}
\end{figure}
As it is shown in Fig. \ref{fig:OD_on_PD_phiE}, the estimator is able to capture the compact support of the $\intkernel$ from the trajectory data.  Fig. \ref{fig:OD_on_PD_trajs} shows the comparison of the trajectory data between the true dynamics and estimated dynamics.
\begin{figure}[H] 
\begin{subfigure}{\textwidth}
  \centering
  \includegraphics[width=0.8\textwidth]{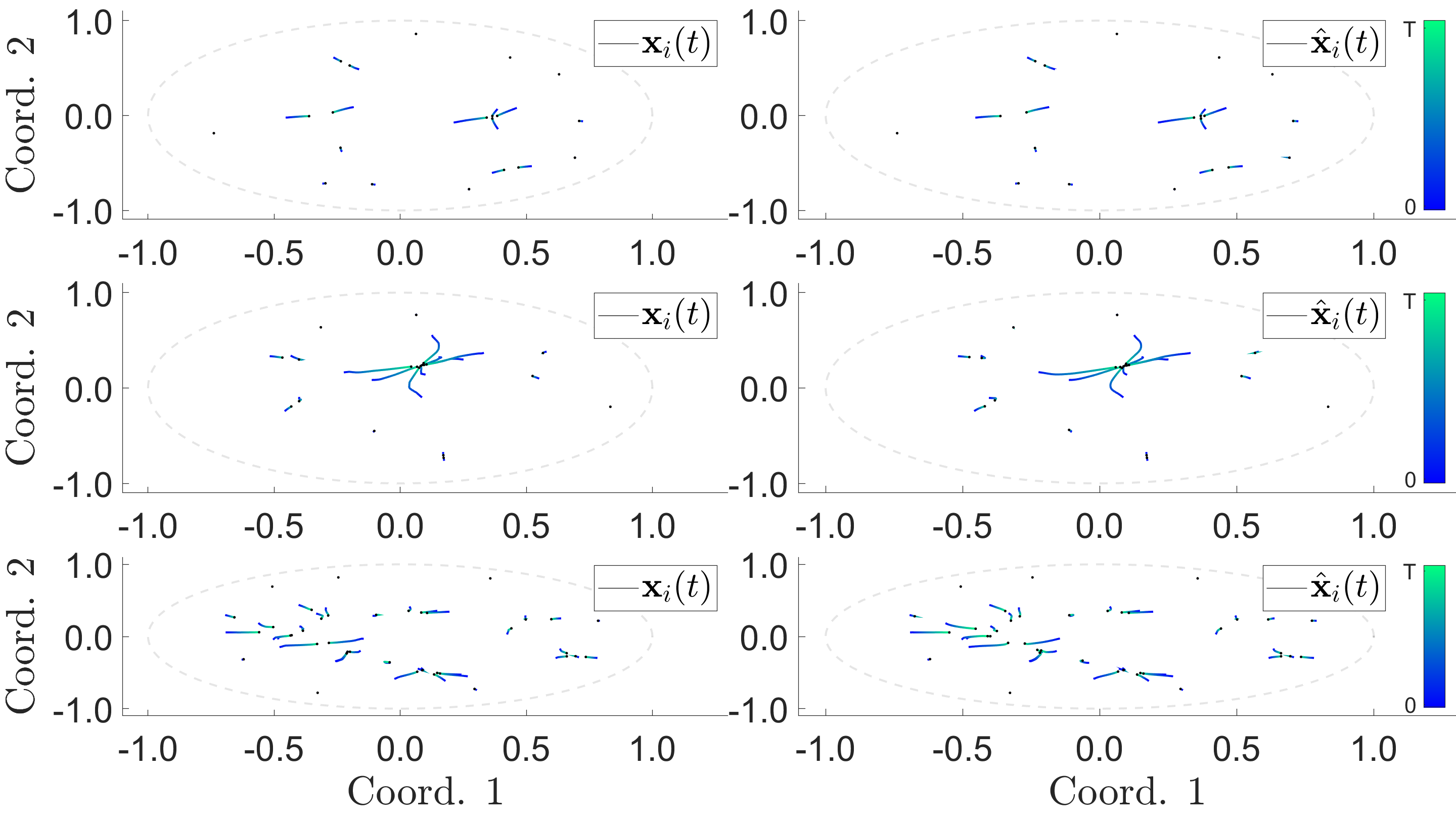} 
\end{subfigure}
\caption{(OD on  $ \mathbb{PD} $ ) Comparison of $\bX$ (generated by $\intkernel$) and $\hat\bX$ (generated by $\lintkernel$), with the errors reported in table \ref{tab:OD_on_PD_traj_err}.  \textbf{Top}: $\bX$ and $\hat\bX$ are generated from an initial condition taken from the training data.  \textbf{Middle}: $\bX$ and $\hat\bX$ are generated from a randomly chosen initial condition.  \textbf{Bottom}: $\bX$ and $\hat\bX$ are generated from a new initial condition with bigger $N = 40$.  The color of the trajectory indicates the flow of time, from deep blue (at $t = 0$) to light green (at $t = T$).}
\label{fig:OD_on_PD_trajs}
\end{figure}
As shown in Fig. \ref{fig:OD_on_PD_phiE}, around $r = \frac{1}{\sqrt{2}}$, the estimator $\lintkernel$ produces values bigger than that from $\intkernel$, leading to stronger influence, hence the merging of cluster happening in the predicted trajectories in the second row of Fig. \ref{fig:OD_on_PD_trajs}. As demonstrated by the average prediction error on trajectories, this is a relatively rare event, occurring for only certain initial conditions.  A quantitative comparison of the trajectory estimation errors is shown in Table \ref{tab:OD_on_PD_traj_err}.
\begin{table}[H]
\centering
\small{\begin{tabular}{| c || c |} 
\hline
                                        & $[0, T]$                                \\
\hline
$\text{mean}_{\text{IC}}$: Training ICs & $2.53 \cdot 10^{-1} \pm 7.2 \cdot 10^{-3}$ \\
\hline
$\text{std}_{\text{IC}}$:  Training ICs & $1.90 \cdot 10^{-1} \pm 6.5 \cdot 10^{-3}$ \\
\hline   
\hline         
$\text{mean}_{\text{IC}}$: Random ICs   & $2.55 \cdot 10^{-1} \pm 9.7 \cdot 10^{-3}$ \\
\hline
$\text{std}_{\text{IC}}$:  Random ICs   & $1.89 \cdot 10^{-1} \pm 5.9 \cdot 10^{-3}$ \\
\hline   
\end{tabular}}
\caption{(OD on  $ \mathbb{PD} $ ) trajectory estimation errors: Initial Conditions (ICs) used in the training set (first two rows), new ICs randomly drawn from $\muX$ (second set of two rows).  $\text{mean}_{\text{IC}}$ and $\text{std}_{\text{IC}}$ are the mean and standard deviation of the trajectory errors calculated using \eqref{eq:traj_norm SI}.}
\label{tab:OD_on_PD_traj_err}
\end{table}
We also report the condition number and the smallest eigenvalue of the learning matrix $A$ to indirectly verify the geometric coercivity condition in table \ref{tab:OD_on_PD_coer}.
\begin{table}[H]
\centering
\small{\begin{tabular}{ c || c } 
Condition Number    & $4.9 \cdot 10^{5} \pm 1.5 \cdot 10^{4}$ \\ 
\hline
Smallest Eigenvalue & $5.3 \cdot 10^{-6} \pm 1.2 \cdot 10^{-7}$
\end{tabular}}
\caption{(OD on  $ \mathbb{PD} $ ) Information from the learning matrix $A$.}
\label{tab:OD_on_PD_coer}
\end{table}
It took $1.33 \cdot 10^{4}$ seconds to generate $\rho_{T, \mM}^L$ and $4.06 \cdot 10^{4}$ seconds to run $10$ learning simulations, with $1.23 \cdot 10^{3}$ seconds spent on learning the estimated interactions (on average, it took $1.23 \cdot 10^{2} \pm 1.1$ seconds to run one estimation), and $3.93 \cdot 10^{4}$ seconds spent on computing the trajectory error estimates (on average, it took $3.93 \cdot 10^{3} \pm 82.1$ seconds to run one set of trajectory error estimation).
\subsection{Lennard-Jones Dynamics}
The second first-order model considered here is induced from a special energy functional, the so-called Lennard-Jones energy potential.  This first-order model, the Lennard-Jones Dynamics (LJD), is a simplified version of the second-order dynamics used in molecular dynamics.  The energy function, $U_{\text{LJ}}$, is given by
\[
U_{\text{LJ}}(r) := 4\varepsilon\Big( \Big(\frac{\sigma}{r}\Big)^{12} - \Big(\frac{\sigma}{r}\Big)^6 \Big)\,.
\]
Here $\varepsilon$ is the depth of the potential well, $\sigma$ is the distance when $U$ is zero, and $r$ is the distance between any pair of agents.  We set $\varepsilon = 10$ and $\sigma = 1$.  The corresponding interaction kernel $\phi$, derived from this potential, is
\[
\intkernel_{\text{LJ}}(r) := \frac{U_{\text{LJ}}'(r)}{r} = 24\frac{\varepsilon}{\sigma^2}\Big(\Big(\frac{\sigma}{r}\Big)^8 - 2\Big(\frac{\sigma}{r}\Big)^{14}\Big)\,.
\]
We shall use a slightly modified version of $\intkernel_{\text{LJ}}$:
\[
\intkernel(r) := \left\{
        \begin{array}{ll}
            \intkernel_{\text{LJ}}(1) - \intkernel_{\text{LJ}}'(1)/4,                                & 0           \le r < \frac{1}{2} \\
            \intkernel_{\text{LJ}}'(1)r^2 - \intkernel_{\text{LJ}}'(1)r + \intkernel_{\text{LJ}}(1), & \frac{1}{2} \le r < 1 \\
            \intkernel_{\text{LJ}}(r),                                                               & 1           \le r < 0.99\IR \\
            a_3r^3 + b_3r^2 + c_3r + d_3,                                                            & 0.99\IR     \le r < \IR \\
            0,                                                                                       & \IR         \le r.
        \end{array}
    \right.
\]
The parameters, $(a_3, b_3, c_3, d_3)$, are chosen so that $\intkernel \in C^1([0, \IR])$ when $\IR < \infty$; otherwise $\intkernel(r) = \intkernel_{\text{LJ}}(r)$ for $r \ge 1$.  Table \ref{tab:LJD_params} shows the values of the parameters needed for the learning simulation.
\begin{table}[H]
\centering
\small{
\small{\begin{tabular}{ c | c | c | c }
$n_{\mathbb{S}^2}$ & $n_{\mathbb{PD}}$ & $T$       & $h$ \\
\hline
$51$      & $69$            & $10^{-3}$ & $10^{-6}$ \\
\end{tabular}}  
}
\caption{Test Parameters for LJD.}
\label{tab:LJD_params} 
\end{table}

\textbf{Results for the $\mathbb{S}^2$ case:} Fig. \ref{fig:LJD_on_S2_phiE} shows the comparison between $\intkernel$ and its estimator $\lintkernel$ learned from the trajectory data.
\begin{figure}[H] 
\begin{subfigure}{\textwidth}
  \centering
  \includegraphics[width=0.8\textwidth]{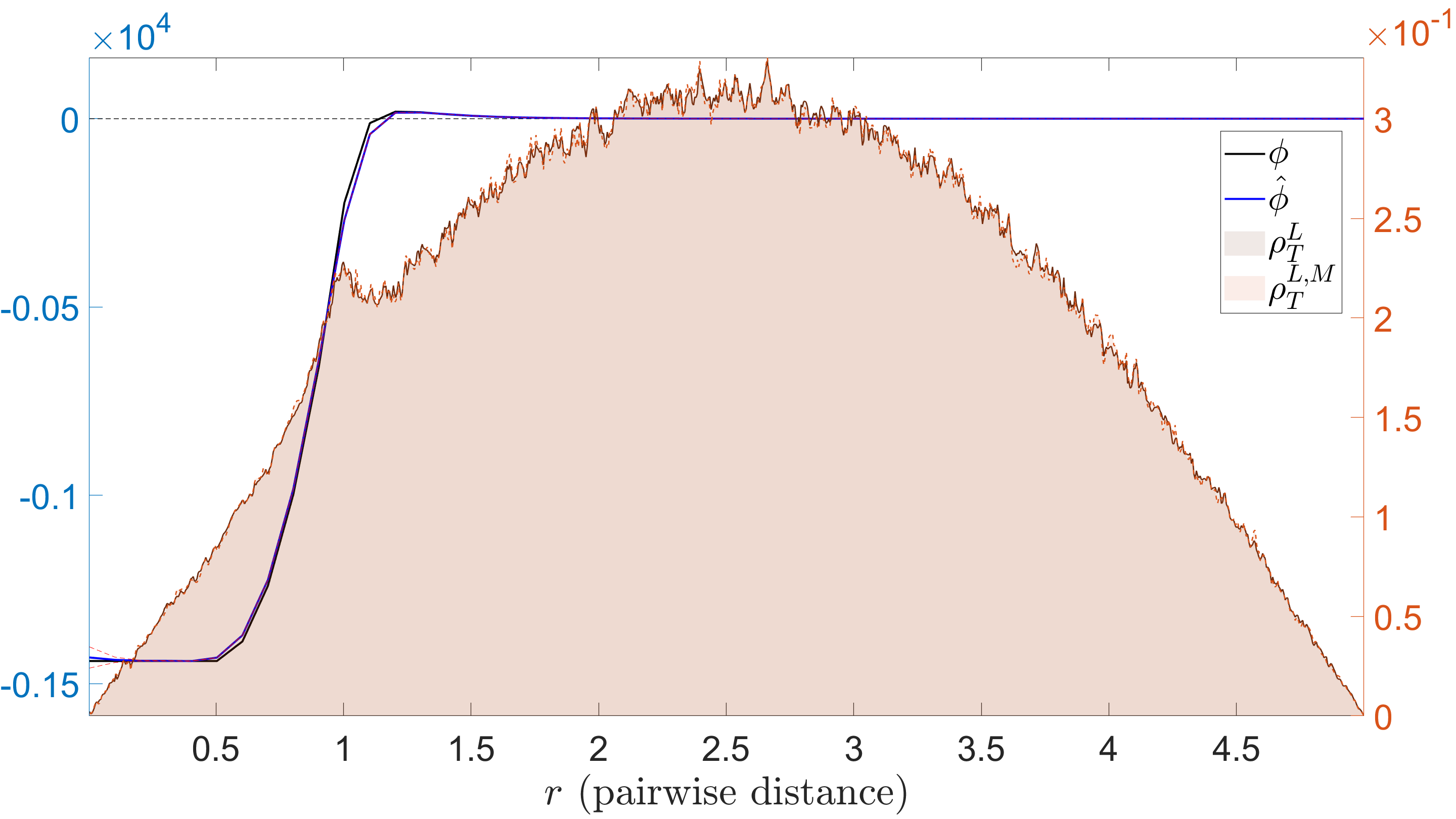}
\end{subfigure}
\caption{(LJD on $\mathbb{S}^2$) Comparison of $\intkernel$ and $\lintkernel$, with the relative error being $3.65 \cdot 10^{-2} \pm 2.7 \cdot 10^{-4}$ (calculated using \eqref{eq:rel_L2rhoT_error SI}). The true interaction kernel is shown in black solid line, whereas the mean estimated interaction kernel is shown in blue solid line with its confidence interval shown in blue dotted lines.  Shown in the background is the comparison of the approximate $\rhoTL$ versus the empirical $\rhoTLM$.}
\label{fig:LJD_on_S2_phiE}
\end{figure}
Fig. \ref{fig:LJD_on_S2_trajs} shows the comparison of the trajectory data between the true dynamics and estimated dynamics.
\begin{figure}[H]  
\begin{subfigure}{\textwidth}
  \centering
  \includegraphics[width=0.8\textwidth]{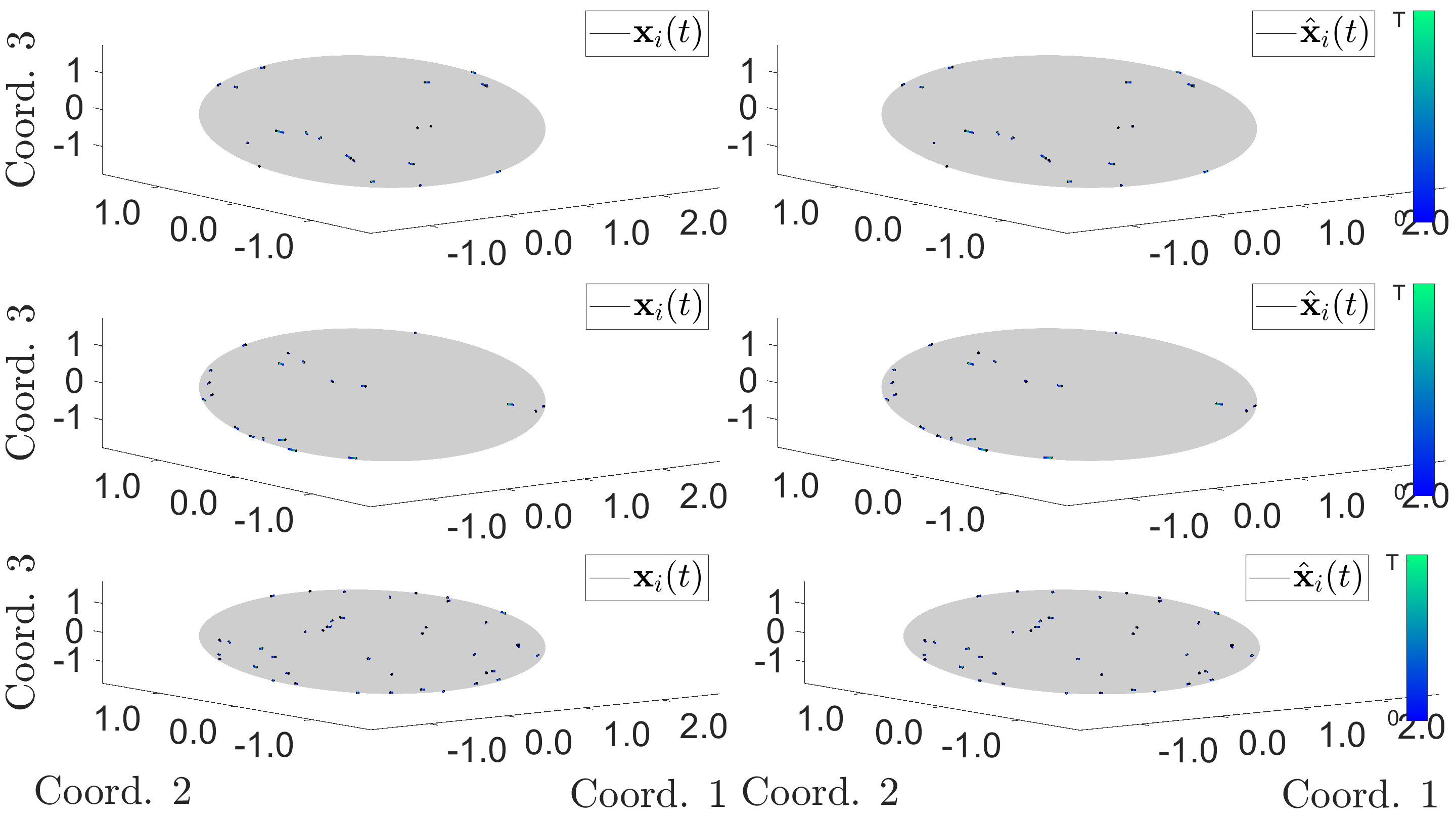} 
\end{subfigure}
\caption{(LJD on $\mathbb{S}^2$) Comparison of $\bX$ (generated by $\intkernel$) and $\hat\bX$ (generated by $\lintkernel$), with the errors reported in table \ref{tab:LJD_on_S2_traj_err}.  \textbf{Top}: $\bX$ and $\hat\bX$ are generated from an initial condition taken from the training data.  \textbf{Middle}: $\bX$ and $\hat\bX$ are generated from a randomly chosen initial condition.  \textbf{Bottom}: $\bX$ and $\hat\bX$ are generated from a new initial condition with bigger $N = 40$.  The color of the trajectory indicates the flow of time, from deep blue (at $t = 0$) to light green (at $t = T$).}
\label{fig:LJD_on_S2_trajs}
\end{figure}
A quantitative comparison of the trajectory estimation errors is shown in Table \ref{tab:LJD_on_S2_traj_err}.
\begin{table}[H]
\centering
\small{\begin{tabular}{| c || c |} 
\hline
                                        & $[0, T]$                                \\
\hline
$\text{mean}_{\text{IC}}$: Training ICs & $2.88 \cdot 10^{-3} \pm 2.5 \cdot 10^{-5}$ \\
\hline
$\text{std}_{\text{IC}}$:  Training ICs & $6.1 \cdot 10^{-4} \pm 1.8 \cdot 10^{-5}$ \\
\hline   
\hline         
$\text{mean}_{\text{IC}}$: Random ICs   & $2.88 \cdot 10^{-3} \pm 3.2 \cdot 10^{-5}$ \\
\hline
$\text{std}_{\text{IC}}$:  Random ICs   & $6.0 \cdot 10^{-4} \pm 1.8 \cdot 10^{-5}$ \\
\hline   
\end{tabular}}
\caption{(LJD on $\mathbb{S}^2$) trajectory estimation errors: Initial Conditions (ICs) used in the training set (first two rows), new ICs randomly drawn from $\muX$ (second set of two rows).  The trajectory estimation errors is calculated using \eqref{eq:rel_L2rhoT_error SI}.}
\label{tab:LJD_on_S2_traj_err}
\end{table}
We also report the condition number and the smallest eigenvalue of the learning matrix $A$ to indirectly verify the geometric coercivity condition in table \ref{tab:LJD_on_S2_coer}.
\begin{table}[H]
\centering
\small{\begin{tabular}{ c || c } 
Condition Number    & $6 \cdot 10^{5} \pm 1.5 \cdot 10^{5}$ \\ 
\hline
Smallest Eigenvalue & $2.4 \cdot 10^{-8} \pm 6.2 \cdot 10^{-9}$
\end{tabular}}
\caption{(LJD on $\mathbb{S}^2$) Information from the learning matrix $A$.}
\label{tab:LJD_on_S2_coer}
\end{table}
It took $2.43 \cdot 10^{4}$ seconds to generate $\rho_{T, \mM}^L$ and $7.14 \cdot 10^{4}$ seconds to run $10$ learning simulations, with $1.72 \cdot 10^{3}$ seconds spent on learning the estimated interactions (on average, it took $1.72 \cdot 10^{2} \pm 2.5$ seconds to run one estimation), and $6.96 \cdot 10^{4}$ seconds spent on computing the trajectory error estimates (on average, it took $6.96 \cdot 10^{3} \pm 35.9$ seconds to run one set of trajectory error estimation).

\textbf{Results for the  $ \mathbb{PD} $  case:} Fig. \ref{fig:LJD_on_PD_phiE} shows the comparison between $\intkernel$ and its estimator $\lintkernel$ learned from the trajectory data.
\begin{figure}[H]  
\begin{subfigure}{\textwidth}
  \centering
  \includegraphics[width=0.8\textwidth]{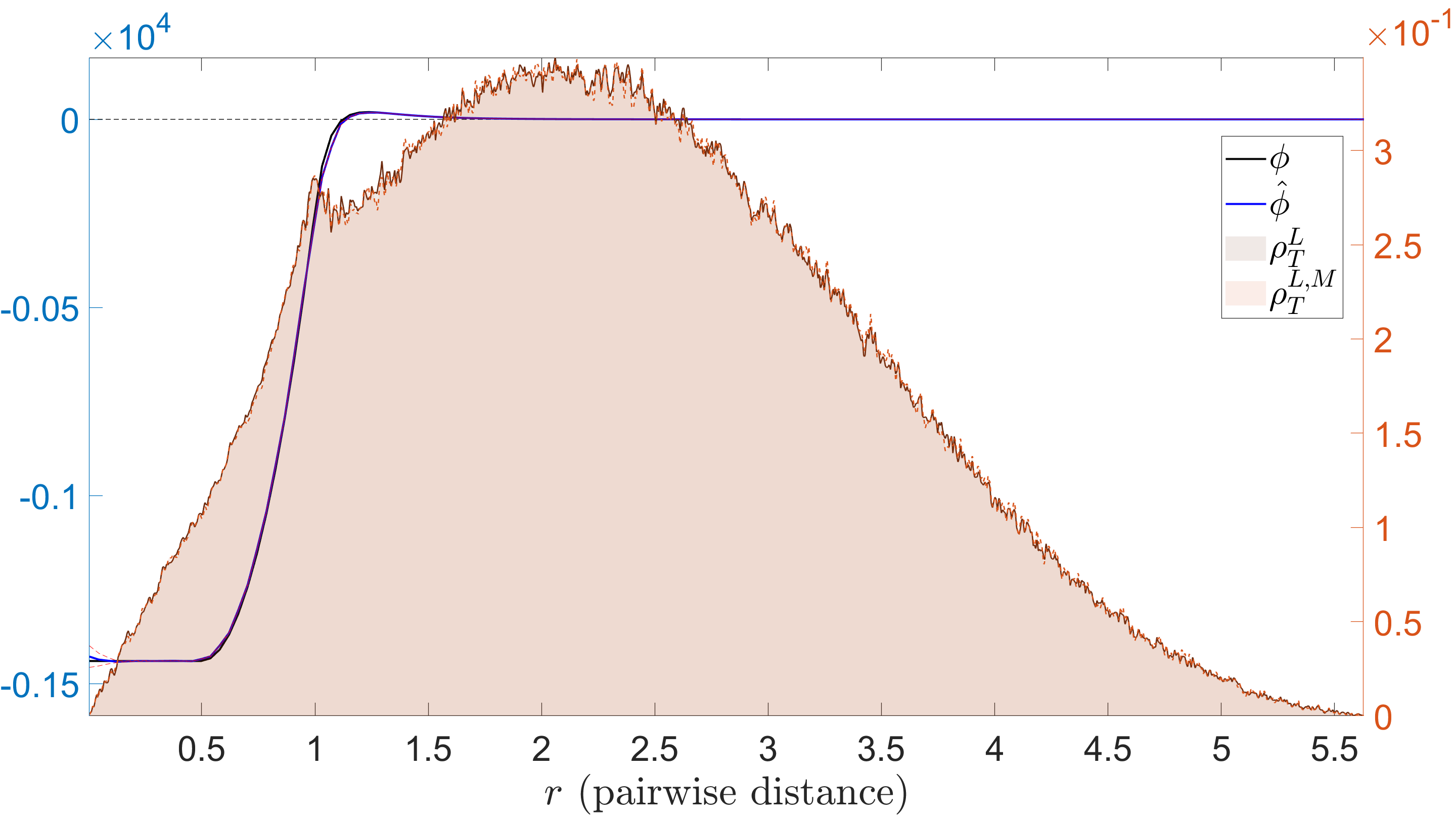}
\end{subfigure}
\caption{(LJD on  $ \mathbb{PD} $ ) Comparison of $\intkernel$ and $\lintkernel$, with the relative error being $2.52 \cdot 10^{-2} \pm 3.6 \cdot 10^{-4}$ (calculated using \eqref{eq:rel_L2rhoT_error SI}). The true interaction kernel is shown in black solid line, whereas the mean estimated interaction kernel is shown in blue solid line with its confidence interval shown in blue dotted lines.  Shown in the background is the comparison of the approximate $\rhoTL$ versus the empirical $\rhoTLM$.}
\label{fig:LJD_on_PD_phiE}
\end{figure}
Fig. \ref{fig:LJD_on_PD_trajs} shows the comparison of the trajectory data between the true dynamics and estimated dynamics.
\begin{figure}[H]  
\begin{subfigure}{\textwidth}
  \centering
  \includegraphics[width=0.8\textwidth]{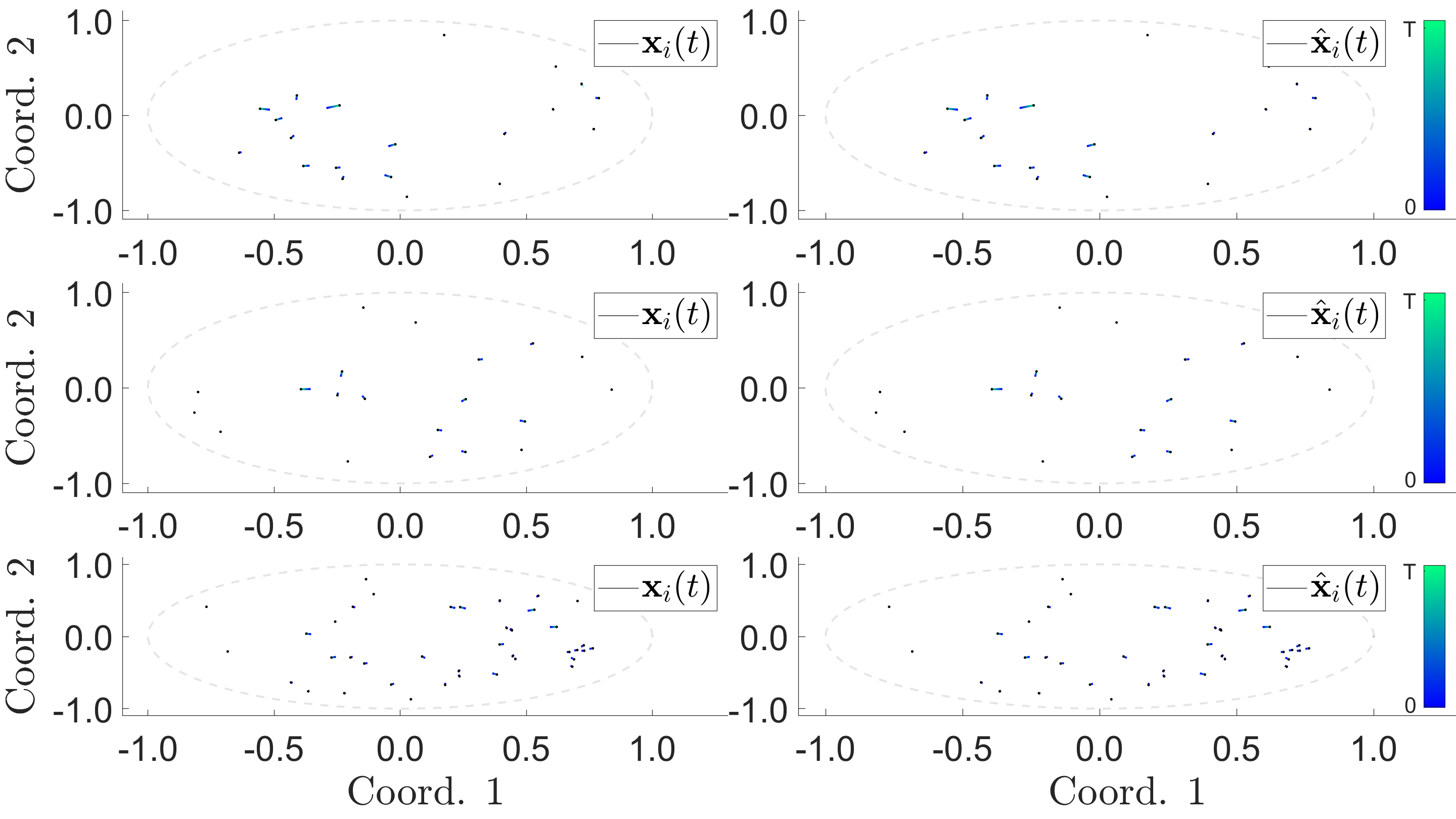} 
\end{subfigure}
\caption{(LJD on  $ \mathbb{PD} $ ) Comparison of $\bX$ (generated by $\intkernel$) and $\hat\bX$ (generated by $\lintkernel$), with the errors reported in table \ref{tab:LJD_on_PD_traj_err}.  \textbf{Top}: $\bX$ and $\hat\bX$ are generated from an initial condition taken from the training data.  \textbf{Middle}: $\bX$ and $\hat\bX$ are generated from a randomly chosen initial condition.  \textbf{Bottom}: $\bX$ and $\hat\bX$ are generated from a new initial condition with bigger $N = 40$.  The color of the trajectory indicates the flow of time, from deep blue (at $t = 0$) to light green (at $t = T$).}
\label{fig:LJD_on_PD_trajs}
\end{figure}
A quantitative comparison of the trajectory estimation errors is shown in Table \ref{tab:LJD_on_PD_traj_err}.
\begin{table}[H]
\centering
\small{\begin{tabular}{| c || c |} 
\hline
                                        & $[0, T]$                                \\
\hline
$\text{mean}_{\text{IC}}$: Training ICs & $2.27 \cdot 10^{-3} \pm 4.0 \cdot 10^{-5}$ \\
\hline
$\text{std}_{\text{IC}}$:  Training ICs & $5.6 \cdot 10^{-4} \pm 1.7 \cdot 10^{-5}$ \\
\hline   
\hline         
$\text{mean}_{\text{IC}}$: Random ICs   & $2.28 \cdot 10^{-3} \pm 3.8 \cdot 10^{-5}$ \\
\hline
$\text{std}_{\text{IC}}$:  Random ICs   & $5.6 \cdot 10^{-4} \pm 1.6 \cdot 10^{-5}$ \\
\hline   
\end{tabular}}
\caption{(LJD on  $ \mathbb{PD} $ ) trajectory estimation errors: Initial Conditions (ICs) used in the training set (first two rows), new ICs randomly drawn from $\muX$ (second set of two rows).  $\text{mean}_{\text{IC}}$ and $\text{std}_{\text{IC}}$ are the mean and standard deviation of the trajectory errors calculated using \eqref{eq:traj_norm SI}.}
\label{tab:LJD_on_PD_traj_err}
\end{table}
We also report the condition number and the smallest eigenvalue of the learning matrix $A$ to indirectly verify the geometric coercivity condition in table \ref{tab:LJD_on_PD_coer}.
\begin{table}[H]
\centering
\small{\begin{tabular}{ c || c } 
Condition Number    & $6 \cdot 10^{6} \pm 1.9 \cdot 10^{6}$ \\ 
\hline
Smallest Eigenvalue & $1.7 \cdot 10^{-8} \pm 6.6 \cdot 10^{-9}$
\end{tabular}}
\caption{(LJD on  $ \mathbb{PD} $ ) Information from the learning matrix $A$.}
\label{tab:LJD_on_PD_coer}
\end{table}
It took $1.51 \cdot 10^{4}$ seconds to generate $\rho_{T, \mM}^L$ and $6.23 \cdot 10^{4}$ seconds to run $10$ learning simulations, with $1.20 \cdot 10^{3}$ seconds spent on learning the estimated interactions (on average, it took $1.20 \cdot 10^{2} \pm 9.4$ seconds to run one estimation), and $6.10 \cdot 10^{4}$ seconds spent on computing the trajectory error estimates (on average, it took $6 \cdot 10^{3} \pm 1.3 \cdot 10^{3}$ seconds to run one set of trajectory error estimation).
\subsection{Predator-Swarm Dynamics}\label{sec:ps1_results}
The third first-order model considered here is a heterogeneous agent system, which is used to model interactions between multiple types of animals \cite{CK2013, Olson_2016} or agents (need ref.).  The learning theory presented in this work is described for homogeneous agent systems, but the theory and the corresponding algorithms extend naturally to heterogeneous agent systems in a manner analogous to \cite{Tang2019, miller2020learning}.  

We consider here a system of a single predator versus a group of preys, namely the Predator-Swarm Dynamics (PS$1$), discussed in \cite{CK2013}.  The preys are in type $1$, and the single predator is in type $2$.  We have multiple interaction kernels, depending on the types of agents in each interacting pair: $\intkernel_{\idxcl\idxcl'}$ defines the influence of agents in type $\idxcl'$ on agents in type $\idxcl$, for $\idxcl, \idxcl' = 1, 2$.  The interaction kernels are given as follows.
\[
\intkernel_{11}(r) := \left\{
        \begin{array}{ll}
            \frac{2}{0.01^3}(r - 0.01) + (1 - \frac{1}{0.01^2}) & 0 < r \le 0.01 \\
            1 - \frac{1}{r^2}                                   & 0.01 < r \le 0.99\IR \\
            a_{1, 1}r^3 + b_{1, 1}r^2 + c_{1, 1}r + d_{1, 1},   & 0.99\IR     \le r < \IR \\
            0,                                                  & \IR         \le r
        \end{array}
    \right.
\]
The parameters, $(a_{1, 1}, b_{1, 1}, c_{1, 1}, d_{1, 1})$, are chosen so that $\intkernel_{11}(r) \in C^1([0, \IR])$ when $\IR < \infty$; otherwise $\intkernel_{11}(r) = 1 - \frac{1}{r^2}$ for $r \ge 0.01$;
\[
\intkernel_{12}(r) := \left\{
        \begin{array}{ll}
            \frac{4}{0.01^3}(r - 0.01) + \frac{-2}{0.01^2})   & 0 < r \le 0.01 \\
            \frac{-2}{r^2}                                    & 0.01 < r \le 0.99\IR \\
            a_{1, 2}r^3 + b_{1, 2}r^2 + c_{1, 2}r + d_{1, 2}, & 0.99\IR     \le r < \IR \\
            0,                                                & \IR         \le r
        \end{array}
    \right.
\]
The parameters, $(a_{1, 2}, b_{1, 2}, c_{1, 2}, d_{1, 2})$, are chosen so that $\intkernel_{12}(r) \in C^1([0, \IR])$ when $\IR < \infty$; otherwise $\intkernel_{12}(r) = \frac{-2}{r^2}$ for $r \ge 0.01$;
\[
\intkernel_{21}(r) := \left\{
        \begin{array}{ll}
            \frac{-10.5}{0.01^4}(r - 0.01) + \frac{3.5}{0.01^3}) & 0 < r \le 0.01 \\
            \frac{3.5}{r^3}                                      & 0.01 < r \le 0.99\IR \\
            a_{2, 1}r^3 + b_{2, 1}r^2 + c_{2, 1}r + d_{2, 1},    & 0.99\IR     \le r < \IR \\
            0,                                                   & \IR         \le r
        \end{array}
    \right.
\]
The parameters, $(a_{2, 1}, b_{2, 1}, c_{2, 1}, d_{2, 1})$, are chosen so that $\intkernel_{21}(r) \in C^1([0, \IR])$ when $\IR < \infty$; otherwise $\intkernel_{21}(r) = \frac{3.5}{r^3}$ for $r \ge 0.01$; then $\intkernel_{22} \equiv 0$, since there is only one predator.  We set $T = 0.5$ and $h = 10^{-4}$ for the two $PS1$ models.

\textbf{Results for the  $\mathbb{S}^2$  case:} In order to produce more interesting interactions, we choose the distribution of the initial condition to be as follows.  The setting will start from $\R^2$ first.  The position of the predator is randomly chosen uniformly within a circular disk of radius $0.1$ centered at the origin of $\R^2$.  The remaining $N - 1$ agents will be prey and chosen uniformly at random within an annulus of radii $0.3$ and $0.8$, centered at the origin.  Then these positions will mapped through a stereographic projection (where the origin of $\R^2$ is the south pole of $\mathbb{S}^2$) back to $\mathbb{S}^2$.  When back on $\mathbb{S}^2$, the position of the predator is moved via parallel transport to a random location on $\mathbb{S}^2$, and the rest of the preys are moved using the same map, so that the relative position between each pair of agents is not changed.

Table \ref{tab:PS1_ns_S2} shows the number of basis functions, namely $n_{\idxcl\idxcl'}$'s, for each estimator $\lintkernel_{\idxcl\idxcl'}$ for $\idxcl, \idxcl' = 1, 2$, and their corresponding degrees, $p_{\idxcl, \idxcl'}$'s, for the Clamped B-spline basis.
\begin{table}[H]
\centering
\small{
\small{\begin{tabular}{ c | c | c | c}
$n_{1, 1}$ & $n_{1, 2}$ & $n_{2, 1}$ & $n_{2, 2}$ \\
\hline
$50$       & $37$       & $37$       & $1$ \\
\hline
$p_{1, 1}$ & $p_{1, 2}$ & $p_{2, 1}$ & $p_{2, 2}$ \\
\hline
$1$        & $1$        & $1$        & $0$ \\
\hline
\end{tabular}}  
}
\caption{(PS$1$ on  $\mathbb{S}^2$ ) Number of basis functions.}
\label{tab:PS1_ns_S2} 
\end{table}
Fig. \ref{fig:PS1_on_PD_phiE} shows the comparison between $\intkernel_{\idxcl\idxcl'}$ and its estimators $\lintkernel_{\idxcl\idxcl'}$ learned from the trajectory data.
\begin{figure}[H]  
\begin{subfigure}{\textwidth}
  \centering
  \includegraphics[width=0.8\textwidth]{PS1_on_S2_phiE}
\end{subfigure}
\caption{(PS$1$ on  $\mathbb{S}^2$ ) Comparison of $\intkernel_{\idxcl\idxcl'}$ and $\lintkernel_{\idxcl, \idxcl'}$, with the relative errors shown in table \ref{tab:PS1_errs_PD}. The true interaction kernels are shown in black solid line, whereas the mean estimated interaction kernel are shown in blue solid line with their corresponding confidence intervals shown in blue dotted lines.  Shown in the background is the comparison of the approximate $\rho_T^{L, \idxcl\idxcl'}$ versus the empirical $\rho_T^{L, M, \idxcl\idxcl'}$. Notice that $\rho_T^{L, 12}$/$\rho_T^{L, M, 12}$ and $\rho_T^{L, 12}$/$\rho_T^{L, M, 21}$ are the same distributions.}
\label{fig:PS1_on_S2_phiE}
\end{figure}
\begin{table}[H]
\centering
\small{
\small{\begin{tabular}{ c | c | c | c}
$\text{Err}_{1, 1}$                        & $\text{Err}_{1, 2}$                       & $\text{Err}_{2, 1}$                       & $\text{Err}_{2, 2}$ \\
\hline
$2.98 \cdot 10^{-1} \pm 5.9 \cdot 10^{-3}$ & $8.4 \cdot 10^{-3} \pm 3.0 \cdot 10^{-4}$ & $2.5 \cdot 10^{-2} \pm 1.6 \cdot 10^{-3}$ & $0$ \\
\end{tabular}}  
}
\caption{(PS$1$ on  $\mathbb{S}^2$ ) Relative estimation errors calculated using \eqref{eq:rel_L2rhoT_error SI}.}
\label{tab:PS1_errs_S2} 
\end{table}
Fig. \ref{fig:PS1_on_S2_trajs} shows the comparison of the trajectory data between the true dynamics and estimated dynamics. 
\begin{figure}[H] 
\begin{subfigure}{\textwidth}
  \centering
  \includegraphics[width=0.8\textwidth]{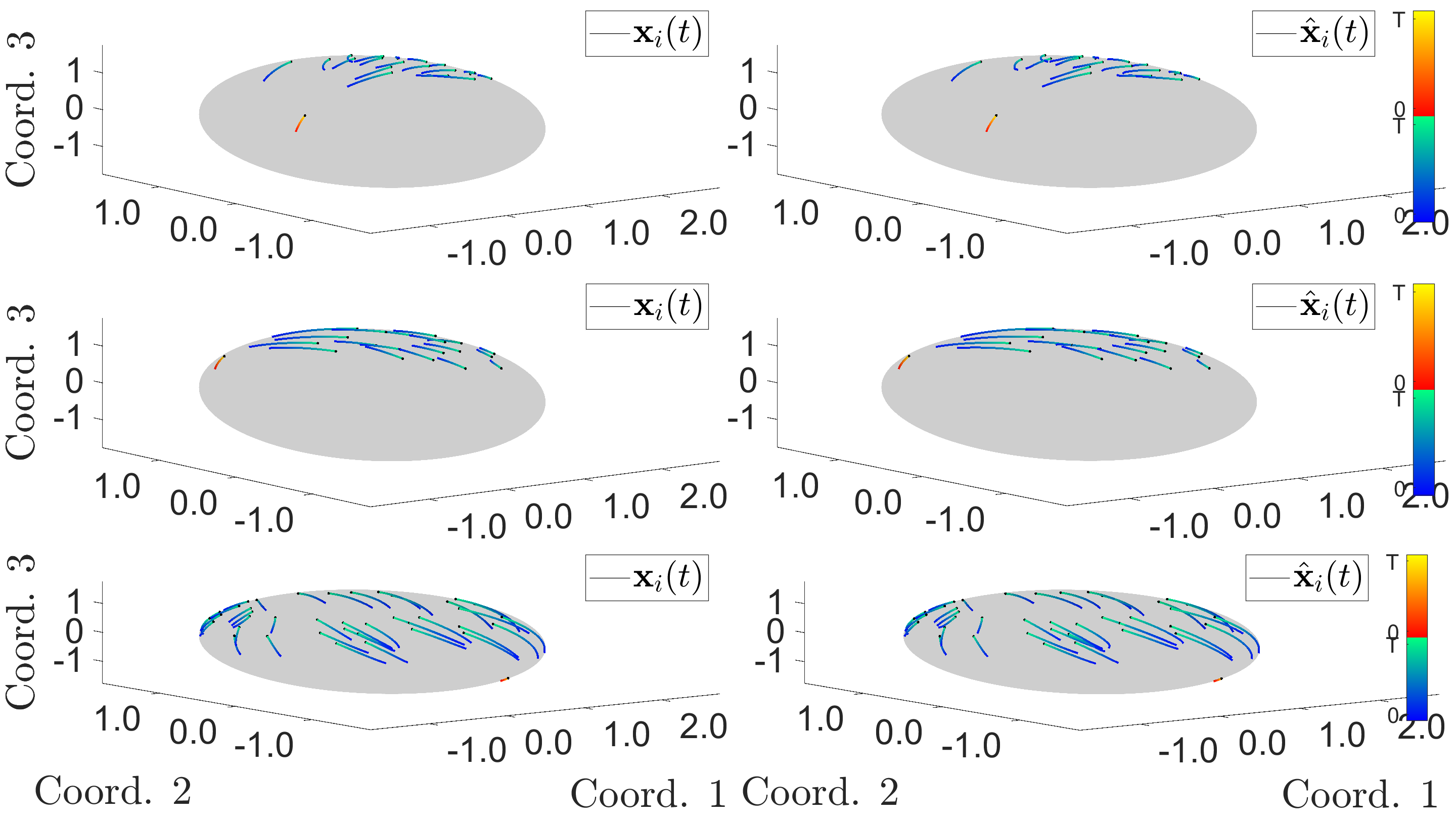} 
\end{subfigure}
\caption{(PS$1$ on  $\mathbb{S}^2$ ) Comparison of $\bX$ (generated by $\intkernel_{\idxcl, \idxcl'}$'s) and $\hat\bX$ (generated by $\lintkernel_{\idxcl, \idxcl'}$'s), with the errors reported in table \ref{tab:PS1_on_S2_traj_err}.  \textbf{Top}: $\bX$ and $\hat\bX$ are generated from an initial condition taken from the training data.  \textbf{Middle}: $\bX$ and $\hat\bX$ are generated from a randomly chosen initial condition.  \textbf{Bottom}: $\bX$ and $\hat\bX$ are generated from a new initial condition with bigger $N = 40$.  The color of the trajectory indicates the flow of time, from deep blue/bright red (at $t = 0$) to light green/light yellow (at $t = T$).  The blue/green combination is assigned to the preys; whereas the red/yellow comb for the predator.}
\label{fig:PS1_on_S2_trajs}
\end{figure}
A quantitative comparison of the trajectory estimation errors is shown in Table \ref{tab:PS1_on_PD_traj_err}.
\begin{table}[H]
\centering
\small{\begin{tabular}{| c || c |} 
\hline
                                        & $[0, T]$                                \\
\hline
$\text{mean}_{\text{IC}}$: Training ICs & $2.36 \cdot 10^{-2} \pm 9.8 \cdot 10^{-4}$ \\
\hline
$\text{std}_{\text{IC}}$:  Training ICs & $1.9 \cdot 10^{-2} \pm 1.5 \cdot 10^{-4}$ \\
\hline   
\hline         
$\text{mean}_{\text{IC}}$: Random ICs   & $2.40 \cdot 10^{-2} \pm 8.1 \cdot 10^{-4}$ \\
\hline
$\text{std}_{\text{IC}}$:  Random ICs   & $2.3 \cdot 10^{-3} \pm 6.1 \cdot 10^{-3}$ \\
\hline   
\end{tabular}}
\caption{(PS$1$ on  $\mathbb{S}^2$ ) trajectory estimation errors: Initial Conditions (ICs) used in the training set (first two rows), new ICs randomly drawn from $\muX$ (second set of two rows).  $\text{mean}_{\text{IC}}$ and $\text{std}_{\text{IC}}$ are the mean and standard deviation of the trajectory errors calculated using \eqref{eq:traj_norm SI}.}
\label{tab:PS1_on_S2_traj_err}
\end{table}
We also report the condition number and the smallest eigenvalue of the learning matrix $A$ to indirectly verify the geometric coercivity condition in table \ref{tab:PS1_on_PD_coer}.
\begin{table}[H]
\centering
\small{\begin{tabular}{ c || c } 
\hline
Condition Number for $A_1$    & $2.2 \cdot 10^{7} \pm 1.8 \cdot 10^{6}$ \\ 
\hline
Smallest Eigenvalue for $A_1$ & $1.28 \cdot 10^{-8} \pm 8.5 \cdot 10^{-10}$ \\
\hline
\hline
Condition Number for $A_2$    & $2.9 \cdot 10^{5} \pm 2.2 \cdot 10^{5}$ \\ 
\hline
Smallest Eigenvalue for $A_2$ & $9 \cdot 10^{-7} \pm 5.7 \cdot 10^{-7}$ \\
\hline
\end{tabular}}
\caption{(PS$1$ on  $\mathbb{S}^2$ ) Information from the learning matrix $A_{\idxcl}$'s.}
\label{tab:PS1_on_S2_coer}
\end{table}
The matrix $A_1$ is used to obtain the estimators, $\lintkernel_{1, 1}$ and $\lintkernel_{1, 2}$; whereas $A_2$ is used to obtain $\lintkernel_{2, 1}$ and $\lintkernel_{2, 2}$.  Since there is one single predator, we set $\lintkernel_{2, 2}$ to zero.  It took $9.77 \cdot 10^{4}$ seconds to generate $\rho_{T, \mM}^L$ and $4.01 \cdot 10^{5}$ seconds to run $10$ learning simulations, with $1.66 \cdot 10^{3}$ seconds spent on learning the estimated interactions (on average, it took $1.66 \cdot 10^{2} \pm 4.6$ seconds to run one estimation), and $4.05 \cdot 10^{5}$ seconds spent on computing the trajectory error estimates (on average, it took $4.0 \cdot 10^{4} \pm 7.1 \cdot 10^{3}$ seconds to run one set of trajectory error estimation).

\textbf{Results for the  $ \mathbb{PD} $  case:} In order to produce more interesting interactions, we choose the distribution of the initial condition to be as follows: the predator is randomly placed in a circle centered at the origin with radius $r_1$, given as follows
\[
r_0 = \bigg(2 + \frac{1}{\cosh(0.5) - 1} - \sqrt{\frac{4}{\cosh(0.5) - 1} + \frac{1}{(\cosh(0.5) - 1)^2}}\bigg)/2,
\]
so that the agents are at most $0.5$ distance away from each other; then the group of preys (Swarm) will be randomly and uniformly placed on an annulus centered at the origin with radii,$(R_1, r_1)$, given as follows
\[
r_1 = \bigg(2 + \frac{1}{\cosh(1) - 1} - \sqrt{\frac{4}{\cosh(1) - 1} + \frac{1}{(\cosh(1) - 1)^2}}\bigg)/2
\]
and
\[
R_1 = \bigg(2 + \frac{1}{\cosh(2) - 1} - \sqrt{\frac{4}{\cosh(2) - 1} + \frac{1}{(\cosh(2) - 1)^2}}\bigg)/2;
\]
so that the group of preys are surrounding the single predator.  Table \ref{tab:PS1_ns_PD} shows the number of basis functions, namely $n_{\idxcl\idxcl'}$'s, for each estimator $\lintkernel_{\idxcl\idxcl'}$ for $\idxcl, \idxcl' = 1, 2$, and their corresponding degrees, $p_{\idxcl, \idxcl'}$'s, for the Clamped B-spline basis.
\begin{table}[H]
\centering
\small{
\small{\begin{tabular}{ c | c | c | c}
$n_{1, 1}$ & $n_{1, 2}$ & $n_{2, 1}$ & $n_{2, 2}$ \\
\hline
$68$       & $43$       & $43$       & $1$ \\
\hline
$p_{1, 1}$ & $p_{1, 2}$ & $p_{2, 1}$ & $p_{2, 2}$ \\
\hline
$1$        & $1$        & $1$        & $0$ \\
\hline
\end{tabular}}  
}
\caption{(PS$1$ on  $ \mathbb{PD} $ ) Number of basis functions.}
\label{tab:PS1_ns_PD} 
\end{table}
Fig. \ref{fig:PS1_on_PD_phiE} shows the comparison between $\intkernel_{\idxcl\idxcl'}$ and its estimators $\lintkernel_{\idxcl\idxcl'}$ learned from the trajectory data.
\begin{figure}[H]  
\begin{subfigure}{\textwidth}
  \centering
  \includegraphics[width=0.8\textwidth]{PS1_on_PD_phiE}
\end{subfigure}
\caption{(PS$1$ on  $ \mathbb{PD} $ ) Comparison of $\intkernel_{\idxcl\idxcl'}$ and $\lintkernel_{\idxcl, \idxcl'}$, with the relative errors shown in table \ref{tab:PS1_errs_PD}. The true interaction kernels are shown in black solid line, whereas the mean estimated interaction kernel are shown in blue solid line with their corresponding confidence intervals shown in blue dotted lines.  Shown in the background is the comparison of the approximate $\rho_T^{L, \idxcl\idxcl'}$ versus the empirical $\rho_T^{L, M, \idxcl\idxcl'}$. Notice that $\rho_T^{L, 12}$/$\rho_T^{L, M, 12}$ and $\rho_T^{L, 12}$/$\rho_T^{L, M, 21}$ are the same distributions.}
\label{fig:PS1_on_PD_phiE}
\end{figure}
 \begin{table}[H]
\centering
\small{
\small{\begin{tabular}{ c | c | c | c}
$\text{Err}_{1, 1}$                       & $\text{Err}_{1, 2}$                       & $\text{Err}_{2, 1}$                       & $\text{Err}_{2, 2}$ \\
\hline
$9.0 \cdot 10^{-2} \pm 2.6 \cdot 10^{-3}$ & $1.34 \cdot 10^{-3} \pm 8.8 \cdot 10^{-5}$ & $3.6 \cdot 10^{-3} \pm 2.4 \cdot 10^{-4}$ & $0$ \\
\end{tabular}}  
}
\caption{(PS$1$ on  $ \mathbb{PD} $ ) Relative estimation errors calculated using \eqref{eq:rel_L2rhoT_error SI}.}
\label{tab:PS1_errs_PD} 
\end{table}
Fig. \ref{fig:PS1_on_PD_trajs} shows the comparison of the trajectory data between the true dynamics and estimated dynamics.
\begin{figure}[H]  
\begin{subfigure}{\textwidth}
  \centering
  \includegraphics[width=0.8\textwidth]{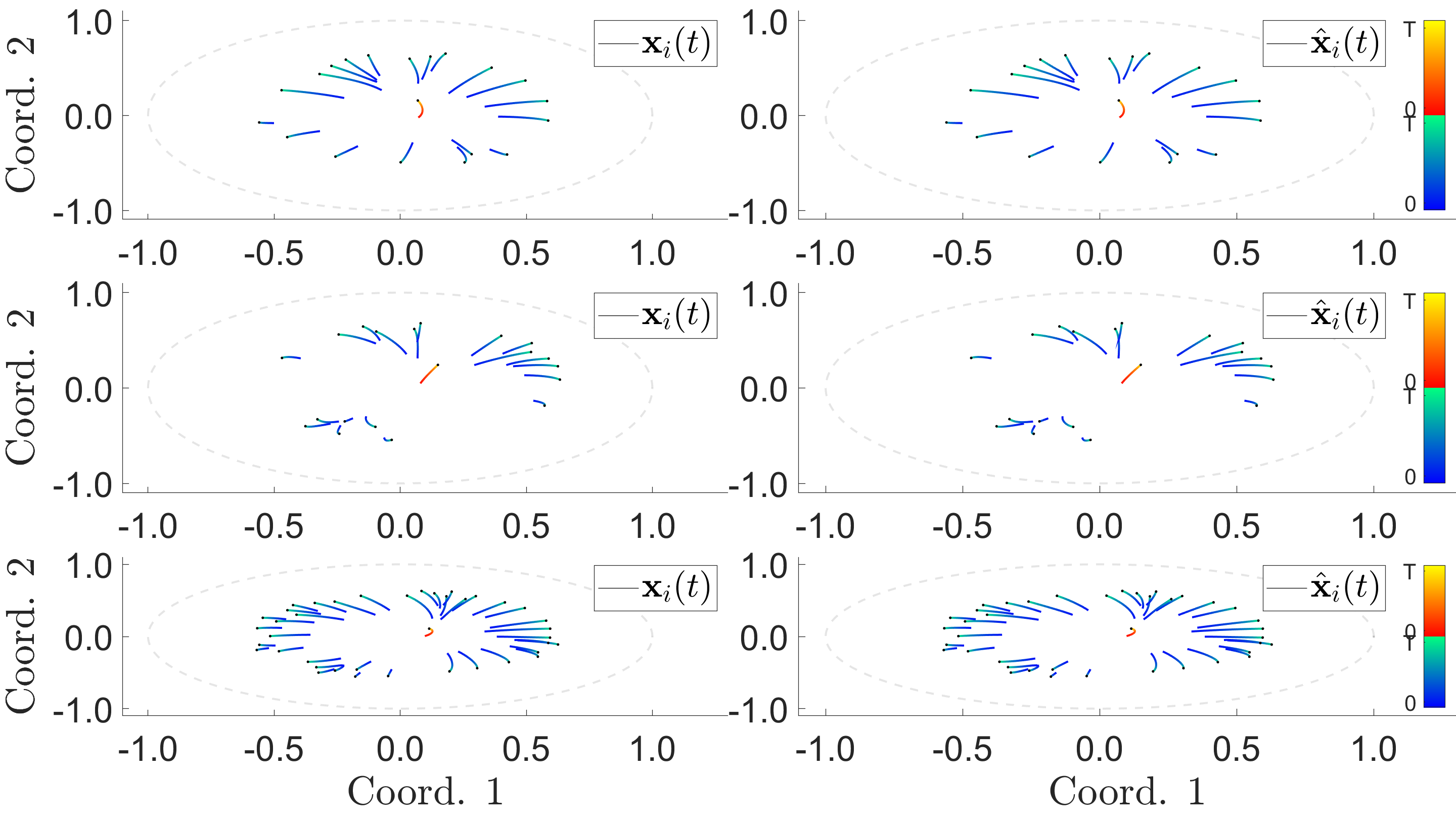} 
\end{subfigure}
\caption{(PS$1$ on  $ \mathbb{PD} $ ) Comparison of $\bX$ (generated by $\intkernel_{\idxcl, \idxcl'}$'s) and $\hat\bX$ (generated by $\lintkernel_{\idxcl, \idxcl'}$'s), with the errors reported in table \ref{tab:PS1_on_PD_traj_err}.  \textbf{Top}: $\bX$ and $\hat\bX$ are generated from an initial condition taken from the training data.  \textbf{Middle}: $\bX$ and $\hat\bX$ are generated from a randomly chosen initial condition.  \textbf{Bottom}: $\bX$ and $\hat\bX$ are generated from a new initial condition with bigger $N = 40$.  The color of the trajectory indicates the flow of time, from deep blue/bright red (at $t = 0$) to light green/light yellow (at $t = T$).  The blue/green combination is assigned to the preys; whereas the red/yellow comb for the predator.}
\label{fig:PS1_on_PD_trajs}
\end{figure}
A quantitative comparison of the trajectory estimation errors is shown in Table \ref{tab:PS1_on_PD_traj_err}.
\begin{table}[H]
\centering
\small{\begin{tabular}{| c || c |} 
\hline
                                        & $[0, T]$                                \\
\hline
$\text{mean}_{\text{IC}}$: Training ICs & $4.8 \cdot 10^{-3} \pm 1.2 \cdot 10^{-4}$ \\
\hline
$\text{std}_{\text{IC}}$:  Training ICs & $2.3 \cdot 10^{-3} \pm 3.0 \cdot 10^{-4}$ \\
\hline   
\hline         
$\text{mean}_{\text{IC}}$: Random ICs   & $4.8 \cdot 10^{-3} \pm 1.2 \cdot 10^{-4}$ \\
\hline
$\text{std}_{\text{IC}}$:  Random ICs   & $2.5 \cdot 10^{-3} \pm 3.9 \cdot 10^{-3}$ \\
\hline   
\end{tabular}}
\caption{(PS$1$ on  $ \mathbb{PD} $ ) trajectory estimation errors: Initial Conditions (ICs) used in the training set (first two rows), new ICs randomly drawn from $\muX$ (second set of two rows).  $\text{mean}_{\text{IC}}$ and $\text{std}_{\text{IC}}$ are the mean and standard deviation of the trajectory errors calculated using \eqref{eq:traj_norm SI}.}
\label{tab:PS1_on_PD_traj_err}
\end{table}
We also report the condition number and the smallest eigenvalue of the learning matrix $A$ to indirectly verify the geometric coercivity condition in table \ref{tab:PS1_on_PD_coer}.
\begin{table}[H]
\centering
\small{\begin{tabular}{ c || c } 
\hline
Condition Number for $A_1$    & $2.3 \cdot 10^{9} \pm 4.7 \cdot 10^{8}$ \\ 
\hline
Smallest Eigenvalue for $A_1$ & $7 \cdot 10^{-11} \pm 1.7 \cdot 10^{-11}$ \\
\hline
\hline
Condition Number for $A_2$    & $5 \cdot 10^{5} \pm 3.1 \cdot 10^{5}$ \\ 
\hline
Smallest Eigenvalue for $A_2$ & $4 \cdot 10^{-8} \pm 2.9 \cdot 10^{-8}$ \\
\hline
\end{tabular}}
\caption{(PS$1$ on  $ \mathbb{PD} $ ) Information from the learning matrix $A_{\idxcl}$'s.}
\label{tab:PS1_on_PD_coer}
\end{table}
The matrix $A_1$ is used to obtain the estimators, $\lintkernel_{1, 1}$ and $\lintkernel_{1, 2}$; whereas $A_2$ is used to obtain $\lintkernel_{2, 1}$ and $\lintkernel_{2, 2}$.  Since there is one single predator, we set $\lintkernel_{2, 2}$ to zero.  It took $7.37 \cdot 10^{4}$ seconds to generate $\rho_{T, \mM}^L$ and $2.49 \cdot 10^{5}$ seconds to run $10$ learning simulations, with $1.25 \cdot 10^{3}$ seconds spent on learning the estimated interactions (on average, it took $1.25 \cdot 10^{2} \pm 1.5$ seconds to run one estimation), and $2.48 \cdot 10^{5}$ seconds spent on computing the trajectory error estimates (on average, it took $2.48 \cdot 10^{4} \pm 2.3 \cdot 10^{2}$ seconds to run one set of trajectory error estimation).
\bibliography{ref}
\bibliographystyle{siam}
\end{document}